\journal{Journal of Logic and Computation}
\newtheorem{definition}{Definition}
\newtheorem{example}{Example}
\newtheorem{proposition}{Proposition}
\newproof{proof}{Proof}
\begin{document}

\begin{frontmatter}

\title{Formulating Semantics of Probabilistic Argumentation by Characterizing Subgraphs: Theory and Empirical Results\tnoteref{label1}}
%\tnotetext[label1]{The basic idea of this paper has been introduced in \cite{Liao:LORI}.}

\author[cslc,phl,lux]{Beishui Liao}
\ead{baiseliao@zju.edu.cn}

\author[cslc,phl]{Kang Xu%\fnref{label2}
}
\ead{xukanguuu@163.com}

\author[cslc,phl]{Huaxin Huang%\fnref{label2}
}
\ead{rw211@zju.edu.cn}

\address[cslc]{Center for the Study of Language and Cognition, Zhejiang University, China}
\address[phl]{Institute of Logic and Cognition, Zhejiang University, China}
\address[lux]{University of Luxembourg, Luxembourg}

\begin{abstract}
The existing approaches to formulate the semantics of probabilistic argumentation are based on the notion of possible world. Given a probabilistic argument graph (PrAG) with $n$ nodes, up to $2^n$ subgraphs are blindly constructed and their extensions under a given semantics are computed. Then, the probability of a set of arguments $E$ being an extension  under a given semantics $\sigma$ (denoted as $p(E^\sigma)$) is equal to the sum of the probabilities of all subgraphs each of which has the extension $E$. Since many irrelevant subgraphs are constructed, and in many cases, computing  extensions of  subgraphs is computationally intractable, these approaches are fundamentally inefficient or infeasible. In existing literature, while approximate approaches based on {the Monte-Carlo simulation technique} have been proposed to estimate the probability of extensions, how to improve the efficiency of computation {without using the simulation technique} is still an open problem. 
In this paper, we address this problem from the following two perspectives. 
First, conceptually, we define specific properties to characterize the subgraphs of a PrAG with respect to a given extension, such that the probability of a set of arguments $E$ being an extension can be defined in terms of these properties, without (or with less) construction of subgraphs.
Second, computationally, we take preferred semantics as an example, and develop algorithms to evaluate the efficiency of our approach. The results show that  our approach not only dramatically decreases the time for computing $p(E^\sigma)$, but also has an attractive property, which is contrary to that of existing approaches:  the denser the edges of a PrAG are or the bigger the size of a given extension $E$ is, the more efficient our approach computes $p(E^\sigma)$. Meanwhile, it is shown that under complete and preferred semantics, the problems of determining $p(E^\sigma)$ are fixed-parameter tractable. 
\end{abstract}

\begin{keyword}
Probabilistic Argumentation \sep  Computational Complexity \sep Computational Efficiency \sep Characterized Subgraphs \sep Fixed-Parameter Tractability

\end{keyword}

\end{frontmatter}

\section{Introduction}
In the past two decades, argumentation has been a very active research area in the field of knowledge representation and reasoning, as a nonmonotonic formalism to handle inconsistent and incomplete information by means of constructing, comparing and evaluating arguments. In 1995, Dung proposed a notion of abstract argumentation framework  \cite{Dung:AIJ}, which can be viewed as a directed graph (called \textit{argument graph}, or defeat graph) $G = (A, R)$, in which $A$ is a set of arguments and $R\subseteq A\times A$ is a set of attacks. % (also called nodes and edges of the argument graph respectively). 
Given an argument graph, specific evaluation criteria are defined to determine which arguments can be regarded as justified or acceptable. A set of arguments acceptable together is often called an extension, and the evaluation criteria or sets of extensions of an argument graph {are} called argumentation semantics.
%According to \cite{Dung:AIJ}, extension-based semantics is a formal way to answer this question. 
%Here, an extension represents 
%Under a certain semantics which is defined as a set of evaluation criteria \cite{Baroni:AIJ2007},  arguments that are considered to be acceptable (i.e. able to survive the conflict) together are extensions. 
Dung's abstract argumentation framework and argumentation semantics lay a concrete foundation for the development of various argument systems. 

However, in classical argumentation theory, the uncertainty of arguments and/or attacks is not considered. So, it could be regarded as a purely qualitative formalism. But, in the real world, arguments and/or attacks are often uncertain. So, in recent years, the importance of combining argumentation and uncertainty has been well recognized, and probability-based argumentation is gaining momentum \cite{Dung:comma,Rienstra:AT,Li:TAFA,Dunne:AIJ2011,Hunter:IJAR2013}. In a \textit{probabilistic argument graph} (or  PrAG in brief), each argument is assigned with a probability, denoting the likelihood of the argument appearing in the graph\footnote{A probabilistic argument graph can be defined by assigning probabilities to arguments \cite{Dung:comma,Rienstra:AT,Hunter:comma}, or attacks \cite{Hunter:IJAR2013}, or both arguments and attacks \cite{Li:TAFA}. For simplicity, in this paper we only consider the probabilistic argument graph in which only arguments are associated with probabilities.}. 

Similar to classical argumentation theory, given a PrAG,  a basic problem is to define the status of arguments. The existing approaches are based on {the notion of possible worlds }\cite{Dung:comma,Rienstra:AT,Li:TAFA,Hunter:comma}. Given a PrAG with $n$ nodes, up to $2^n$ subgraphs are blindly constructed. Each subgraph corresponds to a possible world where some arguments appear while other arguments do not appear. The extensions of each subgraph are computed according to classical argumentation semantics.  Then, the probability of a set of arguments $E$ being an extension  under a given semantics $\sigma$ (denoted as $p(E^\sigma)$) is equal to the sum of the probabilities of all subgraphs each of which has the extension $E$. Since many irrelevant subgraphs are constructed, and in many cases, computing  extensions of  subgraphs is computationally intractable, these approaches are fundamentally inefficient or infeasible. In existing literature, while approximate approaches based on {the Monte-Carlo simulation technique} have been proposed to estimate the probability of extensions  \cite{Li:TAFA, Fazzinga:sum13}, how to improve the efficiency of computation without using simulation technique is still an open problem. 

Since the complexity of computing $p(E^\sigma)$ by the existing approaches is mainly caused by blindly constructing subgraphs and computing extensions of each subgraph, an intuitive question arises:

\begin{description}
\item[Intuitive question] Is it possible to compute $p(E^\sigma)$ without (or with less) construction and computation of  subgraphs?
\end{description}

This question has been partially answered by Fazzinga et al  \cite{Fazzinga:ijcai13}. When analyzing the complexity of probabilistic abstract argumentation, they provided a lemma to prove that under admissible and stable semantics, the problem of computing $p(E^\sigma)$ is tractable. In this lemma, $p(E^\sigma)$ is determined by evaluating an expression which only involves the probabilities of the arguments and defeats (attacks) of a probabilistic argument graph\footnote{In \cite{Fazzinga:ijcai13}, probabilities are assinged to both arguments and attacks.}. So, in these cases, no subgraphs are constructed and computed. However, under other semantics (including complete, grounded, preferred and ideal), they only stated that the problem of computing $p(E^\sigma)$  is $FP^{\sharp P}$-complete, without further work on how the above idea can be exploited to improve the efficiency of computation under these semantics.  

Motivated by the intuitive question and the state of the art of computation of probabilistic argumentation, the research problems of the present paper are as follows.

\begin{description}
\item[Research problem 1] Under various argumentation semantics (including not only admissible and stable, but also complete, grounded and preferred, etc.), how to define properties to characterize the subgraphs of a PrAG with respect to an extension $E$, such that $p(E^\sigma)$ can be computed by using these properties, rather than by blindly constructing and computing all subgraphs of the PrAG?
\item[Research problem 2] How to evaluate the efficiency of the new approach?
\end{description}

With these two research problems in mind, the rest of this paper is organized as follows.  In Section 2, some notions of abstract argumentation and probabilistic abstract argumentation are reviewed to make this paper self-contained. In Sections 3 and 4, to address the first research problem, we define properties to characterize subgraphs (with respect to an extension) under different semantics (admissible, complete, grounded, preferred, and stable), and specify how the probability of a conflict-free set $E$ being an extension can be computed by using these properties. In Section 5,  to address the second research problem, algorithms are developed to evaluate the performance of the new approach (with a comparison to {an existing possible worlds based approach}). In Section 6, some computational properties of the new approach are briefly discussed. In Section 7, some existing work closely related to this paper is introduced and discussed. Finally, in Section 8, we conclude the paper and point out some future work.

This paper is a substantial extension of the paper introduced in \cite{Liao:LORI}. The {extension} mainly consists of the following aspects:
\begin{itemize}
\item {We reformulate} the approach of characterizing subgraphs of a PrAG with respect to a given extension, with a more detailed analysis of the properties used to characterize subgraphs;
\item {further} study the semantics of probabilistic argumentation by directly using properties for characterizing subgraphs;
\item {develop} algorithms to evaluate the efficiency of the new approach; {and}
\item {analyze} the computational properties of the new approach from the perspective parameterized complexity theory.
\end{itemize}

\section{Preliminaries} 
\subsection{Classical abstract argumentation}
The notions of (classical) abstract argumentation were originally introduced in \cite{Dung:AIJ} and then extended by many researchers (please refer to \cite{Baroni:KER} for an excellent introduction), including abstract argumentation framework (called \textit{argument graph}, or \textit{classical argument graph}, in this paper), extension-based semantics and labelling-based semantics. 

An argument graph is a directed graph $G = (A,R)$, in which $A$ is a set of nodes representing arguments and $R$ is a set of edges representing attacks between the arguments.

\begin{definition} \label{Def-AG}
An argument graph is a tuple $G = (A,R)$, where $A$ is a set of nodes representing arguments, and $R\subseteq A\times A$ is a set of edges representing attacks. %For convenience, sometimes we use $args(G)$ to denote $A$.
\end{definition}

As usual, we say that $\alpha\in A$ attacks $\beta\in A$ if and only if $(\alpha, \beta)\in R$. If $E\subseteq A$ and $\alpha\in A$ then we say that $\alpha$ attacks $E$ if and only if there exists $\beta\in E$ such that $\alpha$ attacks $\beta$, that $E$ attacks $\alpha$ if and only if there exists $\beta\in E$ such that $\beta$ attacks $\alpha$, and that $E$ attacks $E^\prime$ if and only if there exist $\beta\in E$ and $\alpha\in E^\prime$ such that $\beta$ attacks $\alpha$. Given $G = (A,R)$, for $\alpha\in A$ we write $\alpha^-_G$ for $ \{\beta\mid (\beta, \alpha)\in R\}$; for $E\subseteq A$ we write $E^-_G$ for $ \{\beta\mid \exists \alpha\in E: (\beta, \alpha)\in R\}$ and $E^+_G$ for $ \{\beta\mid \exists \alpha\in E: (\alpha, \beta)\in R\}$. Formally, we have the following formulas.
\begin{eqnarray} 
 \alpha^-_G&=& \{\beta\mid (\beta, \alpha)\in R\}\\
E^-_G&=&  \{\beta\mid \exists \alpha\in E: (\beta, \alpha)\in R\}\\
E^+_G&=& \{\beta\mid \exists \alpha\in E: (\alpha, \beta)\in R\}
\end{eqnarray}

If without confusion, we write $\alpha^-$, $E^-$ and $E^+$ for $\alpha^-_G$, $E^-_G$ and $E^+_G$ respectively. 

Given an argument graph, according to certain evaluation criteria, sets of arguments (called \textit{extensions}) are identified as acceptable together. Two important notions for the definitions of various kinds of extensions are \textit{conflict-freeness} and \textit{acceptability} of arguments.

\begin{definition}\label{Def-c-a}
Let $G=( A,R)$ be an argument graph, and $E\subseteq A$ be a set of arguments.
\begin{itemize}
  \item {$E$ is \emph{conflict-free} if and only if $\nexists \alpha, \beta\in E$, such that $(\alpha,\beta)\in R$.}
  \item {An argument $\alpha\in A$ is acceptable with respect to  (defended by) $E$, if and only if $\forall(\beta,\alpha)\in R$, $\exists\gamma\in E$, such that $(\gamma,\beta)\in R$.}
\end{itemize}
\end{definition}

Based on the above two notions, several classes of (classical) extensions can be defined as follows.

\begin{definition} \label{Def-G-s}
Let $G=( A,R)$ be an argument graph, and $E\subseteq A$ a set of arguments. 
\begin{itemize}
\item {$E$ is \emph{admissible} if and only if $E$ is conflict-free, and each argument in $E$ is acceptable with respect to $E$.}
 \item {$E$ is preferred if and only if $E$ is a maximal (with respect to set-inclusion) admissible set.}
  \item {$E$ is complete if and only if $E$ is admissible, and each argument that is acceptable with respect to $E$ is in $E$.}
 \item {$E$ is grounded if and only if $E$ is the minimal (with respect to set-inclusion) complete extension.}
  \item {$E$ is stable if and only if $E$ is conflict-free, and each argument in $A\setminus E$ is attacked by $E$.}
\end{itemize}
\end{definition}

In this paper, for convenience, we use $\sigma\in\{ad$, $co$, $pr$, $gr$, $st\}$ to represent a semantics (admissible, complete, preferred, grounded or stable). An extension under semantics $\sigma$ is called a $\sigma$-extension. The set of $\sigma$-extensions of $G$ is denoted as $\mathcal{E}_\sigma(G)$. In $G=(A,R)$, if $A=R=\emptyset$, then $\mathcal{E}_\sigma(G) = \{\emptyset\}$.

\begin{example}\label{ex-1}
Let $G_1 = (A_1, R_1)$ be an argument graph illustrated as follows.

  \begin{picture}(206,30)
 \put(0,15){\xymatrix@C=1cm@R=0.75cm{
  a\ar[r]&b \ar[l]\ar[r]&c\ar[r]&d\ar[l]\ar@(rd,ru)
   }}
    \end{picture}

According to Definition \ref{Def-G-s}, $G_1$ has four admissible sets: $\emptyset$, $\{a\}$, $\{b\}$ and $\{a,c\}$, in which $\emptyset$, $\{b\}$ and $\{a,c\}$ are complete extensions, $\{b\}$ and $\{a,c\}$ are preferred extensions, $\{a,c\}$ is the only stable extension, $\emptyset$ is the unique grounded extension. 
\end{example}

Corresponding to {the extension-based approach} introduced above,  {the \textit{labelling-based approach}} is another way to formulate argumentation semantics. Since we will use labelling-based approach to develop algorithms in Section 5.1, some basic notions of this approach are briefly introduced here.  
 The  idea underlying  {the labelling-based approach} is to give each argument a label, which is defined in advance. {In existing literature, the set of labels is usually defined as:} $\mathrm{IN}$, $\mathrm{OUT}$ and $\mathrm{UNDEC}$. The label $\mathrm{IN}$ indicates that the argument is explicitly accepted, the label $\mathrm{OUT}$ indicates that the argument is explicitly rejected, and the label $\mathrm{UNDEC}$ indicates that the status of the argument is undecided, meaning that one abstains from an opinion on whether the argument is accepted or rejected. Meanwhile, there could be some other choices for the set of labels. For instance, in \cite{JV99}, a four-valued labelling is considered. In this paper, we choose the three-valued-labelling, which can be formally defined as follows.

\begin{definition}[Labelling] \label{def-labeling}
Given an argument graph $G=( A,R)$ and three labels $\mathrm{IN}$, $\mathrm{OUT}$ and $\mathrm{UNDEC}$, a \textit{labelling} is a total function: 
\begin{eqnarray}
\mathcal{L}: A\mapsto \{\mathrm{IN}, \mathrm{OUT}, \mathrm{UNDEC}\}
 \end{eqnarray}
\end{definition}

Let $in(\mathcal{L}) = \{\alpha\mid \mathcal{L}(\alpha) = \mathrm{IN}\}$, $out(\mathcal{L}) = \{\alpha\mid \mathcal{L}(\alpha) = \mathrm{OUT}\}$, and $undec(\mathcal{L}) = \{\alpha\mid \mathcal{L}(\alpha) = \mathrm{UNDEC}\}$. A labelling $\mathcal{L}$ is often represented as a triple of the form $(in(\mathcal{L}) , out(\mathcal{L}) , undec(\mathcal{L}) )$.

One of criteria for labeling-based semantics is whether a label assigned to an argument is legal. According to Definition \ref{def-labeling},  given a labeling $\mathcal{L}$, the status assigned to each argument might not be legal. We say that assigning $\mathrm{IN}$ to an argument is legal if and only if all its attackers have been assigned $\mathrm{OUT}$; assigning $\mathrm{OUT}$ to an argument  is legal if and only if one of its attackers has been assigned $\mathrm{IN}$; and assigning $\mathrm{UNDEC}$ to an argument  is legal if and only if not all its attacks are labeled $\mathrm{OUT}$ and it does not have an attacker that is labeled $\mathrm{IN}$. Formally, we have the following definition.

\begin{definition}[Legal labelling] \label{legallabels}
{Let $\mathcal{L}$  be a labeling of an argument graph $G=( A,R)$ and $\alpha\in A$.} 
\begin{itemize}
\item  {$\alpha$ is legally $\mathrm{IN}$ if and only if $\mathcal{L}(\alpha) = $ $\mathrm{IN}$ and for all $\beta\in A$, if $(\beta, \alpha)\in R$, then $\mathcal{L}(\beta) =$ $\mathrm{OUT}$.}
\item  {$\alpha$ is legally $\mathrm{OUT}$ if and only if $\mathcal{L}(\alpha) = $ $\mathrm{OUT}$ and there exists $\beta\in A$, such that $(\beta, \alpha)\in R$, and $\mathcal{L}(\beta) =$ $\mathrm{IN}$.}
\item  {$\alpha$ is legally $\mathrm{UNDEC}$ if and only if $\mathcal{L}(\alpha) = $ $\mathrm{UNDEC}$ and \\
(1) there exists $\beta\in A$, such that $(\beta, \alpha)\in R$, and $\mathcal{L}(\beta) \neq$ $\mathrm{OUT}$, and \\
(2) for all $\beta\in A$, if $(\beta, \alpha)\in R$, then $\mathcal{L}(\beta) \neq$ $\mathrm{IN}$.}
\end{itemize}
\end{definition}

According to the notion of legal labelling, the notion of illegal labelling can be defined as follows.

\begin{definition}[Illegal labelling] \label{illegallabels}
{Let $\mathcal{L}$  be a labeling of an argument graph $G=( A,R)$ and $\alpha\in A$.} 
\begin{itemize}
\item {$\alpha$ is illegally $\mathrm{IN}$ if and only if $\mathcal{L}(\alpha) = $ $\mathrm{IN}$, but $\alpha$ is not legally $\mathrm{IN}$. }
\item {$\alpha$ is illegally $\mathrm{OUT}$ if and only if $\mathcal{L}(\alpha) = $ $\mathrm{OUT}$, but $\alpha$ is not legally $\mathrm{OUT}$. }
\item {$\alpha$ is illegally $\mathrm{UNDEC}$ if and only if $\mathcal{L}(\alpha) = $ $\mathrm{UNDEC}$, but $\alpha$ is not legally $\mathrm{UNDEC}$. }
\end{itemize}
\end{definition}

{Based on the notions of legal labelling, labeling-based semantics can be defined as follows. }

\begin{definition}[Labeling-Based Semantics]
{Let $\mathcal{L}$  be a labeling of an argument graph $G=( A,R)$.} 
\begin{itemize}
\item {$\mathcal{L}$ is an admissible labeling, if and only if each argument that is labeled $\mathrm{IN}$ is legally $\mathrm{IN}$, and each argument that is labeled $\mathrm{OUT}$ is legally $\mathrm{OUT}$.}
\item {$\mathcal{L}$ is a complete labeling, if and only if it is an admissible labeling, and each argument that is labeled $\mathrm{UNDEC}$ is legally $\mathrm{UNDEC}$.}
\item {$\mathcal{L}$ is a grounded labeling, if and only if it is a complete labeling, and $in(\mathcal{L})$ is minimal (with respect to set inclusion).}
\item {$\mathcal{L}$ is a preferred labeling, if and only if it is a complete labeling, and $in(\mathcal{L})$ is maximal (with respect to set inclusion).}
\item  {$\mathcal{L}$ is a stable labeling, if and only if it is a complete labeling, and $undec(\mathcal{L}) =\emptyset$. }
\end{itemize} 
\end{definition}

Based on the above notions, Modgil and Caminada developed algorithms (called MC algorithms)\cite{Modgil2009B} to compute the preferred labellings and the grounded labelling of an argument graph\footnote{It is worth to mention that in recent years, there are various approaches for computing the semantics of argumentation, including reduction approaches (e.g. the alrogithms based on ASP slovers) and direct approaches (e.g. the MC algorithms). Some of them have appeared in the International Competition on Computational Models of Argumentation (http://argumentationcompetition.org/2015/solvers.html). Since the choice of different implemention approaches does not basically affect the empirical results of our approach, for simplicity and without loss of generarity, we only introduce and exploit the MC algorithm for computing preferred labellings.}.

The MC algorithm for computing preferred labellings is realized by computing \textit{admissible labellings} that maximize the number of arguments that are legally IN. Here, admissible labellings are generated by starting with a labelling that labels all arguments IN and then iteratively, selects arguments that are illegally IN (or \textit{super-illegally IN}) and applies a \textit{transition step} to obtain a new labelling, until a labelling is reached in which no argument is illegally IN. In this algorithm, the notions of \textit{super-illegally IN} and \textit{transition step} are introduced as follows. For more details about the MC algorithms, please refer to  \cite{Modgil2009B}.

First, since all arguments are initially labelled IN, some of which might be illegal. To get an admissible labelling which might be a preferred labelling, it is necessary to change the label of each argument that is illegally IN, preferably without creating any arguments that are illegally OUT. {The notion of a transition step} is used for this purpose. In other words, a transition step basically takes an argument that is illegally IN and relabels it to OUT. It then checks if, as a result of this, one or more arguments have become illegally OUT. If this is the case, then these arguments are relabelled to UNDEC. Formally, the notion of \textit{transition step} is defined as follows \cite{Modgil2009B}.

\begin{definition}[Transition step]\label{transitionstep}
Let $\mathcal{L}$ be a labelling for $G=(A,R)$ and $\alpha$ be an argument that is illegally IN in $\mathcal{L}$. A transition step on $\alpha$ in $\mathcal{L}$ consists of the following:
\begin{itemize}
\item the label of $\alpha$ is changed from IN to OUT;
\item for every $\beta\in\{\alpha\}\cup\{\gamma\mid (\alpha,\gamma)\in R\}$, if $\beta$ is illegally OUT, then the label of $\beta$ is changed from OUT to UNDEC.
\end{itemize}
\end{definition} 

Second, if we select arbitrarily the arguments that are illegally IN to do transition steps, then we might obtain some admissible labellings that are not complete {labellings} (and therefore not preferred labellings). To improve the efficiency of computation, in the MC algorithm for preferred labellings, they proposed a notion, called \textit{super-illegally} IN. It is said that an argument $\alpha$ in $\mathcal{L}$ that is illegally IN, is also super-illegally IN if and only if it is attacked by an argument $\beta$ that is legally IN in $\mathcal{L}$, or UNDEC in $\mathcal{L}$. This notion can be used to guide the choice of arguments on which to perform transition steps, such that the non-complete labellings {can be avoided}. %Formally, we have the following definition \cite{Modgil2009B}.

\subsection{Probabilistic abstract argumentation} 
The notions of probabilistic abstract argumentation are defined by combining the notions of classical abstract argumentation and those of probabilistic theory, including probabilistic argument graph and its semantics. 

According to \cite{Hunter:comma}, we have the following definition.

\begin{definition}\label{def-prob-arg}
A probabilistic argument graph (or PrAG for short) is a triple $G^p = (A,R, p)$ where $G =(A,R)$ is an argument graph and $p: A\rightarrow [0,1]$ is a probability function assigning to every argument $\alpha\in A$ a probability $p(\alpha)$ that $\alpha$ appears (and hence a probability $1-p(\alpha)$ that $\alpha$ does not appear).
\end{definition}

In existing literature, the semantics of a PrAG is defined according to the notion of possible world. Given a  PrAG, a possible world represents a scenario consisting of some subset of the arguments and attacks in the graph. So, given a PrAG with $n$ nodes, there are up to $2^n$ subgraphs with nonzero probability. A subgraph induced by a set $A^\prime\subseteq A$ is represented as  $G^\prime= (A^\prime, R^\prime)$, in which $R^\prime = R\cap (A^\prime\times A^\prime)$. {For convenience, we also use $G_{\downarrow A^\prime}$ to denote a subgraph $G^\prime = (A^\prime, R^\prime)$.}
Under a semantics $\sigma\in \{ad, co, pr, gr, st\}$, the extensions of each subgraph are computed according to the definition of classical argumentation semantics. Then, the probability that a set of arguments $E\subseteq A$ is a $\sigma$-extension, denoted as $p(E^\sigma)$, is the sum of the probability of each subgraph for which $E$ is a $\sigma$-extension. 
In calculating the probability of each subgraph, we assume independence of arguments appearing in a graph.  A discussion about the assumption of independence of arguments is presented in Section 7.1.

For simplicity, let us abuse the notation, using $p(\bar{\alpha})$ to denote $1- p(\alpha)$. Then, the probability of subgraph $G^\prime$, denoted $p(G^\prime)$, can be defined as follows.
\begin{eqnarray}
p(G^\prime) &=& (\Pi_{\alpha\in A^\prime}\, p(\alpha))\times (\Pi_{\alpha\in A\setminus A^\prime}\, p(\bar{\alpha})) \label{formula-subgr}
\end{eqnarray}

Given a PrAG $G^p = (A,R, p)$, let $Q_\sigma(E)$ denote the set of subgraphs of $G$, each of which has an extension $E$ under a given semantics $\sigma\in\{ad, co$, $pr, gr$, $st\}$. Based on formula (\ref{formula-subgr}),   $p(E^\sigma)$ is defined as follows \cite{Hunter:comma}. 
\begin{eqnarray}
p(E^\sigma) &=& \Sigma_{G^\prime\in Q_\sigma(E)}\,p(G^\prime) \label{formula-2}
\end{eqnarray}

\begin{example}\label{ex-2}
Let $G_1^p = (A_1, R_1, p)$ be a PrAG (illustrated as follows), where $p(a) = 0.5$, $p(b) = 0.8$, $p(c) = 0.4$ and $p(d) = 0.5$. 

  \begin{picture}(206,40)
 \put(0,23){\xymatrix@C=1cm@R=0.75cm{
  a\ar[r]&b \ar[l]\ar[r]&c\ar[r]&d\ar[l]\ar@(rd,ru)
   }}
\small \put(0,9){0.5}
 \put(38,9){0.8}
\put(76,9){0.4}
\put(116,9){0.5}
    \end{picture}

The subgraphs of $G_1^p$ are presented in Table 1. 
\\
\begin{table}[!htp] \label{table-1}
\begin{center}
\renewcommand\arraystretch{1.5}
\begin{tabular}{@{\hspace{0.3cm}}l@{\hspace{0.9cm}}l@{\hspace{0.9cm}}l@{\hspace{0.9cm}}l@{\hspace{0.4cm}}}
  \hline
  % after \\: \hline or \cline{col1-col2} \cline{col3-col4} ...
&\small Subgraphs &\small \parbox{2cm}{Probability of subgraph}&\small \parbox{1.3cm}{Preferred\\ extensions} \\
   \hline
$G^1_1$&$a\leftrightarrow b\rightarrow c\leftrightarrow d\righttoleftarrow$&0.08&$\{b\}, \{a, c\}$\\
 \hline
$G^2_1$&$a\leftrightarrow b\rightarrow c$&0.08&$\{b\}, \{a,c\}$\\
 \hline
$G^3_1$&$a\leftrightarrow b$\hspace{0.45cm} $d\righttoleftarrow$&0.12&$\{a\}, \{b\}$ \\
 \hline
$G^4_1$&$a\leftrightarrow b$&0.12&$\{a\},\{b\}$ \\
 \hline
$G^5_1$&$a  \hspace{0.45cm}c\leftrightarrow d\righttoleftarrow$&0.02&$\{a,c\}$\\
 \hline
$G^6_1$&$a \hspace{0.45cm}c$&0.02&$\{a,c\}$\\
 \hline
$G^7_1$&$a\hspace{0.45cm} d\righttoleftarrow$&0.03&$\{a\}$ \\
 \hline
$G^8_1$&a &0.03&$\{a\}$ \\
 \hline
$G^9_1$&$b\rightarrow c\leftrightarrow d\righttoleftarrow$&0.08&$\{b\}$\\
 \hline
$G^{10}_1$&$b\rightarrow c$&0.08&$\{b\}$ \\
 \hline
$G^{11}_1$&$b$\hspace{0.45cm} $d\righttoleftarrow$&0.12&$\{b\}$ \\
 \hline
$G^{12}_1$&$b$&0.12&$\{b\}$\\
 \hline
$G^{13}_1$&$c\leftrightarrow d\righttoleftarrow$&0.02&$\{c\}$\\
 \hline
$G^{14}_1$&$c$&0.02&$\{c\}$ \\
 \hline
$G^{15}_1$& $d\righttoleftarrow$ &0.03&$\{\}$ \\
 \hline
$G^{16}_1$&&0.03&$\{ \}$\\
\hline
%\vspace{0.01cm}
\end{tabular}
\caption{Subgraphs of $G_1$}
\end{center}
\end{table}
\end{example}

According to formula (\ref{formula-2}),  there are 5 preferred extensions with non-zero probability: 
\begin{eqnarray*}
p(\emptyset^{pr}) &=&p(G^{15}_{1})+p(G^{16}_{1})= 0.06 \\
p(\{a\}^{pr}) &=& p(G^3_1)+p(G^4_1)+p(G^7_1)+p(G^8_1)=0.3 \\
p(\{b\}^{pr}) &=&p(G^1_1)+p(G^2_1)+p(G^3_1)+p(G^4_1) + p(G^9_1)+p(G^{10}_1)\\
&&+p(G^{11}_1)+p(G^{12}_1)=0.8   \\
p(\{c\}^{pr}) &=& p(G^{13}_1)+p(G^{14}_1)=0.04 \\
p(\{a, c\}^{pr}) &=&p(G^1_1)+p(G^2_1)+p(G^5_1)+p(G^6_1)= 0.2
\end{eqnarray*}

 {This example shows that by using the existing possible worlds based approach, in order to obtain the probability that a set $E$ of arguments is an extension under a given semantics (i.e., $p(E^\sigma)$), one has to compute the extensions  of all subgraphs under this semantics, although some of these subgraphs have no extension $E$. Since many irrelevant subgraphs are constructed and computed, and in many cases, computing extensions of subgraphs is computationally intractable,  this possible worlds based approach is fundamentally inefficient or infeasible. In \cite{Fazzinga:sum13}, Fazzinga et al proposed a new approach and proved that under admissible and stable semantics, the problem of determining
$p(E^\sigma)$  is tractable.  However, under complete, grounded and preferred semantics, the problem of determining $p(E^\sigma)$ is $FP^{\sharp P}$-complete.} This calls for developing more efficient approaches, including the approximate approaches introduced in  \cite{Li:TAFA} and \cite{Fazzinga:sum13}.

\section{Characterized subgraphs with respect to an extension}
Given a PrAG, since the probability of a set of arguments $E$ being an extension under a given semantics $\sigma$ (i.e. $p(E^\sigma)$) is equal to the sum of the probabilities of the subgraphs each of which has an extension $E$, the main issue  is to identify the subgraphs.  As mentioned in Section 1, unlike the existing approaches, we define general properties  to characterize the subgraphs, such that $p(E^\sigma)$ can be computed by using these properties, rather than by blindly constructing and computing all subgraphs of the PrAG. %some irrelevant subgraphs are excluded and some computation of extensions can be avoided. 

%we introduce a new approach to formulate the semantics of probabilistic argumentation by characterizing {subgraphs with respect to an extension}, 
%
%
%such that the probability of a set of arguments being an extension can be evaluated without computing (or with less computation of) the extensions of subgraphs.

To begin with, let us introduce a notion of \textit{$\sigma$-subgraph with respect to an extension}: If a {subgraph}  has a $\sigma$-extension $E$, then it is called a $\sigma$-subgraph with respect to $E$. Formally, we have the following definition.

\begin{definition}\label{def-5}
Let $G^p = (A,R, p)$ be a PrAG, $G = (A,R)$ be the corresponding classical argument graph, $G_{\downarrow A^\prime}$ be a subgraph of $G$ where $A^\prime\subseteq A$, and $E\subseteq A$ be a set of arguments. We say that $G_{\downarrow A^\prime}$ is a $\sigma$-subgraph of $G$ with respect to $E$, if and only if $G_{\downarrow A^\prime}$ has a $\sigma$-extension $E$, where $\sigma\in\{ad, co, pr, gr, st\}$.
\end{definition}

%When context is clear, the reference to $E$ of a $\sigma$-subgraph of $G$ is sometimes dropped.  

\begin{example}
Consider $G_1^p$ in Example \ref{ex-2}. Given $E_1 = \{a\}$, $G_1^3, G_1^4, G_1^7$ and $G_1^8$ are preferred subgraphs of $G_1^p$ with respect to $E_1$.
\end{example}

Then, given a PrAG $G^p = (A,R, p)$, a set of arguments $E\subseteq A$ and a semantics $\sigma\in\{ad, co, pr, gr, st\}$, a function (called \textit{subgraph identification function}) is used to map $E$ to a set of $\sigma$-subgraphs of $G$ with respect to $E$. 

\begin{definition} \label{def-characterization}
Let $G^p = (A,R, p)$ be a PrAG, and $G = (A,R)$ be the corresponding argument graph. Let $\mathbb{G} = \{G_{\downarrow A^\prime} \mid A^\prime \in 2^A\}$ be the set of all subgraphs of $G$. A \textit{subgraph identification function} under a given semantics $\sigma\in\{ad, co, pr, gr, st\}$ (denoted as $\rho^\sigma$) is defined as a mapping:
\begin{equation}
\rho^\sigma: 2^A \rightarrow 2^{\mathbb{G}}
\end{equation}
such that given $E\in 2^A$, for all $G^\prime\in \rho^\sigma(E)$, $G^\prime$ is a $\sigma$-subgraph of $G$ with respect to $E$.
\end{definition}

In Definition \ref{def-characterization}, $\rho^\sigma$ ($\sigma\in\{ad, co, pr, gr, st\}$) can be understood as a class of functions (i.e., $\rho^{ad}$, $\rho^{co}$, $\rho^{pr}$, $\rho^{gr}$ and $\rho^{st}$), each of which is a function under a given semantics.  

These subgraph identification functions can be instantiated in different ways. A simple but inefficient way is to construct the set of all subgraphs of $G$ (i.e., $\mathbb{G}$) and then for each subgraph to verify whether it has an extension $E$. In terms of this approach, subgraphs are constructed blindly, although many of them are irrelevant. And, for each subgraph, the algorithm to verify $E$ being an extension of the subgraph might be intractable (e.g., under preferred semantics, the problem of verifying whether $E$ is an extension is {coNP-complete \cite{Dunne2009}). }

To cope with this problem, we introduce as follows another way to instantiate the subgraph identification functions. In this new approach,  properties related to $E$ are used to characterize the set of subgraphs each of which has an extension $E$. Since  $p(E^\sigma)$ may be computed by using these properties (please refer to Section 4 for details), the characterized subgraphs can be kept (completely or partially) implicit, rather than explicitly constructed and computed  (although the subgraphs may be also explicitly represented according to the properties, as presented in formulas (\ref{for-admiss})-(\ref{formalus-gr})).

{Since when $E$ is not conflict-free, the set of characterized subgraphs with respect to $E$ is an empty set, for simplicity, when talking about the set of characterized subgraphs with respect to a set of arguments $E$, we only consider the cases where $E$ is conflict-free.}

Under different semantics, properties used to characterize subgraphs may vary. However, they are all based on the following components related to $E$ (as illustrated in Figure 1)\footnote{The slices of the pie are used to indicate different components related to $E$. Their sizes  are not important.}:
\begin{itemize}
\item[1.] $E$;
\item[2.] $E_G^-\setminus E_G^+$: the set of arguments each of which attacks $E$ but is not attacked by $E$; 
\item[3.] $E_G^+$: the set of arguments each of which is attacked by $E$;
\item[4.] $I = A\setminus (E\cup E_G^+\cup E_G^-)$: the set of arguments each of which is not in $E$, $E_G^+$ or $E_G^-$. We call $I$ the set of remaining arguments (of $G$ with respect to $E$) that indirectly affects $E$ being a $\sigma$-extension. 
\end{itemize} 

\begin{figure}[!htp]
%  \begin{picture}(206,180)
\begin{center}
\includegraphics [width=0.6\textwidth]{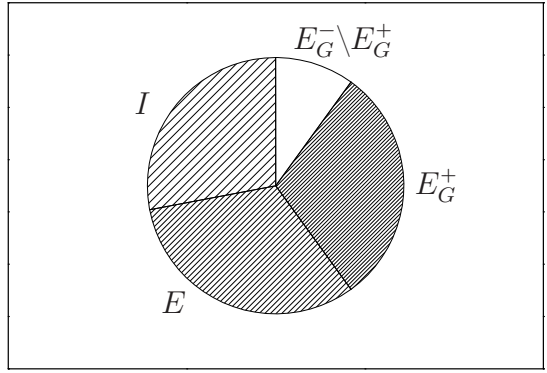}
 \put(-64,80){$E_G^+$}
 \put(-110,135){$E_G^-\setminus E_G^+$}
 \put(-160,35){$E$}
 \put(-170,110){$I$}
%\put(-185,110){\circle{45}}
\end{center}
% \end{picture}
\caption{Four components of $G$ w.r.t. $E$. It holds that $E\cup (E_G^-\setminus E_G^+) \cup E_G^+\cup I = A$.}
\end{figure}

%As mentioned above, the basic motivation of defining notions of $\sigma$-subgraph and subgraph characterization function  is to formulate a new approach to identify the set of subgraphs with respect to a given extension.  Given a PrAG and a conflict-free set $E$ of arguments, according to existing approaches (e.g., the Monte-Carlo simulation approach), subgraphs are constructed blindly, although many of them are irrelevant. And, for each subgraph, whether $E$ is an extension is verified by an algorithm which might be intractable (e.g., under preferred semantics, the algorithm is coNP-complete). %So, a natural question whether it is possible to identify the subgraphs in a manner of backward chaining (or goal oriented) according to some properties of the subgraphs.
%To cope with this problem, under a given semantics, we define properties to instantiate the subgraph characterization function such that most subgraphs are not necessary to be explicitly constructed and computed. The key insight behind this approach is that given a PrAG, a conflict set $E$ of arguments and a  semantics $\sigma$, whether $E$ is a $\sigma$-extension of a subgraph is determined by exploiting the information of the PrAG with respect to $E$ (encoded by $E$, $E_G^-\setminus E_G^+$, $E_G^+$, and $I = A\setminus (E\cup E_G^+\cup E_G^-)$), not by blindly constructing all subgraphs of the PrAG and verifying whether $E$ is a $\sigma$-extension of each subgraph. 

Firstly, under admissible semantics,  each admissible subgraph can be characterized by the following two properties (as illustrated in Figure 2):
\begin{description}
\item[Prop1: ]All arguments in $E$ appear in the subgraphs; and
\item[Prop2: ]All arguments in $E_G^-\setminus E_G^+$ do not appear in the subgraph (while the appearance of arguments in any subset of $I \cup E_G^+$ does not affect $E$ being an extension of the subgraph). 
\end{description}

Prop2 means that every argument in $E$ is acceptable with respect to $E$. Given that $E$ is conflict-free and every argument in $E$ is acceptable with respect to $E$, $E$ is an admissible extension. So, by definition, the subgraph is an admissible  subgraph. 

\begin{figure}[!htp]
%  \begin{picture}(206,180)
\begin{center}
\includegraphics [width=0.6\textwidth]{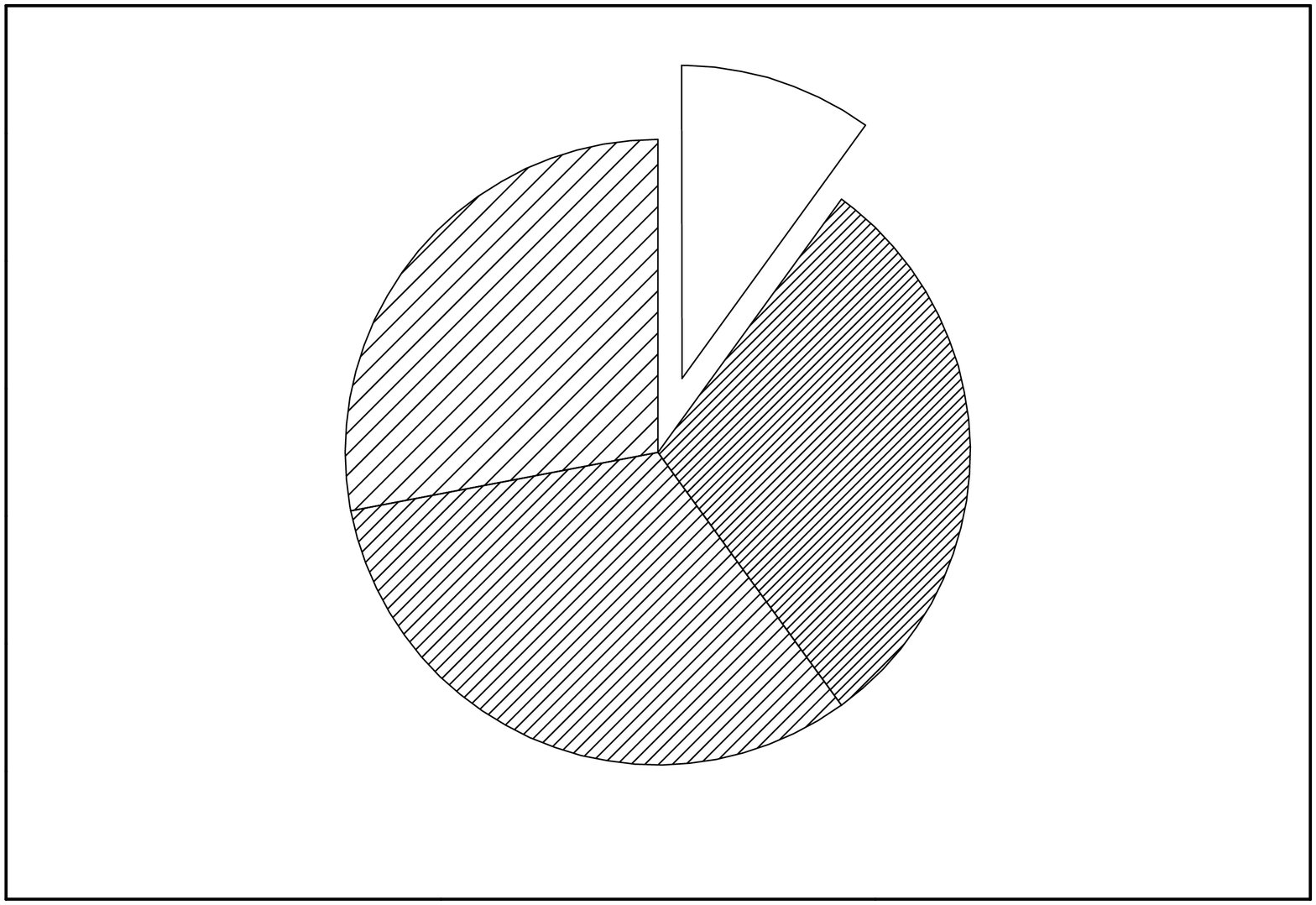}
 \put(-64,80){$E_G^+$}
 \put(-82,135){$E_G^-\setminus E_G^+$}
 \put(-160,35){$E$}
 \put(-170,110){$I$}
%\put(-185,110){\circle{45}}
\end{center}
% \end{picture}
\caption{Given a PrAG $G = (A, R, p)$, a subgraph $G^\prime$ is an admissible subgraph w.r.t. $E$ if and only if arguments in $E$ appear, arguments in $E_G^-\setminus E_G^+$ do not appear, while arguments in any subset of $I \cup E_G^+$ may apear in the subgraph.}
\end{figure}

According to the above analysis,  we have the following proposition.

\begin{proposition}\label{the-1}
Let $G^p = (A,R, p)$ be a PrAG, $G=(A, R)$ be a corresponding argument graph,  and $E\subseteq A$ be a conflict-free set of arguments. Then, for all $B\in 2^{I\cup E_G^+}$, $G_{\downarrow E\cup B}$ is an admissible subgraph of $G$ with respect to $E$.%,
%
%$G_{\downarrow A^\prime}$ is  an admissible subgraph of $G$ with respect to $E$, if and only if the following properties hold:
%\begin{itemize}
%\item $E\subseteq {A}^\prime$, which means that all arguments in $E$ appear in $G^\prime$; and
%\item $(E_G^-\setminus E_G^+)\cap {A}^\prime = \emptyset$, which means that every argument in $E_G^-\setminus E_G^+$ does not appear in $G^\prime$.
%\end{itemize}
\end{proposition}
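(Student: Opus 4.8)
The plan is to check directly that $E$ fulfills the two defining clauses of an admissible extension inside the induced subgraph $G_{\downarrow E\cup B}=(E\cup B, R')$, where $R'=R\cap((E\cup B)\times(E\cup B))$, and then to read off the conclusion from Definition \ref{def-5}. Note first that $E\subseteq E\cup B$ holds trivially: conflict-freeness of $E$ gives $E\cap E_G^+=\emptyset$, and $E\cap I=\emptyset$ by the definition of $I$, so $E$ is disjoint from $I\cup E_G^+\supseteq B$. I would then dispose of conflict-freeness, which is routine: since $E\subseteq E\cup B$, the restriction of $R$ to pairs of arguments both lying in $E$ is unaffected by passing to the subgraph, so any $R'$-conflict inside $E$ would already be an $R$-conflict inside $E$. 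As $E$ is conflict-free in $G$ by hypothesis, it stays conflict-free in $G_{\downarrow E\cup B}$.

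The crux is acceptability: I must show that every $\alpha\in E$ is acceptable with respect to $E$ in the subgraph, i.e.\ that each attacker of $\alpha$ surviving in $R'$ is counter-attacked by some $\gamma\in E$ through an edge that also survives in $R'$. So fix $\alpha\in E$ and an attacker $\beta$ with $(\beta,\alpha)\in R'$; in particular $\beta\in E\cup B$. The key move is to locate $\beta$ among the four components of Figure 1. Because $\beta$ attacks $\alpha\in E$ we have $\beta\in E_G^-$; conflict-freeness of $E$ forces $\beta\notin E$; and since the definition of $I$ gives $I\cap E_G^-=\emptyset$, the membership $\beta\in B\subseteq I\cup E_G^+$ leaves only the possibility $\beta\in E_G^+$. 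Thus $\beta$ is attacked by $E$ in $G$, so there is a $\gamma\in E$ with $(\gamma,\beta)\in R$; and as both $\gamma\in E$ and $\beta\in B$ lie in $E\cup B$, this counter-attack survives the restriction, giving $(\gamma,\beta)\in R'$. Hence $\gamma$ defends $\alpha$ against $\beta$ inside the subgraph, and $\alpha$ is acceptable with respect to $E$.

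Having verified that $E$ is conflict-free and that each of its members is acceptable with respect to $E$ in $G_{\downarrow E\cup B}$, I conclude that $E$ is an admissible extension of the subgraph, so by Definition \ref{def-5} the subgraph is an admissible subgraph of $G$ with respect to $E$, as required. I expect the only subtle step to be the componentwise classification of the attacker $\beta$: the combination of requiring the arguments of $E_G^-\setminus E_G^+$ to be absent (they never enter $E\cup B$, being disjoint from $E\cup I\cup E_G^+$) with the definition of $I$ is precisely what confines every surviving attacker of $E$ to $E_G^+$, where $E$ automatically supplies its own defence. Everything else reduces to bookkeeping about which edges persist under restriction to $E\cup B$.
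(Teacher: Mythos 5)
Your proof is correct and follows essentially the same route as the paper's: both verify conflict-freeness trivially and then establish acceptability by observing that any attacker of $E$ surviving in $G_{\downarrow E\cup B}$ must lie in $E_G^+$ (since $E$ is conflict-free and $I\cap E_G^-=\emptyset$), where $E$ counter-attacks. Your version is slightly more explicit than the paper's in checking that the defending edge $(\gamma,\beta)$ persists in the restricted attack relation, but this is a presentational refinement, not a different argument.
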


\begin{proof}
We need to verify that  $E$ is an admissible set of $G_{\downarrow E\cup B}$. Since $E$ is conflict-free, we only need to prove that $\forall \alpha \in E$, $\alpha$ is acceptable with respect to $E$. Since $E$ is conflict-free and there is no interaction between $I$ and $E$, it holds that $\alpha$ is not attacked by the arguments in $E\cup (I\cap B)$.  And, $\forall\beta\in E_G^+\cap B$, no matter whether $\beta$ attacks $\alpha$, by the definition of $E_G^+$, $\beta$ is attacked by $E$. In summary, $\forall \beta \in E\cup (I\cap B)\cup (E_G^+\cap B) = E\cup B$, $\beta$ either does not attack $\alpha$ or is attacked by $E$. So, $\forall \alpha \in E$, $\alpha$ is acceptable with respect to $E$.
\end{proof}

According to Proposition \ref{the-1}, the set of admissible subgraphs $\rho^{ad}(E)$ can be specified as follows:
\begin{equation} \label{for-admiss}
\rho^{ad}(E) = \{G_{\downarrow E\cup B} \mid B\in 2^{I\cup E_G^+}\}
\end{equation}

\begin{example} \label{ex-ad}
Consider $G_1^p$ in Example \ref{ex-2} again. According to formula (\ref{for-admiss}), there are eight admissible subgraphs with respect to $\{a\}$: $G^1_1$, $G^2_1$, $\dots$, $G^8_1$ (as shown in the third column of Table 2), i.e., $\rho^{ad}(\{a\}) = \{G^1_1, G^2_1, \dots, G^8_1\}$.

\begin{table}[!htp] \label{table-2}
\begin{center}
\renewcommand\arraystretch{1.5}
\begin{tabular}{@{\hspace{0.1cm}}l@{\hspace{0.2cm}}l@{\hspace{0.3cm}}l@{\hspace{0.2cm}}l@{\hspace{0.2cm}}l@{\hspace{0.2cm}}l@{\hspace{0.2cm}}l@{\hspace{0.1cm}}}
  \hline
  % after \\: \hline or \cline{col1-col2} \cline{col3-col4} ...
&\small subgraph &\small \parbox{1.7cm}{admissible subgraph w.r.t. $\{a\}$}&\small \parbox{1.7cm}{comple subgraph w.r.t. $\{a\}$}&\small \parbox{1.7cm}{stable subgraph w.r.t. $\{a\}$}&\small \parbox{1.7cm}{preferred subgraph w.r.t. $\{a\}$} &\small \parbox{1.7cm}{grounded subgraph w.r.t. $\{a\}$}   \\
   \hline
$G^1_1$&$a\leftrightarrow b\rightarrow c\leftrightarrow d\righttoleftarrow$&Yes&Yes&No&No&No\\
 \hline
$G^2_1$&$a\leftrightarrow b\rightarrow c$&Yes&No&No&No&No\\
 \hline
$G^3_1$&$a\leftrightarrow b$\hspace{0.45cm} $d\righttoleftarrow$&Yes&Yes&No&Yes&No \\
 \hline
$G^4_1$&$a\leftrightarrow b$&Yes&Yes&Yes&Yes&No \\
 \hline
$G^5_1$&$a  \hspace{0.45cm}c\leftrightarrow d\righttoleftarrow$&Yes&Yes&No&No&Yes\\
 \hline
$G^6_1$&$a \hspace{0.45cm}c$&Yes&No&No&No&No\\
 \hline
$G^7_1$&$a\hspace{0.45cm} d\righttoleftarrow$&Yes&Yes&No&Yes&Yes\\
 \hline
$G^8_1$&a &Yes&Yes&Yes&Yes&Yes\\
 \hline
$G^9_1$&$b\rightarrow c\leftrightarrow d\righttoleftarrow$&No&No&No&No&No\\
 \hline
$G^{10}_1$&$b\rightarrow c$&No&No&No&No&No\\
 \hline
$G^{11}_1$&$b$\hspace{0.45cm} $d\righttoleftarrow$&No&No&No&No &No\\
 \hline
$G^{12}_1$&$b$&No&No&No&No&No\\
 \hline
$G^{13}_1$&$c\leftrightarrow d\righttoleftarrow$&No&No&No&No&No\\
 \hline
$G^{14}_1$&$c$&No&No&No&No&No \\
 \hline
$G^{15}_1$& $d\righttoleftarrow$ &No&No&No&No &No\\
 \hline
$G^{16}_1$&&No&No&No&No&No\\
\hline
\vspace{0.01cm}
\end{tabular}
\caption{$\sigma$-subgraphs of $G_1$ with respect to $\{a\}$}
\end{center}
\end{table}
\end{example}

%{Second, since every stable extension is a complete extension, and under stable semantics no argument is undecided, we may infer that a complete subgraph is a stable subgraph (with respect to a set of arguments $E$), if and only if all arguments in the subgraph are included in $E\cup E^+$. We directly have the following theorem. (redefine this part)}

Secondly, under stable semantics,  each stable subgraph can be characterized by the following two properties (as illustrated in Figure 3):

%Based on the properties for characterizing admissible subgraphs, let us consider how to characterize a stable subgraph. According to Definition \ref{Def-G-s}, $E$ is a stable extension of a subgraph $G_{\downarrow A^\prime}$ if and only if $E\subseteq A^\prime$ is conflict-free, and each argument in $A^\prime\setminus E$ is attacked by $E$. So,  under stable semantics, given a PrAG $G=(A, R, p)$, a conflict-free set $E$ of arguments, all subgraphs having the following two properties can be included, while others are excluded (as illustrated in Fig. 2):
\begin{description}
\item[Prop1: ]All arguments in $E$ appear in the subgraph. 
\item[Prop3: ]All arguments in $A\setminus (E\cup E_G^+) = I \cup (E_G^-\setminus E_G^+)$ do not appear in the subgraph. 
\end{description}

\begin{figure}[!htp]
%  \begin{picture}(206,180)
\begin{center}
\includegraphics [width=0.6\textwidth]{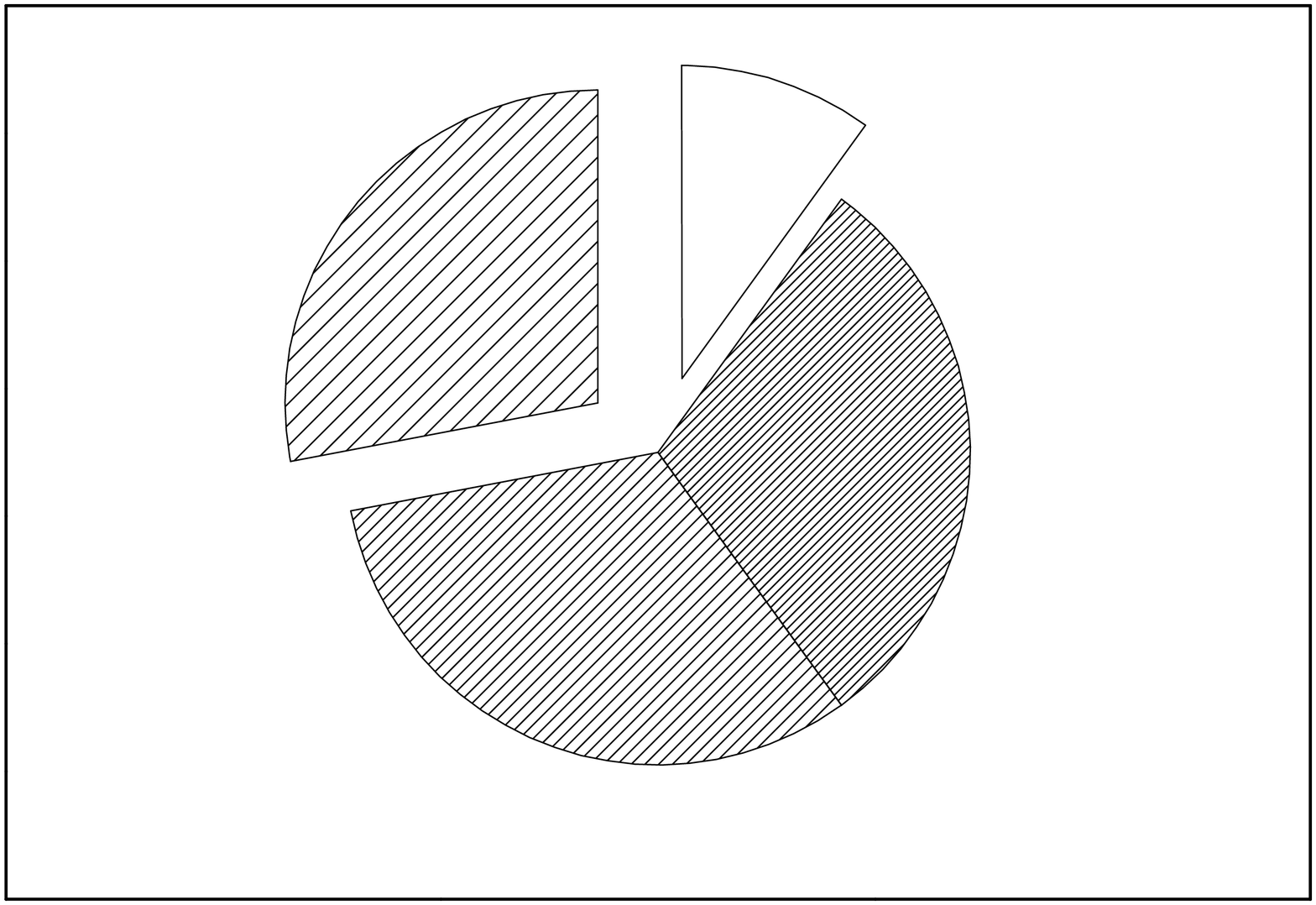}
 \put(-64,80){$E_G^+$}
 \put(-82,135){$E_G^-\setminus E_G^+$}
 \put(-160,35){$E$}
 \put(-175,119){$I$}
%\put(-185,110){\circle{45}}
\end{center}
% \end{picture}
\caption{Given a PrAG $G = (A, R, p)$, a subgraph $G^\prime$ is a stable subgraph w.r.t. $E$ if and only if arguments in $E$ appear, arguments in $I \cup (E_G^-\setminus E_G^+)$ do not appear, while arguments in any subset of $E_G^-$ may appear.}
\end{figure}

Prop3 means that for each argument  $\alpha$ in $A\setminus E$, if it appears in the subgraph, then it is attacked by $E$ (i.e., $\alpha\in E_G^+$). Given that $E$ is conflict-free and for every argument  that is not in $E$ it is attacked by $E$,  $E$ is a stable extension. So, by definition, the subgraph is a stable  subgraph. 

Formally, we have the following proposition.
%
%Second, under stable semantics, given a PrAG, a conflict-free set $E$ of arguments, a subgraph induced by a set $A^\prime\subseteq A$ is a stable subgraph if and only if $A^\prime = E\cup E_{A^\prime}^-$, i.e., all arguments in $E$ appear, a subset $B\in 2^{E^+}$ appear, and other arguments do not appear. Formally, we have the following theorem.

\begin{proposition}\label{the-st}
Let $G^p = (A,R, p)$ be a PrAG, $G=(A, R)$ be a corresponding argument graph, and $E\subseteq A$ be a conflict-free set of arguments. Then, for all $B\in 2^{E_G^+}$, $G_{\downarrow E\cup B}$ is a stable subgraph of $G$ with respect to $E$. 
%$G_{\downarrow A^\prime}$ is  a stable subgraph of $G$ with respect to $E$, if and only if the following properties hold:
%\begin{itemize}
%\item $E\subseteq {A}^\prime$, which means that all arguments in $E$ appear in $G^\prime$; and
%\item $A^\prime\subseteq E\cup E_G^+ $, which means that all arguments in $A\setminus (E\cup E_G^+) = I \cup (E_G^-\setminus E_G^+)$ do not appear in the subgraph. .
%For all $\alpha\in A^\prime\setminus E$, $\alpha^-\cap E\neq \emptyset$.
%\end{itemize}
\end{proposition}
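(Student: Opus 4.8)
The plan is to verify directly that $E$ satisfies the two defining conditions of a stable extension inside the induced subgraph $G_{\downarrow E\cup B}=(E\cup B, R')$, where $R'=R\cap((E\cup B)\times(E\cup B))$. By Definition~\ref{def-5}, once $E$ is shown to be a stable extension of $G_{\downarrow E\cup B}$, it follows immediately that $G_{\downarrow E\cup B}$ is a stable subgraph of $G$ with respect to $E$. So everything reduces to checking conflict-freeness of $E$ and the attack condition on the complement $(E\cup B)\setminus E$ within the subgraph.

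First I would handle conflict-freeness. Since $E$ is conflict-free in $G$ by hypothesis, and $R'$ is obtained by restricting $R$ to pairs of arguments in $E\cup B$, no new attacks among arguments of $E$ are introduced; hence $E$ remains conflict-free in $G_{\downarrow E\cup B}$. Next I would pin down the complement of $E$ in the subgraph. The key observation is that $E\cap E_G^+=\emptyset$: if some $\alpha\in E$ belonged to $E_G^+$, then $E$ would attack $\alpha$, contradicting conflict-freeness of $E$. Since $B\subseteq E_G^+$, this gives $B\cap E=\emptyset$, so that $(E\cup B)\setminus E = B$.

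It then remains to show that every argument in $B$ is attacked by $E$ inside the subgraph. Take any $\beta\in B$. Because $B\subseteq E_G^+$, by the definition of $E_G^+$ there exists $\alpha\in E$ with $(\alpha,\beta)\in R$. Both endpoints lie in $E\cup B$ (indeed $\alpha\in E$ and $\beta\in B$), so the edge survives the restriction, i.e.\ $(\alpha,\beta)\in R'$. Thus $\beta$ is attacked by $E$ in $G_{\downarrow E\cup B}$. Combining the two parts, $E$ is conflict-free and every argument of $(E\cup B)\setminus E=B$ is attacked by $E$, which is exactly the definition of a stable extension (Definition~\ref{Def-G-s}). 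Hence $E\in\mathcal{E}_{st}(G_{\downarrow E\cup B})$, and the conclusion follows.

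I do not anticipate a genuine obstacle here, since the statement is essentially a direct unfolding of the stable-semantics definition against the membership constraints imposed by $B\subseteq E_G^+$. The only point requiring a little care is bookkeeping about the induced subgraph: one must confirm that the relevant attacking edges are preserved under the restriction to $E\cup B$ (they are, because both endpoints remain present) and that $B$ cannot smuggle in arguments of $E$ (ruled out by conflict-freeness, as noted above). Once these two small facts are recorded explicitly, the verification of both stable-extension conditions is immediate.
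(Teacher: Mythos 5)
Your proof is correct and follows essentially the same route as the paper's: a direct verification that $E$ satisfies the two conditions of Definition~\ref{Def-G-s} for a stable extension in $G_{\downarrow E\cup B}$, with the complement $(E\cup B)\setminus E = B$ handled via $B\subseteq E_G^+$. The paper's version is terser (it leaves implicit the facts that $B\cap E=\emptyset$ and that the attacking edges survive the restriction to $E\cup B$), whereas you record these bookkeeping points explicitly, which is fine but not a different argument.
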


\begin{proof}
Since $E$ is conflict-free, to prove $E$ being a stable extension of $G_{\downarrow E\cup B}$, we only need to verify that $\forall \alpha\in (E\cup B)\setminus E = B$, $\alpha$ is attacked by $E$. Since $\alpha\in B \subseteq E_G^+$, by the definition of $E_G^+$, $\alpha$ is attacked by $E$.
\end{proof}

According to Proposition \ref{the-st}, the set of stable subgraphs $\rho^{st}(E)$ can be specified as follows:
\begin{equation} \label{for-stable}
\rho^{st}(E) = \{G_{\downarrow E\cup B} \mid B\in 2^{E_G^+}\}
\end{equation}

%By Theorem \ref{the-st} and formula (\ref{for-stable}), a stable subgraph $G_{\downarrow A^\prime}$ can also be characterized with two properties related to the appearance of arguments. So, the probability of $E$ being a stable extension can be directly computed without constructing and computing the subgraphs. 

Thirdly, under other semantics (complete, grounded and preferred), the set of remaining arguments $I = A\setminus (E\cup E_G^+\cup E_G^-)$ plays a very important role in characterizing $\sigma$-subgraphs. 

Let $G_{\downarrow E\cup B}$ (where $B\in 2^{I\cup E_G^+}$) be an admissible subgraph of $G$ with respect to $E$, and $B^\prime = B\cap I$. Whether $G_{\downarrow E\cup B}$ is a complete subgraph with respect to $E$ is determined by a property of the arguments in $B^\prime\in 2^I$. Intuitively, if the following property holds, then $G_{\downarrow E\cup B}$ is a complete subgraph with respect to $E$  (as illustrated in Figure 4):

%Now, based on the properties for characterizing admissible, we may define properties to characterize the subgraphs under other semantics.

%According to the relationship between complete extension and admissible extension, it holds that with respect to a given conflict-free set of arguments, every complete subgraph is an admissible subgraph, but not vice versa. However, in an admissible subgraph with respect to a set $E$,  if all arguments acceptable with respect to $E$ is in $E$, then the  admissible subgraph is a complete subgraph, in which $E$ is a complete extension of the subgraph. So,   given a PrAG $G^p=(A, R, p)$ and an admissible subgraph $G_{\downarrow A^\prime}$ with respect to  a conflict-free set $E$ of arguments,   it is a complete subgraph w.r.t $E$ if and only if the following additional property holds (as illustrated by Fig. 3):
\begin{description}
%\item[Prop1: ] It is a . 
\item[Prop4: ]For all  $\alpha\in B^\prime$, $\alpha$ is attacked by $B^\prime$. 
\end{description} 

This property means that for every remaining argument $\alpha\in B^\prime$, $\alpha$ is not acceptable with respect to $E$. Based on this property, we have the following proposition.

%Prop4 means that for each argument $\alpha$ which is not in $E$, $E_G^+$ or $E_G^-$, it is attacked by some argument $\beta$ that is also not in $E$, $E_G^+$ or $E_G^-$. As a result, $\alpha$ is not acceptable with respect to $E$. Otherwise, $\beta$ is attacked by $E$ and therefore  $\beta\in E_G^+$, which contradicts the assumption that $\beta$ is not in $E_G^+$. Given that $E$ is an admissible extension of the subgraph (according to Prop1 and Prop2) and no other argument appearing in the subgraph is acceptable with respect to $E$ (according to Prop3) , $E$ is a complete extension. So, by definition, the subgraph is a complete subgraph.

\definecolor{grey}{rgb}{0.98,0.98,1}
\definecolor{grey2}{rgb}{0.95,0.95,1}

\begin{figure}[!tp]
%  \begin{picture}(206,180)
\begin{center}
\includegraphics [width=0.8\textwidth]{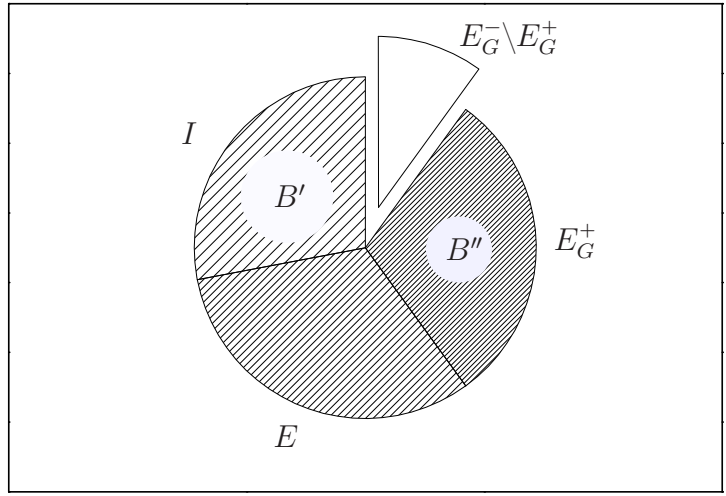}
 \put(-84,110){$E_G^+$}
 \put(-120,187){$E_G^-\setminus E_G^+$}
 \put(-190,35){$E$}
 \put(-225,150){$I$}
\put(-185,130){\textcolor{grey}{\circle*{35}}}
\put(-190,126){$B^\prime$}
\put(-120,110){\textcolor{grey2}{\circle*{25}}}
\put(-125,106){$B^{\prime\prime}$}
\end{center}
% \end{picture}
\caption{Given a PrAG $G = (A, R, p)$, a subgraph $G^\prime$ is a complete subgraph w.r.t. $E$ if and only if it is an admissible subgraph w.r.t. $E$, and each argument in $B^\prime$ is attacked by some arguments in $B^\prime$.}
\end{figure}

\begin{proposition} \label{theorem-com}
Let $G^p = (A,R, p)$ be a PrAG, $G=(A, R)$ be a corresponding argument graph, $E\subseteq A$ be a conflict-free set of arguments. For all $B\in 2^{I\cup E_G^+}$, $G_{\downarrow E\cup B}$ is a complete subgraph of $G$ with respect to $E$, if and only if %Let $G^{\prime\prime} = (A^{\prime\prime}, R^{\prime\prime})$, where $A^{\prime\prime} = args(G^\prime) \setminus (E\cup E^+)$ and $R^{\prime\prime} =R^\prime\cap(A^{\prime\prime}\times A^{\prime\prime})$. 
%Let $B^\prime = I \cap B$.
%$G_{\downarrow E\cup B}$ is a complete subgraph of $G$ with respect to $E$ if and only if the following property holds: 
%\begin{itemize}
%\item $E\subseteq {A}^\prime$;
%\item $(E_G^-\setminus E_G^+)\cap {A}^\prime = \emptyset$;
 $\forall \alpha\in  B^\prime$, $\alpha_{G}^- \cap B^\prime\neq \emptyset$. 
%\end{itemize}
\end{proposition}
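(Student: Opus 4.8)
The plan is to reduce the claim to the single outstanding requirement of completeness and then localize it to the arguments of $B$. By Proposition~\ref{the-1}, for every $B\in 2^{I\cup E_G^+}$ the subgraph $G_{\downarrow E\cup B}$ is already an admissible subgraph of $G$ with respect to $E$, so in $G_{\downarrow E\cup B}$ the set $E$ is conflict-free and each of its arguments is acceptable with respect to $E$. Hence, by the definition of a complete extension, $G_{\downarrow E\cup B}$ is a complete subgraph with respect to $E$ if and only if no argument outside $E$ is acceptable with respect to $E$ in $G_{\downarrow E\cup B}$. The arguments outside $E$ in the subgraph are exactly those of $B$, which splits as $B = B'\cup B''$ with $B' = B\cap I$ and $B'' = B\cap E_G^+$. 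So the whole statement comes down to deciding, for each argument of $B'\cup B''$, whether it is acceptable with respect to $E$.

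First I would dispose of the arguments in $B''$. Each $\beta\in B''\subseteq E_G^+$ is attacked by some $\gamma\in E$. If $\beta$ were acceptable with respect to $E$, then $E$ would have to attack that attacker $\gamma$; but $\gamma\in E$, so this would contradict the conflict-freeness of $E$. Thus no argument of $B''$ is ever acceptable with respect to $E$, and such arguments can never obstruct completeness.

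Next I would analyze the arguments in $B'$, which carry the real content. Fix $\alpha\in B'\subseteq I$. Its attackers inside the subgraph are $\alpha_G^-\cap(E\cup B)$. Because $\alpha\in I$ is not attacked by $E$ (by definition of $I$, $\alpha\notin E_G^+$), we have $\alpha_G^-\cap E=\emptyset$, so every attacker of $\alpha$ in the subgraph lies in $B'\cup B''$. Every attacker in $B''\subseteq E_G^+$ is attacked by $E$ and is therefore harmless for the acceptability test, whereas every attacker in $B'\subseteq I$ is not attacked by $E$. Consequently $\alpha$ is acceptable with respect to $E$ in $G_{\downarrow E\cup B}$ if and only if $\alpha$ has no attacker in $B'$, i.e. $\alpha_G^-\cap B'=\emptyset$. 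Combining this with the previous paragraph, $E$ is complete in $G_{\downarrow E\cup B}$ exactly when no $\alpha\in B'$ is acceptable with respect to $E$, that is, when $\alpha_G^-\cap B'\neq\emptyset$ for all $\alpha\in B'$, which is the asserted condition.

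The main obstacle is the bookkeeping of attackers across the four components: one must be careful that an attacker of $\alpha\in B'$ lying in $E_G^+$ is innocuous (being itself attacked by $E$), so that acceptability of $\alpha$ is governed solely by its attackers within $B'$, and equally careful that $\alpha\in I$ contributes no attacker from $E$. Once these interactions between $E$, $E_G^+$ and $I$ are pinned down, both directions of the equivalence follow simultaneously from the characterization of acceptability of the $B'$-arguments.
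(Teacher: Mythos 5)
Your proof is correct and follows essentially the same route as the paper's: invoke Proposition \ref{the-1} to obtain admissibility of $E$ in $G_{\downarrow E\cup B}$, then reduce completeness to the (non-)acceptability with respect to $E$ of the arguments in $B$, with the attackers of each $\alpha\in B'$ tracked across the components $E$, $E_G^+$ and $I$. If anything, your treatment is slightly more thorough than the paper's, since you explicitly check that arguments in $B''=B\cap E_G^+$ can never be acceptable with respect to $E$ (their attacker in $E$ cannot be counter-attacked by the conflict-free set $E$), a case the paper's own proof leaves implicit.
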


\begin{proof}
Since $B\in 2^{I\cup E_G^+}$, according to Proposition  \ref{the-1},  $G_{\downarrow E\cup B}$ is an admissible subgraph.

%, and $\forall \alpha\in B^\prime$, $\alpha_{G}^-\cap B^\prime \neq \emptyset$, we have the following proof.

($\Rightarrow$:) When $G_{\downarrow E\cup B}$ is a complete subgraph of $G$ with respect to $E$, assume that $\exists \alpha\in  B^\prime$ such that $\alpha_G^- \cap B^\prime= \emptyset$. It follows that $\alpha$ is acceptable with respect to $E$, and therefore $E$ is not a complete extension, contradicting $G_{\downarrow E\cup B}$ is a complete subgraph with respect to $E$. 

($\Leftarrow$:)
{For all $\alpha\in  B^\prime$, for all $\beta\in \alpha_{G}^-\cap B^\prime$, $\beta$ can not be attacked by the arguments in $E$. Otherwise, $\beta$ is in $E_G^+$, contradicting $\beta \in B^\prime \subseteq I$ and $I \cap E_G^+ = \emptyset$. Since $\alpha_{G}^-\cap B^\prime \neq \emptyset$, $\alpha$ is not acceptable with respect to $E$.  Since $E$ is an admissible set}, $E$ is a complete extension. According to Definition \ref{def-5},  $G_{\downarrow E\cup B}$ is a complete subgraph of $G$ with respect to $E$.
\end{proof}

According to Proposition \ref{theorem-com}, the set of complete subgraphs $\rho^{co}(E)$ can be specified as follows:
\begin{equation} \label{formula-compl}
\rho^{co}(E) = \{G_{\downarrow E\cup B}  \mid (B\in 2^{I\cup E_G^+})\wedge(\forall \alpha\in B^\prime: \alpha_{G}^-\cap B^\prime\neq\emptyset)\}
\end{equation}

\begin{example} \label{ex-co}
%Continue Example \ref{ex-ad}. We have $I = \{c,d\}$, $$. 
%
Among the eight admissible subgraphs, except $G_1^2$ and $G_1^6$, others are complete subgraphs with respect to $\{a\}$  (as shown in the fourth column of Table 2), i.e., $\rho^{co}(\{a\}) =   \{G^1_1, G^3_1,G^4_1, G^5_1, G^7_1,  G^8_1\}$.

 With regard to ${G_1^2}$, $B^\prime = \{c\}$. Since $c_{G_1}^- =  \emptyset$,  $G_1^2$ is not a complete subgraph with respect to $\{a\}$. Similarly, $G_1^6$ is not a complete subgraph with respect to $\{a\}$. 
\end{example}

%\begin{theorem} \label{th-st}
%Let $G^p = (A,R, p)$ be a PrAG, $E\subseteq A$ be a conflict-free set of arguments, and $G^\prime = (A^\prime, R^\prime)$ be a complete subgraph of $G^p$ with respect to $E$. Then, ${G}^\prime$ is a stable subgraph of $G^p$ with respect to $E$ if and only if the following condition holds: $E\cup E^+ = args(G^\prime)$.
%\end{theorem}
%
%According to theorem \ref{th-st}, the function $\rho^{st}$ is specified as follows:
%\begin{equation} \label{formula-st}
%\rho^{st}(E) = \{G^\prime\in \rho^{co}(E)  \mid E\cup E^+ = args(G^\prime)\}
%\end{equation}
%
%
%\begin{example}
%Continue Example \ref{ex-co}. According to Theorem \ref{th-st}, it is not dif and only ificult to verify that among the six admissible subgraphs, only $G_1^4$ and $G_1^8$ are stable subgraphs with respect to $\{a\}$  (as shown in the fifth column of Table 2), i.e., $\rho^{st}(\{a\}) =   \{G^4_1,  G^8_1\}$.
%\end{example}
Then, under preferred semantics, given a complete subgraph $G_{\downarrow E\cup B}$ (where $B\in 2^{I\cup E_G^+}$), whether $G_{\downarrow E\cup B}$ is a preferred subgraph is determined by a property of the subgraph induced by $B^\prime\in 2^I$. More specifically, if the following property holds, then $G_{\downarrow E\cup B}$ is a preferred subgraph:

%The above theorems and formulas show that under admissible, complete, and stable semantics, the set of subgraphs with respect to an extension can be identified without computing the extensions of subgraphs. However, under preferred and grounded semantics, partial computation of extensions is needed.  

%More specifically, under preferred semantics, it is required that $E$ should be a maximal complete extension of the subgraph. In other words, no arguments in $I$ that appear in the subgraph can be added to $E$ to obtain a set $E^\prime$ such that $E^\prime$ is complete extension of the subgraph, and $E\subset E^\prime$. Since preferred semantics satisfies directionality property, we may decompose the subgraph into two components: one is induced by $B^\prime$, another is induced by $E\cup (B\setminus B^\prime)$. 

%So,   given a PrAG $G^p=(A, R, p)$ and a complete subgraph $G_{\downarrow E\cup B}$ with respect to  a conflict-free set $E$ of arguments,   it is a preferred subgraph w.r.t $E$ if and only if the following additional property holds:
\begin{description}
%\item[Prop1: ] It is a . 
\item[Prop5: ] $G_{\downarrow B^\prime}$ has only an empty admissible extension. 
\end{description}

%Formally, we have the following theorem. 

\begin{proposition}\label{theorem-pr}
Let $G^p = (A,R, p)$ be a PrAG, and $E\subseteq A$ be a conflict-free set of arguments. Then, for all $B\in 2^{I\cup E_G^+}$, $G_{\downarrow E\cup B}$ %be a complete subgraph of $G$ with respect to $E$. 
%Let $G^{\prime\prime} = (A^{\prime\prime}, R^{\prime\prime})$, where $A^{\prime\prime} = A^\prime \setminus (E\cup E^+)$ and $R^{\prime\prime} =R^\prime\cap(A^{\prime\prime}\times A^{\prime\prime})$. 
is a preferred subgraph of $G$ with respect to $E$ if and only if  $G_{\downarrow E\cup B}$ is a complete subgraph of $G$ with respect to $E$, and $\mathcal{E}_{ad}(G_{\downarrow B^\prime}) = \{\emptyset\}$.
\end{proposition}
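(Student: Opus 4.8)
The plan is to reduce the maximality of $E$ among the admissible sets of $G_{\downarrow E\cup B}$ to a statement about the single induced subgraph $G_{\downarrow B'}$, where I write $B' = B\cap I$ and note that $B\setminus B' = B\cap E_G^+$. By Proposition \ref{the-1} the set $E$ is already admissible in $G_{\downarrow E\cup B}$, so the assertion that $G_{\downarrow E\cup B}$ is a preferred subgraph with respect to $E$ means exactly that $E$ is a \emph{maximal} admissible set of $G_{\downarrow E\cup B}$. The technical core of the argument is a decoupling lemma: for every $S\subseteq B'$, the set $E\cup S$ is admissible in $G_{\downarrow E\cup B}$ if and only if $S$ is admissible in $G_{\downarrow B'}$. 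Once this is established, the proposition follows by comparing the admissible sets of $G_{\downarrow B'}$ with the singleton $\{\emptyset\}$.

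To prove the decoupling lemma I would exploit three structural facts about $G_{\downarrow E\cup B}$. First, since $B'\subseteq I$ and $I\cap(E\cup E_G^+\cup E_G^-)=\emptyset$, there is no attack between $E$ and $B'$; hence conflict-freeness of $E\cup S$ is equivalent to conflict-freeness of $S$ in $G_{\downarrow B'}$, and every argument in $E$ remains acceptable with respect to any superset of $E$. Second, every argument of $B\setminus B' = B\cap E_G^+$ is, by the definition of $E_G^+$, attacked by $E$. Third, for $\alpha\in B'$ its attackers inside $G_{\downarrow E\cup B}$ all lie in $B'\cup(B\cap E_G^+)$, with none in $E$ (because $\alpha\notin E_G^+$), and those in $B\cap E_G^+$ are automatically counter-attacked by $E$. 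Consequently, for $\alpha\in S\subseteq B'$, $\alpha$ is acceptable with respect to $E\cup S$ in $G_{\downarrow E\cup B}$ if and only if every attacker of $\alpha$ lying in $B'$ is attacked by $S$, which is precisely the condition for $\alpha$ to be acceptable with respect to $S$ in $G_{\downarrow B'}$ (whose attacker set of $\alpha$ is $\alpha_G^-\cap B'$). Assembling these observations yields both directions of the lemma.

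For the proposition itself I would then argue both implications. $(\Rightarrow)$ Assume $E$ is a maximal admissible set. If $G_{\downarrow B'}$ had a nonempty admissible set $S$, the lemma would produce an admissible set $E\cup S\supsetneq E$, contradicting maximality; hence $\mathcal{E}_{ad}(G_{\downarrow B'})=\{\emptyset\}$. Completeness of $E$ then comes for free: $\mathcal{E}_{ad}(G_{\downarrow B'})=\{\emptyset\}$ forces every $\alpha\in B'$ to have an attacker in $B'$ (otherwise $\{\alpha\}$ would itself be a nonempty admissible set of $G_{\downarrow B'}$), and this is exactly the criterion of Proposition \ref{theorem-com}, so $G_{\downarrow E\cup B}$ is a complete subgraph with respect to $E$. $(\Leftarrow)$ Assume $G_{\downarrow E\cup B}$ is complete with respect to $E$ and $\mathcal{E}_{ad}(G_{\downarrow B'})=\{\emptyset\}$. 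Suppose $E$ were not maximal, witnessed by an admissible $E^\ast\supsetneq E$. No argument of $B\cap E_G^+$ can belong to $E^\ast$, since it is attacked by $E\subseteq E^\ast$ and would violate conflict-freeness; therefore $S := E^\ast\setminus E$ is a nonempty subset of $B'$, and by the lemma $S$ is a nonempty admissible set of $G_{\downarrow B'}$, contradicting $\mathcal{E}_{ad}(G_{\downarrow B'})=\{\emptyset\}$. Hence $E$ is maximal admissible, i.e.\ $G_{\downarrow E\cup B}$ is a preferred subgraph with respect to $E$; the case $B'=\emptyset$ is handled by the convention $\mathcal{E}_{ad}(G_{\downarrow B'})=\{\emptyset\}$ from Definition \ref{def-5} and its surrounding conventions.

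The main obstacle is establishing the decoupling lemma cleanly, and in particular verifying that the defence of a $B'$-argument never needs to recruit arguments from $B\cap E_G^+$, so that acceptability genuinely localizes to $G_{\downarrow B'}$; the pivotal step making the reduction exact is the observation that any admissible proper superset of $E$ is forced to add only arguments from $I$. I would also flag that, given $\mathcal{E}_{ad}(G_{\downarrow B'})=\{\emptyset\}$, the completeness clause on the right-hand side is in fact redundant, since in the $(\Leftarrow)$ direction maximality already follows from the admissibility condition together with Proposition \ref{the-1}; the clause is retained only to parallel the characterizations obtained under the other semantics.
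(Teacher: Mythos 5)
Your proof is correct and takes essentially the same route as the paper's: the forward direction lifts a nonempty admissible set $E'$ of $G_{\downarrow B^\prime}$ to an admissible proper superset $E\cup E'$ of $E$ (exactly the paper's contradiction argument), and the backward direction observes that any admissible superset of $E$ in $G_{\downarrow E\cup B}$ can only add arguments from $B^\prime$, which is what the paper's terse $(\Leftarrow)$ step implicitly relies on. Your explicit bidirectional decoupling lemma makes rigorous what the paper leaves as a sketch, and your side observation---that the completeness clause is redundant because $\mathcal{E}_{ad}(G_{\downarrow B^\prime})=\{\emptyset\}$ already forces every $\alpha\in B^\prime$ to have an attacker in $B^\prime$, hence completeness via Proposition \ref{theorem-com}---is a correct refinement that the paper does not note.
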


\begin{proof}
$(\Rightarrow)$: Assume the contrary, i.e., $G_{\downarrow B^\prime}$ has a non-empty admissible extension $E^\prime\subseteq B^\prime$. It follows that $E\cup E^\prime$ is admissible, in that:
\begin{itemize}
\item $E\cup E^\prime$ is conflict-free: both $E$ and $E^\prime$ are conflict-free; $E$ does not attack $E^\prime$ (otherwise, $E^\prime\cap E_G^+\neq \emptyset$, contradicting $E^\prime\subseteq B^\prime$); $E^\prime$ does not attack $E$ (otherwise, $E$ attacks $E^\prime$, contradiction).
\item $\forall \alpha \in E^\prime$, $\alpha$ is acceptable with respect to $E\cup E^\prime$.
\end{itemize}
So,  $E\cup E^\prime$ is an admissible extension of $G_{\downarrow E\cup B}$. So, $E$ is not a preferred extension of $G_{\downarrow E\cup B}$, contradicting ``$G_{\downarrow E\cup B}$ is a preferred subgraph of $G$ with respect to $E$''.

$(\Leftarrow)$: Since $G_{\downarrow B^\prime}$ has only one empty admissible extension, no argument in $B^\prime$ is acceptable with respect to $E$ or any conflict-free superset of $E$. It turns out that $E$ is a preferred extension of $G_{\downarrow E\cup B}$, i.e., $G_{\downarrow E\cup B}$ is a preferred subgraph of $G$ with respect to $E$.
\end{proof}

According to Proposition \ref{theorem-pr}, the set of preferred subgraphs $\rho^{pr}(E)$ can be specified as follows:
\begin{equation} \label{formalus-prf}
\rho^{pr}(E) = \{G_{\downarrow E\cup B}\in \rho^{co}(E)  \mid \mathcal{E}_{ad}(G_{\downarrow B^\prime})= \{\emptyset\}\}
\end{equation}

\begin{example}
Continue Example \ref{ex-co}. Among the six complete subgraphs, except $G_1^1$ and $G_1^5$, others are preferred subgraphs with respect to $\{a\}$  (as shown in the sixth column of Table 2). 

With regard to ${G_1^1}$, $B^\prime = \{c,d\}$. Then, $ \mathcal{E}_{ad}(G_{\downarrow B^\prime})= \{\{c\}\} \neq \{\emptyset\}$. So, $G_1^1$ is not a preferred subgraph with respect to $\{a\}$. Similarly, $G_1^5$ is not a preferred subgraph with respect to $\{a\}$. 
\end{example}

Finally, given a complete subgraph $G_{\downarrow E\cup B}$, let $B^{\prime\prime} = B\cap E_G^+$. In order to verify whether it is a grounded subgraph, we may  simply check whether $G_{\downarrow E\cup B^{\prime\prime}}$ has a grounded extension $E$. To simplify the computation, we may divide $B^{\prime\prime}$ in to two disjoint subsets $B^{\prime\prime}_1$ and $B^{\prime\prime}_2$, where $B^{\prime\prime}_1 = B\cap (E_G^+\setminus E_G^-)$ and $B^{\prime\prime}_2 = B\cap (E_G^+\cap E_G^-)$, as illustrated in Figure 5. Note that arguments in $E_G^+\setminus E_G^-$ do not affect the the status of arguments in $E$. So, if the following property holds, then $G_{\downarrow E\cup B}$ is a grounded subgraph:

\begin{description}
%\item[Prop1: ] It is a . 
\item[Prop6: ] $G_{\downarrow E\cup B^{\prime\prime} _2}$ has a grounded extension $E$. 
\end{description}

\begin{figure}[!htp]
%  \begin{picture}(206,180)
\begin{center}
\includegraphics [width=0.8\textwidth]{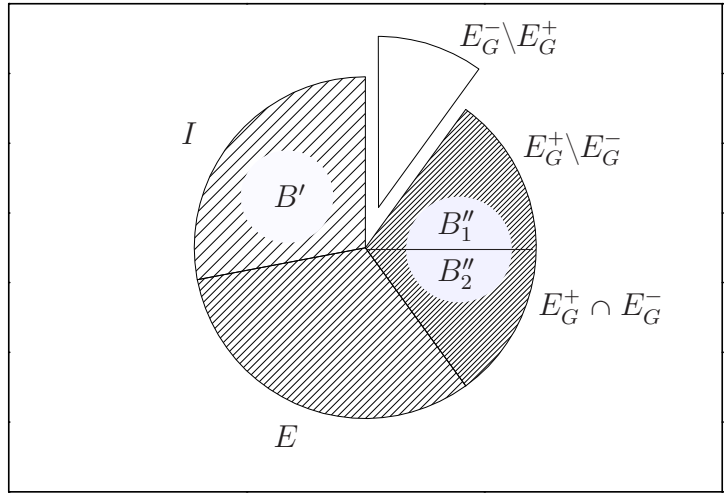}
 \put(-96,145){$E_G^+\setminus E_G^-$}
 \put(-90,85){$E_G^+\cap E_G^-$}
 \put(-120,187){$E_G^-\setminus E_G^+$}
 \put(-190,35){$E$}
 \put(-225,150){$I$}
\put(-185,130){\textcolor{grey}{\circle*{35}}}
\put(-190,126){$B^\prime$}
\put(-120,110){\textcolor{grey2}{\circle*{40}}}
 \put(-155,110){\line(6,-0){63}}
\put(-132,116){ $B^{\prime\prime}_1$}
\put(-132,98){ $B^{\prime\prime}_2$}
\end{center}
% \end{picture}
\caption{Given a PrAG $G = (A, R, p)$, a subgraph $G^\prime$ is a grounded subgraph w.r.t. $E$ if and only if it is a complete subgraph w.r.t. $E$, and $G_{\downarrow E\cup B^{\prime\prime} _2}$ has a grounded extension $E$.}
\end{figure}

\begin{proposition}\label{theorem-gr}
Let $G^p = (A,R, p)$ be a PrAG, $E\subseteq A$ be a conflict-free set of arguments. For all $B\in 2^{I\cup E_G^+}$, $G_{\downarrow E\cup B}$ is a grounded subgraph of $G$ with respect to $E$ if and only if $G_{\downarrow E\cup B}$ is a complete subgraph of $G$ with respect to $E$, and $E$ is a grounded extension of $G_{\downarrow E\cup B^{\prime\prime} _2}$.
%
%
%, and $G^\prime = (A^\prime, R^\prime)$ be a complete subgraph of $G^p$ with respect to $E$. Let $G^{\prime\prime} = (A^{\prime\prime}, R^{\prime\prime})$, where $A^{\prime\prime} = E\cup E^+$ and $R^{\prime\prime} =R^\prime\cap(A^{\prime\prime}\times A^{\prime\prime})$. Then,   ${G}^\prime$ is a grounded subgraph of $G^p$ with respect to $E$ if and only if the following condition holds: $E$ is a grounded extension of $G^{\prime\prime}$.
\end{proposition}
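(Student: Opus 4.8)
The plan is to lean on the standard fixed-point characterisation of the grounded extension. Writing $F_{H}(S)=\{\alpha : \alpha \text{ is acceptable with respect to } S \text{ in } H\}$ for the defence operator of an argument graph $H$, the grounded extension of $H$ is the least fixed point $\bigcup_{n\ge 0}F_H^{\,n}(\emptyset)$, obtained by a monotone increasing iteration from $\emptyset$. I would reduce the whole biconditional to one structural claim: deleting the arguments $B'=B\cap I$ and $B''_1=B\cap(E_G^+\setminus E_G^-)$ from $G_{\downarrow E\cup B}$, so as to pass to $G_{\downarrow E\cup B''_2}$, leaves the grounded status of $E$ unchanged.

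First I would record three facts. (F1) In $G_{\downarrow E\cup B}$ the only arguments attacking $E$ lie in $B''_2$: the attackers of $E$ are $E_G^-\cap(E\cup B)$, and since $E$ is conflict-free this reduces to $E_G^-\cap B=B\cap(E_G^+\cap E_G^-)=B''_2$, using $B\subseteq I\cup E_G^+$ and $I\cap E_G^-=\emptyset$. (F2) $E$ is always a complete extension of $G_{\downarrow E\cup B''_2}$: it is conflict-free and admissible there, since its attackers $B''_2\subseteq E_G^+$ are all counter-attacked by $E$, and no $\beta\in B''_2$ can be acceptable with respect to $E$, because $\beta\in E_G^+$ is attacked by some $\delta\in E$ which, by conflict-freeness of $E$, is itself not attacked by $E$. (F3) For every $S\subseteq E$ one has $F_{G_{\downarrow E\cup B}}(S)\cap E=F_{G_{\downarrow E\cup B''_2}}(S)\cap E$: by (F1) each $\alpha\in E$ has the same attacking set, a subset of $B''_2$, in both graphs, and whether $S\subseteq E$ counter-attacks a given $\beta\in B''_2$ depends only on the $E\to B''_2$ edges, which the two graphs share.

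With these in hand, both directions follow by comparing the two least-fixed-point iterations from $\emptyset$. For ($\Rightarrow$), if $E$ is grounded in $G_{\downarrow E\cup B}$ then it is complete there (grounded implies complete), which gives the first conjunct; the iterates of $F_{G_{\downarrow E\cup B}}$ then remain inside $E$, and by (F2) so do the iterates of $F_{G_{\downarrow E\cup B''_2}}$, so a routine induction using (F3) and monotonicity (each operator fixes $E$, hence maps subsets of $E$ into $E$) shows the two sequences coincide at every stage; their common limit being $E$ means the grounded extension of $G_{\downarrow E\cup B''_2}$ is again $E$. For ($\Leftarrow$), completeness of $G_{\downarrow E\cup B}$ with respect to $E$ keeps the iterates of $F_{G_{\downarrow E\cup B}}$ inside $E$, the hypothesis that $E$ is grounded in $G_{\downarrow E\cup B''_2}$ keeps the other iterates inside $E$, and the same induction identifies the two sequences; since their common limit $E$ is a fixed point of $F_{G_{\downarrow E\cup B}}$, it is the grounded extension of $G_{\downarrow E\cup B}$.

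The main obstacle is not the fixed-point induction but ensuring the deleted arguments cannot interfere. The delicate point is (F2) and (F3): one must rule out that some $\beta\in B''_2$ becomes acceptable with respect to $E$ once its attackers in $B'$ are removed, and that arguments of $B'$ or $B''_1$ ever enter the grounded extension. Both are settled by the observation underlying (F2) — an argument counter-attacked by $E$ is never acceptable with respect to $E$, because the counter-attacker lies in $E$ and is unattacked by the conflict-free set $E$ — together with Proposition \ref{theorem-com}, whose completeness requirement forces the grounded extension into $E$ and so excludes $B'$ and $B''_1$ from the outset.
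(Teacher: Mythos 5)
Your proof is correct, but it takes a genuinely different route from the paper's. The paper proves the forward direction by noting that grounded implies complete and then appealing to the directionality principle of grounded semantics \cite{Baroni:AIJ}: since $E$ receives no attacks from $B^\prime$ or $B^{\prime\prime}_1$, deleting those arguments is claimed to preserve the grounded status of $E$; the backward direction is by contradiction, using the fact that the grounded extension is contained in every complete extension (so if $E$ were not grounded in $G_{\downarrow E\cup B}$, the grounded extension $E^\prime$ would satisfy $E^\prime\subset E$) and then re-applying the forward argument to $E^\prime$. You instead work with Dung's characteristic function and identify, stage by stage, the least-fixed-point iterations in the two graphs, using (F1)--(F3) to show that both iterations stay inside $E$ and coincide there, which handles both directions of the biconditional by one and the same induction. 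This costs more bookkeeping but buys rigor at exactly the point where the paper is loose: directionality as stated in \cite{Baroni:AIJ} applies to \emph{unattacked} sets, and $E\cup B^{\prime\prime}_2$ need not be unattacked in $G_{\downarrow E\cup B}$, because arguments of $B^\prime$ or $B^{\prime\prime}_1$ may well attack arguments of $B^{\prime\prime}_2$; your observation (F3) --- that such attacks are invisible to the iteration so long as it is confined to $E$ --- is precisely what justifies the deletion step. One point to keep explicit in a final write-up: the identification of the two iteration sequences requires both to remain inside $E$, which you correctly obtain from $E$ being a fixed point of the respective operator ((F2) on one side, the completeness or groundedness hypothesis on the other); (F3) alone, without that confinement, would not suffice.
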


\begin{proof}
$(\Rightarrow)$: Since $G_{\downarrow E\cup B}$ is a grounded subgraph of $G$ with respect to $E$, it holds that $E$ is the grounded extension of $G_{\downarrow E\cup B}$. First, since a grounded extension is also a complete extension, $E$ is a complete extension of $G_{\downarrow E\cup B}$, i.e., $G_{\downarrow E\cup B}$ is a complete subgraph of $G$ with respect to $E$. Second, given that  $E$ is the grounded extension of $G_{\downarrow E\cup B}$, since $E$ does not receive any attacks from $B^\prime$ and $B_1^{\prime\prime}$, according to the directionality of grounded semantics \cite{Baroni:AIJ}, it holds that $E$ is the grounded extension of $G_{\downarrow E\cup B^{\prime\prime} _2}$, where $B^{\prime\prime} _2 = B\setminus (B^\prime\cup B_1^{\prime\prime})$. 

$(\Leftarrow)$: Since $G_{\downarrow E\cup B}$ is a complete subgraph of $G$ with respect to $E$, it holds that $E$ is a complete extension of $G_{\downarrow E\cup B}$. Now, we need to verify that $E$ is a minimal complete extension of $G_{\downarrow E\cup B}$.  Assume the contrary. There exists $E^\prime \subset E$ such that $E^\prime$ is a grounded extension of $G_{\downarrow E\cup B}$. According to the previous proof, it turns out that $E^\prime$ is a grounded extension of $G_{\downarrow E\cup B^{\prime\prime} _2}$, contradicting $E$ is a grounded extension of $G_{\downarrow E\cup B^{\prime\prime} _2}$.
\end{proof}

According to Proposition \ref{theorem-gr}, the set of grounded subgraphs $\rho^{gr}(E)$ can be specified as follows:
\begin{equation} \label{formalus-gr}
\rho^{gr}(E) = \{G_{\downarrow E\cup B}\in \rho^{co}(E)  \mid \mathcal{E}_{gr}(G_{\downarrow E\cup B^{\prime\prime} _2})= \{E\}\}
\end{equation}

\begin{example}
Continue Example \ref{ex-co}. Among the six complete subgraphs, $G_1^5$ and $G_1^7$ and  $G_1^8$ are grounded subgraphs with respect to $\{a\}$  (as shown in the last column of Table 2). 
\end{example}

\section{Semantics of probabilistic argumentation}
According to the theory introduced in the previous section, given a PrAG $G^p = (A,R, p)$, a conflict-free set of arguments $E\subseteq A$ and a semantics $\sigma\in\{ad, co, pr, gr, st\}$, a set of $\sigma$-subgraphs with respect to $E$ can be specified in terms of different properties related to $E$. Given \textbf{Prop1} - \textbf{Prop6} and  formulas (\ref{for-admiss}) - (\ref{formalus-gr}), we may define semantics of probabilistic argumentation by the following two approaches.

In the first place, according to formulas (\ref{for-admiss}) - (\ref{formalus-gr}) and (\ref{formula-2}), semantics of probabilistic argumentation, i.e., the probability of $E$ being a $\sigma$-extension (denoted as $p(E^\sigma)$), can be directly represented as follows.
\begin{eqnarray}
p(E^\sigma) &=& \Sigma_{G^\prime\in \rho^{\sigma}(E)}\,p(G^\prime) \label{formula-ns}
\end{eqnarray}

Note that $Q_\sigma(E)$ in formula (\ref{formula-2}) is replaced by $\rho^{\sigma}(E)$ in formula (\ref{formula-ns}).  

In this approach, although the characterized subgraphs are explicitly represented, they are not constructed blindly, but defined according to specific properties.  Therefore, the construction of most irrelevant subgraphs is avoided. Meanwhile, under admissible, complete and stable semantics, no computation of extensions is needed, while under preferred and grounded semantics, only the extension of the subgraphs induced by $B^\prime$ (resp. $E\cup B^{\prime\prime}$) is needed. Note that the number (resp. the average size) of  the subgraphs induced by $B^\prime$ (resp. $E\cup B^{\prime\prime}$) is usually much smaller than that of the subgraphs induced by $A^\prime\in 2^A$.  

In the second place,  a more efficient approach to define semantics of probabilistic argumentation is through directly using properties for characterizing subgraphs, such that the characterized subgraphs can be kept implicit  as much as possible. Now, let us introduce this approach under different semantics. 

First, under admissible semantics, according to \textbf{Prop1} and \textbf{Prop2},
%
%according to Theorem \ref{the-1}, each admissible subgraph with respect to an extension $E$ is characterized by the conditions under which all arguments in $E$ appear, while all arguments in $E^-\setminus E^+$ do not appear. In other words, the probability of a set of arguments being an admissible extension can be evaluated by the probabilities of arguments appearing or not appearing, without constructing the subgraphs and computing their extensions.
%
we have the following proposition.

\begin{proposition} \label{th-ad-sub}
Let $G^p = (A,R, p)$ be a PrAG, and $E\subseteq A$ be a conflict-free set of arguments. It holds that:
\begin{eqnarray}
p(E^{ad}) &=&  \Pi_{\alpha\in E}p(\alpha)\times\Pi_{\beta\in E_G^-\setminus E_G^+}p(\bar{\beta}) \label{formula-n-11c}  \nonumber
\end{eqnarray}
\end{proposition}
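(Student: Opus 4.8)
The plan is to evaluate the sum in formula (\ref{formula-ns}) directly, using the explicit description of $\rho^{ad}(E)$ furnished by formula (\ref{for-admiss}) together with the factored form of $p(G^\prime)$ in formula (\ref{formula-subgr}). First I would rewrite
\begin{eqnarray*}
p(E^{ad}) &=& \Sigma_{G^\prime\in\rho^{ad}(E)}p(G^\prime) = \Sigma_{B\in 2^{I\cup E_G^+}}p(G_{\downarrow E\cup B}),
\end{eqnarray*}
so that the index set of the summation is exactly the power set of $S := I\cup E_G^+$. The decisive structural fact, recorded in the caption of Figure 1, is that $E$, $E_G^-\setminus E_G^+$, $E_G^+$ and $I$ partition $A$; hence for $A^\prime = E\cup B$ with $B\subseteq S$ the complement splits as the disjoint union $A\setminus A^\prime = (E_G^-\setminus E_G^+)\cup(S\setminus B)$.

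Next I would expand each $p(G_{\downarrow E\cup B})$ via formula (\ref{formula-subgr}) and split the two products along this partition:
\begin{eqnarray*}
p(G_{\downarrow E\cup B}) &=& \Big(\Pi_{\alpha\in E}p(\alpha)\Big)\Big(\Pi_{\alpha\in B}p(\alpha)\Big)\Big(\Pi_{\beta\in E_G^-\setminus E_G^+}p(\bar\beta)\Big)\Big(\Pi_{\gamma\in S\setminus B}p(\bar\gamma)\Big).
\end{eqnarray*}
The first and third factors do not depend on $B$, so they pull out of the summation over $B\subseteq S$, leaving
\begin{eqnarray*}
p(E^{ad}) &=& \Big(\Pi_{\alpha\in E}p(\alpha)\Big)\Big(\Pi_{\beta\in E_G^-\setminus E_G^+}p(\bar\beta)\Big)\cdot\Sigma_{B\subseteq S}\Big(\Pi_{\alpha\in B}p(\alpha)\Big)\Big(\Pi_{\gamma\in S\setminus B}p(\bar\gamma)\Big).
\end{eqnarray*}

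The remaining work is the combinatorial identity $\Sigma_{B\subseteq S}\Pi_{\alpha\in B}p(\alpha)\,\Pi_{\gamma\in S\setminus B}p(\bar\gamma) = 1$, which I would establish by expanding the product $\Pi_{\gamma\in S}\big(p(\gamma)+p(\bar\gamma)\big)$ by the distributive law: each term of the expansion corresponds to choosing, for every $\gamma\in S$, either the factor $p(\gamma)$ (placing $\gamma$ in $B$) or the factor $p(\bar\gamma)$ (keeping $\gamma$ out of $B$), so the full expansion is precisely the subset-sum above. Since $p(\gamma)+p(\bar\gamma) = p(\gamma)+(1-p(\gamma)) = 1$ for every $\gamma$, this product equals $1$, and the claimed formula follows. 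Conceptually the identity just says that the arguments in $I\cup E_G^+$ are unconstrained: by Proposition \ref{the-1} their appearance never interferes with $E$ being admissible, so they marginalize out and contribute total probability $1$.

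I expect the only genuine subtlety to be bookkeeping, namely using the partition of $A$ correctly so that no argument is double-counted and the complement $A\setminus(E\cup B)$ is split cleanly into the constrained block $E_G^-\setminus E_G^+$ and the free block $S\setminus B$. The combinatorial step itself is routine, being nothing more than the statement that the probabilities of all sub-worlds on $S$ sum to $1$, and crucially no extension computation is needed, since Proposition \ref{the-1} already guarantees that every $G_{\downarrow E\cup B}$ with $B\subseteq I\cup E_G^+$ is an admissible subgraph with respect to $E$.
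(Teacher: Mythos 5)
Your proposal is correct and follows essentially the same route as the paper's own proof: re-index the sum in formula (\ref{formula-ns}) over $B\in 2^{I\cup E_G^+}$ via formula (\ref{for-admiss}), factor each $p(G_{\downarrow E\cup B})$ along the partition of $A$ into $E$, $E_G^-\setminus E_G^+$, $B$ and $(I\cup E_G^+)\setminus B$, pull out the $B$-independent factors, and observe that the remaining subset-sum equals $1$. The only difference is that you explicitly justify that identity by expanding $\Pi_{\gamma\in S}\bigl(p(\gamma)+p(\bar{\gamma})\bigr)$, a step the paper simply asserts.
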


\begin{proof}
%Let $\Phi = A\setminus(E\cup (E^-\setminus E^+))$. 
According to formulas (\ref{for-admiss}) and (\ref{formula-ns}), $p(E^{ad}) = \Sigma_{G^\prime\in \rho^{ad}(E)}\,p(G^\prime) =  \Sigma_{B\in 2^{I\cup E_G^+}}\,p(G_{\downarrow E\cup B})$.  Since in $G_{\downarrow E\cup B}$,
\begin{itemize}
\item every argument in $E$ appears, 
\item every argument in $E_G^-\setminus E_G^+$ does not appear, 
\item every argument in $B$ appears, and
\item and every argument in $(I\cup E_G^+)\setminus B$ does not appear,
\end{itemize}
it holds that
$p(G_{\downarrow E\cup B})= \Pi_{\alpha\in E}p(\alpha)\times \Pi_{\beta\in E_G^-\setminus E_G^+}p(\bar{\beta})\times \Pi_{\gamma\in B}p(\gamma)\times \Pi_{\eta\in (I\cup E_G^+)\setminus B}p(\bar{\eta})$.
Since $ \Sigma_{B\in 2^{I\cup E_G^+}} (\Pi_{\gamma\in B}p(\gamma)\times \Pi_{\eta\in ({I\cup E_G^+})\setminus B}p(\bar{\eta})) =1$, we may conclude that:
\begin{eqnarray*}
p(E^{ad}) &=& \Sigma_{B\in 2^{I\cup E_G^+}}\,p(G_{\downarrow E\cup B})\\
&=& \Sigma_{B\in 2^{I\cup E_G^+}} (\Pi_{\alpha\in E}p(\alpha)\times \Pi_{\beta\in E_G^-\setminus E_G^+}p(\bar{\beta})\times\Pi_{\gamma\in B}p(\gamma)\times \Pi_{\eta\in ({I\cup E_G^+})\setminus B}p(\bar{\eta}))\\
&=& (\Pi_{\alpha\in E}p(\alpha)\times \Pi_{\beta\in E_G^-\setminus E_G^+}p(\bar{\beta})) \times \Sigma_{B\in 2^{I\cup E_G^+}} (\Pi_{\gamma\in B}p(\gamma)\times \Pi_{\eta\in ({I\cup E_G^+})\setminus B}p(\bar{\eta})) \\
&=&  \Pi_{\alpha\in E}p(\alpha)\times \Pi_{\beta\in E_G^-\setminus E_G^+}p(\bar{\beta})\times 1 \\
&=&  \Pi_{\alpha\in E}p(\alpha)\times \Pi_{\beta\in E_G^-\setminus E_G^+}p(\bar{\beta})
\end{eqnarray*}
\end{proof}
%
%\section{Empirical results}
%Average-case analysis requires a notion of an "average" input to an algorithm, which leads to the problem of devising a probability distribution over inputs. 
%
%The fundamental notions of average-case complexity were developed by Leonid Levin in 1986 when he published a one-page pape.
%
%An initial attempt might define an efficient average-case algorithm as one which runs in expected polynomial time over all possible inputs
%
%The common formalization of running almost always in Ptime is that for each $n$, the probability of hard instances of size $n$ is bounded by an inverse polynomial of $n$.
%
%For each $S\subseteq A$, under complete semantics, we need to check at most $2^{|A\setminus S\cup S^+|}$ subgraphs. So, theoretically, the computation of probability of an extension under complete semantics is NP complete. 
%
%However, the complexity depending on the ratio of $|S|: |A|$. This has not been studied before.  
%
%Meanwhile, it is useful to analyzed the complexity of identifying subgraphs.

%Besides, according to Theorem \ref{th-rel}, it holds that $p(E^{co})\in [p(E^{st}), p(E^{ad})]$ and $p(E^{pr})\in [p(E^{st}), p(E^{ad})]$. In other words, we may get $p(E^{co})$ and  $p(E^{pr})$, by
%
%\begin{itemize}
%\item directly using the definition of $\rho^{co}(E)$ and  $\rho^{pr}(E)$ respectively, or
%\item 
%\end{itemize}

Second, under stable semantics, according to \textbf{Prop1} and \textbf{Prop3}, we have the following proposition.

\begin{proposition} \label{th-st-sub}
Let $G^p = (A,R, p)$ be a PrAG, and $E\subseteq A$ be a conflict-free set of arguments. It holds that:
\begin{eqnarray}
p(E^{st}) &=&  \Pi_{\alpha\in E}p(\alpha)\times\Pi_{\beta\in I\cup (E_G^-\setminus E_G^+)}p(\bar{\beta}) \label{formula-n-11a} \nonumber
\end{eqnarray}
\end{proposition}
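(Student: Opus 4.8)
The plan is to follow exactly the template of the proof of Proposition \ref{th-ad-sub}, since the stable case differs only in which arguments are held fixed and which range freely. First I would invoke the characterization of stable subgraphs, formula (\ref{for-stable}), together with the definition (\ref{formula-ns}) of $p(E^{st})$, to rewrite the probability as a sum over subsets of $E_G^+$:
$$p(E^{st}) = \Sigma_{G^\prime\in \rho^{st}(E)}\,p(G^\prime) = \Sigma_{B\in 2^{E_G^+}}\,p(G_{\downarrow E\cup B}).$$

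Next I would evaluate $p(G_{\downarrow E\cup B})$ for a fixed $B\in 2^{E_G^+}$ by sorting all arguments of $A$ into the four groups determined by whether they appear in $G_{\downarrow E\cup B}$: every argument of $E$ appears; every argument of $I\cup(E_G^-\setminus E_G^+) = A\setminus(E\cup E_G^+)$ is absent (this is exactly \textbf{Prop3}); every argument of $B$ appears; and every argument of $E_G^+\setminus B$ is absent. By formula (\ref{formula-subgr}) this gives
$$p(G_{\downarrow E\cup B}) = \Pi_{\alpha\in E}p(\alpha)\times\Pi_{\beta\in I\cup(E_G^-\setminus E_G^+)}p(\bar{\beta})\times\Pi_{\gamma\in B}p(\gamma)\times\Pi_{\eta\in E_G^+\setminus B}p(\bar{\eta}).$$

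Finally I would substitute this into the sum and factor out the two leading products, which do not depend on $B$, leaving the residual sum $\Sigma_{B\in 2^{E_G^+}}(\Pi_{\gamma\in B}p(\gamma)\times\Pi_{\eta\in E_G^+\setminus B}p(\bar{\eta}))$. This sum equals $1$, since summing a product of ``appears/does-not-appear'' probabilities over all subsets of a fixed set $E_G^+$ collapses to $\Pi_{x\in E_G^+}(p(x)+p(\bar{x})) = 1$; this is the same normalization fact used in the admissible proof. Cancelling the factor of $1$ yields the claimed closed form.

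The proof is essentially routine given Proposition \ref{th-ad-sub}, so there is no real obstacle; the only point requiring care is the bookkeeping of which arguments are forced absent. The sole substantive difference from the admissible case is that the remaining arguments $I$ are now held absent (and hence contribute the factor $\Pi_{\beta\in I}p(\bar{\beta})$ via the second product) rather than ranging freely inside the summation index, which is precisely the consequence of replacing $\rho^{ad}(E)$, indexed by $2^{I\cup E_G^+}$, with $\rho^{st}(E)$, indexed by $2^{E_G^+}$.
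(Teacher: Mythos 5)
Your proof is correct and follows exactly the route the paper intends: the paper omits this proof, noting only that it is ``similar to that of Proposition \ref{th-ad-sub},'' and your argument is precisely that adaptation, using formulas (\ref{for-stable}) and (\ref{formula-ns}), the four-way appearance bookkeeping, factoring out the $B$-independent products, and the normalization $\Sigma_{B\in 2^{E_G^+}}(\Pi_{\gamma\in B}p(\gamma)\times\Pi_{\eta\in E_G^+\setminus B}p(\bar{\eta}))=1$. Nothing is missing.
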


The proof of Proposition \ref{th-st-sub} is similar to that of Proposition \ref{th-ad-sub}, {so it is omitted.}

Third, under complete semantics, according to \textbf{Prop1}, \textbf{Prop2} and \textbf{Prop4}, we have the following proposition.

\begin{proposition} \label{th-co-sub}
Let $G^p = (A,R, p)$ be a PrAG, and $E\subseteq A$ be a conflict-free set of arguments. It holds that:
\begin{eqnarray}\label{formula-n-11b}
p(E^{co}) &=& P_E \times P_{I\_CO}, \mbox{where}  \nonumber\\
P_E &=&  \Pi_{\alpha\in E}p(\alpha)\times\Pi_{\beta\in E_G^-\setminus E_G^+}p(\bar{\beta}), \mbox{and} \nonumber \\
P_{I\_CO} &=&  \Sigma_{B^\prime\in 2^I\wedge (\forall \alpha\in B^\prime: \alpha_{G}^-\cap B^\prime\neq\emptyset)} (\Pi_{\gamma\in B^\prime}p(\gamma)\times \Pi_{\xi\in I\setminus B^\prime}p(\bar{\xi})) \nonumber 
\end{eqnarray}
\end{proposition}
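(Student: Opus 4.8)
The plan is to follow the same template as the proof of Proposition \ref{th-ad-sub}, but now carrying along the extra combinatorial constraint that distinguishes complete subgraphs from admissible ones. First I would start from formulas (\ref{formula-ns}) and (\ref{formula-compl}), writing
\[
p(E^{co}) = \Sigma_{B\in 2^{I\cup E_G^+},\ \forall\alpha\in B^\prime:\ \alpha_{G}^-\cap B^\prime\neq\emptyset}\ p(G_{\downarrow E\cup B}),
\]
where $B^\prime = B\cap I$. Using formula (\ref{formula-subgr}) together with the fact that $A$ is partitioned into the four disjoint components $E$, $E_G^-\setminus E_G^+$, $E_G^+$ and $I$ (disjointness of $E$ from $E_G^+$ and $E_G^-$ following from conflict-freeness of $E$), each subgraph probability factors as
\[
p(G_{\downarrow E\cup B}) = \Pi_{\alpha\in E}p(\alpha)\times\Pi_{\beta\in E_G^-\setminus E_G^+}p(\bar\beta)\times\Pi_{\gamma\in B}p(\gamma)\times\Pi_{\eta\in(I\cup E_G^+)\setminus B}p(\bar\eta),
\]
since exactly the arguments of $E\cup B$ appear in $G_{\downarrow E\cup B}$ and exactly those of $(E_G^-\setminus E_G^+)\cup((I\cup E_G^+)\setminus B)$ do not.

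The key step is then to decouple the sum along the partition $B = B^\prime\sqcup B^{\prime\prime}$ with $B^\prime = B\cap I$ and $B^{\prime\prime}=B\cap E_G^+$. Because $B^\prime\subseteq I$ and $B^{\prime\prime}\subseteq E_G^+$ are disjoint, the product $\Pi_{\gamma\in B}p(\gamma)$ splits into a $B^\prime$-factor and a $B^{\prime\prime}$-factor, and likewise $(I\cup E_G^+)\setminus B$ splits into $(I\setminus B^\prime)$ and $(E_G^+\setminus B^{\prime\prime})$. Crucially, the complete-subgraph constraint $\forall\alpha\in B^\prime:\ \alpha_{G}^-\cap B^\prime\neq\emptyset$ refers only to $B^\prime$, so summing over $B\in 2^{I\cup E_G^+}$ subject to this constraint is the same as summing $B^\prime$ over the constrained subsets of $I$ while summing $B^{\prime\prime}$ freely over $2^{E_G^+}$.

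Finally I would pull the constant $P_E = \Pi_{\alpha\in E}p(\alpha)\times\Pi_{\beta\in E_G^-\setminus E_G^+}p(\bar\beta)$ out of the sum, carry out the free inner sum over $B^{\prime\prime}$ using the identity
\[
\Sigma_{B^{\prime\prime}\in 2^{E_G^+}}\left(\Pi_{\gamma\in B^{\prime\prime}}p(\gamma)\times\Pi_{\eta\in E_G^+\setminus B^{\prime\prime}}p(\bar\eta)\right) = \Pi_{\gamma\in E_G^+}\left(p(\gamma)+p(\bar\gamma)\right) = 1,
\]
which is the same collapse already used for Proposition \ref{th-ad-sub}, and recognize that what is left over the constrained $B^\prime\in 2^I$ is exactly $P_{I\_CO}$. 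This gives $p(E^{co}) = P_E\times P_{I\_CO}$.

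I expect the main obstacle to be not any calculation but the bookkeeping of the partition: one must verify carefully that the four components are genuinely disjoint (so that the single-argument probability factors multiply cleanly) and that the completeness constraint genuinely involves $B^\prime$ alone and leaves $B^{\prime\prime}$ unconstrained. Once that decoupling of the $I$-part from the $E_G^+$-part is established, the remainder is the identical summation trick as in the admissible case, with the only difference being that the $I$-sum does not collapse to $1$ but is retained as the factor $P_{I\_CO}$.
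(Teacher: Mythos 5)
Your proposal is correct and follows essentially the same route as the paper's own proof: both start from formulas (\ref{formula-ns}) and (\ref{formula-compl}), split $B$ into $B^\prime = B\cap I$ and $B^{\prime\prime} = B\cap E_G^+$, factor the subgraph probability over the four disjoint components, observe that the completeness constraint touches only $B^\prime$, and collapse the unconstrained sum over $B^{\prime\prime}\in 2^{E_G^+}$ to $1$, leaving $P_E\times P_{I\_CO}$. Your explicit remark that the constraint decouples from $B^{\prime\prime}$ is stated only implicitly in the paper's algebra, but the argument is the same.
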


\begin{proof}
According to formulas (\ref{formula-compl}) and (\ref{formula-ns}), $p(E^{co}) = \Sigma_{G^\prime\in \rho^{co}(E)}\,p(G^\prime) =  \Sigma_{(B\in 2^{I\cup E_G^+})\wedge  (\forall \alpha\in B^\prime: \alpha_{G}^-\cap B^\prime\neq\emptyset)}\,p(G_{\downarrow E\cup B})$.  Let $B^\prime =  B\cap I$  and $B^{\prime\prime} =  B\cap E_G^+$. It holds that $B = B^\prime\cup B^{\prime\prime} $ and $B^\prime\cap B^{\prime\prime}  = \emptyset$. Since in $G_{\downarrow E\cup B^\prime \cup B^{\prime\prime}}$,
\begin{itemize}
\item every argument in $E$ appears, 
\item every argument in $E_G^-\setminus E_G^+$ does not appear, 
\item every argument in $B^{\prime}$ (resp. $B^{\prime\prime}$) appears, 
\item and every argument in $I\setminus B^{\prime}$ (resp. $E_G^+\setminus B^{\prime\prime}$) does not appear,
\end{itemize}
it holds that $G_{\downarrow E\cup B} = G_{\downarrow E\cup B^\prime \cup B^{\prime\prime}} = (\Pi_{\alpha\in E}p(\alpha)\times \Pi_{\beta\in E_G^-\setminus E_G^+}p(\bar{\beta})\times\Pi_{\gamma\in B^\prime}p(\gamma)\times  \Pi_{\xi\in I\setminus B^\prime}p(\bar{\xi}) \times\Pi_{\zeta\in B^{\prime\prime}}p(\zeta)\times  \Pi_{\eta\in E_G^+\setminus B^{\prime\prime}}p(\bar{\xi}) )$. Since $ \Sigma_{B^{\prime\prime}\in 2^{E_G^+}} (\Pi_{\zeta\in B^{\prime\prime}}p(\zeta)\times  \Pi_{\eta\in E_G^+\setminus B^{\prime\prime}}p(\bar{\eta}) ) =1$, we may conclude that:
\begin{eqnarray*}
p(E^{co}) &=& \Sigma_{(B\in 2^{I\cup E_G^+})\wedge  (\forall \alpha\in B^\prime: \alpha_{G}^-\cap B^\prime\neq\emptyset)}\,p(G_{\downarrow E\cup B})\\
&=& \Sigma_{(B^\prime\in 2^{I})\wedge(B^{\prime\prime}\in 2^{E_G^+})\wedge  (\forall \alpha\in B^\prime: \alpha_{G}^-\cap B^\prime\neq\emptyset)}\,p(G_{\downarrow E\cup B^\prime\cup B^{\prime\prime}})\\
&=& \Sigma_{(B^\prime\in 2^{I})\wedge(B^{\prime\prime}\in 2^{E_G^+})\wedge  (\forall \alpha\in B^\prime: \alpha_{G}^-\cap B^\prime\neq\emptyset)}\,(\Pi_{\alpha\in E}p(\alpha)\times \Pi_{\beta\in E_G^-\setminus E_G^+}p(\bar{\beta})\times \\
&&\Pi_{\gamma\in B^\prime}p(\gamma)\times  \Pi_{\xi\in I\setminus B^\prime}p(\bar{\xi}) \times\Pi_{\zeta\in B^{\prime\prime}}p(\zeta)\times  \Pi_{\eta\in E_G^+\setminus B^{\prime\prime}}p(\bar{\xi}) )\\
&=& (\Pi_{\alpha\in E}p(\alpha)\times \Pi_{\beta\in E_G^-\setminus E_G^+}p(\bar{\beta})) \times \Sigma_{B^{\prime\prime}\in 2^{E_G^+}}(\Pi_{\zeta\in B^{\prime\prime}}p(\zeta)\times  \Pi_{\eta\in E_G^+\setminus B^{\prime\prime}}p(\bar{\eta}) )\times \\
&&  \Sigma_{(B^\prime\in 2^{I})\wedge  (\forall \alpha\in B^\prime: \alpha_{G}^-\cap B^\prime\neq\emptyset)}(\Pi_{\gamma\in B^\prime}p(\gamma)\times  \Pi_{\xi\in I\setminus B^\prime}p(\bar{\xi}) ) \\
&=& P_E\times 1 \times P_{I\_CO} \\
&= & P_E\times  P_{I\_CO}
\end{eqnarray*}
\end{proof}

Third, under preferred semantics, according to \textbf{Prop1}, \textbf{Prop2},  \textbf{Prop4} and \textbf{Prop5}, we have the following proposition.

\begin{proposition} \label{th-pr-sub}
Let $G^p = (A,R, p)$ be a PrAG, and $E\subseteq A$ be a conflict-free set of arguments. It holds that:
\begin{eqnarray}\label{formula-n-11b}
p(E^{pr}) &=& P_E \times P_{I\_PR}, \mbox{where}  \nonumber\\
P_E &=&  \Pi_{\alpha\in E}p(\alpha)\times\Pi_{\beta\in E_G^-\setminus E_G^+}p(\bar{\beta}), \mbox{and} \nonumber \\
P_{I\_PR} &=&  \Sigma_{(B^\prime\in 2^I)\wedge (\forall \alpha\in B^\prime: \alpha_{G}^-\cap B^\prime\neq\emptyset)\wedge (\mathcal{E}_{ad}(G_{\downarrow B^\prime}) = \{\emptyset\})} (\Pi_{\gamma\in B^\prime}p(\gamma)\times \Pi_{\xi\in I\setminus B^\prime}p(\bar{\xi})) \nonumber 
\end{eqnarray}
\end{proposition}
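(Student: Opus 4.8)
The plan is to imitate the proof of Proposition \ref{th-co-sub} for the complete case, since formula (\ref{formalus-prf}) shows that $\rho^{pr}(E)$ is obtained from $\rho^{co}(E)$ simply by imposing one extra condition, $\mathcal{E}_{ad}(G_{\downarrow B^\prime}) = \{\emptyset\}$, which depends only on $B^\prime = B\cap I$. First I would start from formula (\ref{formula-ns}), writing $p(E^{pr}) = \Sigma_{G^\prime\in \rho^{pr}(E)}p(G^\prime)$, and substitute the characterization of $\rho^{pr}(E)$ given by formula (\ref{formalus-prf}) (equivalently, by Proposition \ref{theorem-pr}): the sum ranges over all $B\in 2^{I\cup E_G^+}$ such that $G_{\downarrow E\cup B}$ is a complete subgraph with respect to $E$ and $\mathcal{E}_{ad}(G_{\downarrow B^\prime})= \{\emptyset\}$.

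Next I would decompose each index $B$ as $B = B^\prime\cup B^{\prime\prime}$ with $B^\prime = B\cap I$ and $B^{\prime\prime} = B\cap E_G^+$, exactly as in the complete case. Because $I$ and $E_G^+$ are disjoint, the membership $B\in 2^{I\cup E_G^+}$ splits into the independent choices $B^\prime\in 2^I$ and $B^{\prime\prime}\in 2^{E_G^+}$. As established in the proof of Proposition \ref{th-co-sub}, the probability of each subgraph factors as $p(G_{\downarrow E\cup B}) = \Pi_{\alpha\in E}p(\alpha)\times \Pi_{\beta\in E_G^-\setminus E_G^+}p(\bar{\beta})\times (\Pi_{\gamma\in B^\prime}p(\gamma)\times \Pi_{\xi\in I\setminus B^\prime}p(\bar{\xi}))\times (\Pi_{\zeta\in B^{\prime\prime}}p(\zeta)\times \Pi_{\eta\in E_G^+\setminus B^{\prime\prime}}p(\bar{\eta}))$, since in $G_{\downarrow E\cup B}$ the arguments of $E$ appear, those of $E_G^-\setminus E_G^+$ do not, and each of $B^\prime, B^{\prime\prime}$ appears while its complement inside $I$, respectively $E_G^+$, does not.

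The crucial observation, which I would emphasize, is that \emph{both} side conditions defining $\rho^{pr}(E)$ --- the complete condition $\forall\alpha\in B^\prime: \alpha_G^-\cap B^\prime\neq\emptyset$ and the preferred condition $\mathcal{E}_{ad}(G_{\downarrow B^\prime})= \{\emptyset\}$ --- refer only to $B^\prime$ and not to $B^{\prime\prime}$; the latter is immediate from formula (\ref{formalus-prf}), where the admissible extensions are computed on the subgraph induced by $B^\prime$ alone. Consequently $B^{\prime\prime}$ ranges freely over all of $2^{E_G^+}$ independently of the constraints on $B^\prime$, the double sum decouples into a product of sums, and the inner sum over $B^{\prime\prime}$ collapses via $\Sigma_{B^{\prime\prime}\in 2^{E_G^+}}(\Pi_{\zeta\in B^{\prime\prime}}p(\zeta)\times \Pi_{\eta\in E_G^+\setminus B^{\prime\prime}}p(\bar{\eta})) = 1$, using $p(\zeta)+p(\bar{\zeta})=1$. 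What then remains is exactly $P_E\times P_{I\_PR}$.

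Relative to the complete case, the only new point to confirm is this dependence of the preferred condition on $B^\prime$ alone, and I expect the sole mild obstacle to be bookkeeping: making sure the summation index of $P_{I\_PR}$ carries exactly the two constraints on $B^\prime$ and that none is lost when the $B^{\prime\prime}$ sum is factored out. Here one may note that the preferred condition in fact already entails the complete condition, since any $\alpha\in B^\prime$ with $\alpha_G^-\cap B^\prime=\emptyset$ would be unattacked in $G_{\downarrow B^\prime}$ and hence make $\{\alpha\}$ a non-empty admissible set, contradicting $\mathcal{E}_{ad}(G_{\downarrow B^\prime})= \{\emptyset\}$; listing both constraints is therefore harmless but redundant.
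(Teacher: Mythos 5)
Your proof is correct and follows exactly the route the paper intends: the paper omits the proof of Proposition~\ref{th-pr-sub}, stating only that it is similar to that of Proposition~\ref{th-co-sub}, and your argument is precisely that adaptation, with the key (and correct) observation that both side conditions in formula~(\ref{formalus-prf}) depend only on $B^\prime$, so the sum over $B^{\prime\prime}\in 2^{E_G^+}$ factors out and collapses to $1$. Your closing remark that the condition $\mathcal{E}_{ad}(G_{\downarrow B^\prime})=\{\emptyset\}$ already entails the completeness condition $\forall\alpha\in B^\prime:\alpha_G^-\cap B^\prime\neq\emptyset$ is also accurate, a small point the paper does not make.
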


The proof of Proposition \ref{th-pr-sub} is similar to that of Proposition \ref{th-co-sub}, omitted.

Third, under grounded semantics, according to \textbf{Prop1}, \textbf{Prop2},  \textbf{Prop4} and \textbf{Prop6}, we have the following proposition.

\begin{proposition} \label{th-gr-sub}
Let $G^p = (A,R, p)$ be a PrAG, and $E\subseteq A$ be a conflict-free set of arguments. It holds that:
\begin{eqnarray}\label{formula-n-11e}
p(E^{gr}) &=& p(E^{co}) \times P_{GR}, \mbox{where}  \nonumber \\
%P_E &=&  \Pi_{\alpha\in E}p(\alpha)\times\Pi_{\beta\in E_G^-\setminus E_G^+}p(\bar{\beta}), \mbox{and} \nonumber \\
P_{GR} &=&  \Sigma_{(B^{\prime\prime}_2\in 2^{E_G^+\cap E_G^-})\wedge (\mathcal{E}_{gr}(G_{\downarrow E\cup B^{\prime\prime}}) = \{\{E\}\})} (\Pi_{\alpha\in B^{\prime\prime}}p(\alpha)\times \Pi_{\beta\in (E_G^+\cap E_G^-)\setminus B^{\prime\prime}}p(\bar{\beta})) \nonumber 
\end{eqnarray}
\end{proposition}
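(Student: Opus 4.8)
The plan is to reuse the factorization technique from the proof of Proposition~\ref{th-co-sub}, but now splitting the block $E_G^+$ one level finer, into $E_G^+\setminus E_G^-$ and $E_G^+\cap E_G^-$, as suggested by Figure~5. I would start from formula~(\ref{formula-ns}) together with the characterization~(\ref{formalus-gr}) and write $p(E^{gr}) = \Sigma_{G^\prime\in\rho^{gr}(E)}\,p(G^\prime) = \Sigma_{(B\in 2^{I\cup E_G^+})\,\wedge\, C(B)}\,p(G_{\downarrow E\cup B})$, where the admissible-membership index runs over $B\in 2^{I\cup E_G^+}$ and the predicate $C(B)$ records the two defining requirements of a grounded subgraph: the complete-subgraph condition on $B^\prime=B\cap I$ (Proposition~\ref{theorem-com}) and the grounded condition $\mathcal{E}_{gr}(G_{\downarrow E\cup B^{\prime\prime}_2})=\{E\}$ (Proposition~\ref{theorem-gr}).

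Next I would decompose every $B$ into the three disjoint pieces $B^\prime=B\cap I$, $B^{\prime\prime}_1=B\cap(E_G^+\setminus E_G^-)$ and $B^{\prime\prime}_2=B\cap(E_G^+\cap E_G^-)$, so that $E\cup B$ is partitioned into $E$, $B^\prime$, $B^{\prime\prime}_1$ and $B^{\prime\prime}_2$, while the non-appearing arguments fall into $E_G^-\setminus E_G^+$, $I\setminus B^\prime$, $(E_G^+\setminus E_G^-)\setminus B^{\prime\prime}_1$ and $(E_G^+\cap E_G^-)\setminus B^{\prime\prime}_2$. Using formula~(\ref{formula-subgr}) over this refined partition of $A$, the subgraph probability factors as $p(G_{\downarrow E\cup B}) = P_E \times \bigl(\Pi_{\gamma\in B^\prime}p(\gamma)\Pi_{\xi\in I\setminus B^\prime}p(\bar\xi)\bigr) \times \bigl(\Pi_{\zeta\in B^{\prime\prime}_1}p(\zeta)\Pi_{\eta\in(E_G^+\setminus E_G^-)\setminus B^{\prime\prime}_1}p(\bar\eta)\bigr) \times \bigl(\Pi_{\zeta\in B^{\prime\prime}_2}p(\zeta)\Pi_{\eta\in(E_G^+\cap E_G^-)\setminus B^{\prime\prime}_2}p(\bar\eta)\bigr)$, where $P_E=\Pi_{\alpha\in E}p(\alpha)\times\Pi_{\beta\in E_G^-\setminus E_G^+}p(\bar\beta)$ as in Proposition~\ref{th-co-sub}. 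The crucial structural claim is that the predicate $C(B)$ decouples across the three blocks: the complete-subgraph condition constrains only $B^\prime$ (Proposition~\ref{theorem-com}), the grounded condition constrains only $B^{\prime\prime}_2$ (Proposition~\ref{theorem-gr}), and $B^{\prime\prime}_1$ is left entirely unconstrained, free to range over all of $2^{E_G^+\setminus E_G^-}$.

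Once this decoupling is established, the single sum splits into a product of three independent sums. The sum over the unconstrained $B^{\prime\prime}_1\in 2^{E_G^+\setminus E_G^-}$ of $\Pi_{\zeta\in B^{\prime\prime}_1}p(\zeta)\Pi_{\eta\in(E_G^+\setminus E_G^-)\setminus B^{\prime\prime}_1}p(\bar\eta)$ equals $1$ (the same cancellation used in Proposition~\ref{th-ad-sub}); the sum over $B^\prime$ satisfying the complete condition, multiplied by $P_E$, is precisely $p(E^{co})=P_E\times P_{I\_CO}$ by Proposition~\ref{th-co-sub}; and the sum over $B^{\prime\prime}_2$ satisfying the grounded condition is exactly $P_{GR}$. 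Assembling these gives $p(E^{gr}) = p(E^{co})\times 1\times P_{GR} = p(E^{co})\times P_{GR}$, as desired. I expect the main obstacle to be the justification of the decoupling, specifically the fact that $B^{\prime\prime}_1$ is genuinely free: one must argue that the appearance or non-appearance of arguments in $E_G^+\setminus E_G^-$ never affects whether $E$ is the grounded extension. This is exactly the content captured by Proposition~\ref{theorem-gr}, which reduces the grounded test to $G_{\downarrow E\cup B^{\prime\prime}_2}$ by appealing to the directionality of grounded semantics~\cite{Baroni:AIJ}; I would lean on that reduction rather than re-deriving it, so that the remainder is the routine sum-factorization already rehearsed in the proof of Proposition~\ref{th-co-sub}.
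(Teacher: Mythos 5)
Your proof is correct and takes essentially the same route as the paper: the paper's own (very brief) proof likewise reuses the factorization of Proposition~\ref{th-co-sub}, splitting $B^{\prime\prime}$ into an unconstrained part $B^{\prime\prime}_1$ whose sum telescopes to $1$ and a constrained part $B^{\prime\prime}_2$ yielding $P_{GR}$, with the decoupling resting on Propositions~\ref{theorem-com} and~\ref{theorem-gr}. Your write-up just spells out the details the paper leaves implicit (and implicitly fixes the statement's notational slips, reading $B^{\prime\prime}$ as $B^{\prime\prime}_2$ and $\{\{E\}\}$ as $\{E\}$ in $P_{GR}$).
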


The proof of Proposition \ref{th-gr-sub} is similar to that of Proposition \ref{th-co-sub}. The difference is that $B^{\prime\prime}$ is divided into two parts: $B^{\prime\prime}_1$ and $B^{\prime\prime}_2$, in which the arguments in $B^{\prime\prime}_1$ appear without constraints while the arguments in $B^{\prime\prime}_2$ appear only when $\mathcal{E}_{gr}(G_{\downarrow E\cup B^{\prime\prime}}) = \{\{E\}\}$ is satisfied. This is reflected by the factor $P_{GR}$ and an equation $ \Sigma_{B^{\prime\prime}_1\in 2^{E_G^+\setminus E_G^-}} (\Pi_{\alpha\in B^{\prime\prime}_1}p(\alpha)\times  \Pi_{\beta\in (E_G^+\setminus E_G^-)\setminus B^{\prime\prime}_1}p(\bar{\beta}) ) =1$.

\section{Algorithms and empirical results}
The theoretical results presented in the previous section show that our characterized subgraphs based approach (called \textit{C-Sub approach}) could be more efficient than the possible worlds based approach (called \textit{PW approach}). In order to quantitatively evaluate the performance of our approach, in this section, by taking the cases under preferred semantics as an example,  we first develop two algorithms for the PW approach  and the C-Sub approach under preferred semantics respectively, and then conduct experiments to obtain the empirical results\footnote{The reasons why we choose  preferred semantics for our empirical study here are as follows. First,  efficiency of our approaches are mainly affected by the size of the set of remaining arguments, which is mainly dependent on the structure of graphs and the size of the extension, rather than on the semantics we choose.  Second, preferred semantics has been widely used in many experiments (e.g., \cite{Wolfgang:AIJ2012B}, \cite{Cerutti} and \cite{Nofal}, among others).  And, we have also conducted several experiments under preferred semantics \cite{Liao-Huang:JLC, Liao:amai}. Third, for some other properties, we will study them in our future work. For instance, under grounded semantics, an additonal property is how the efficiency of the new approach is affected by the size of $E_G^+\cap E_G^-$.}.

\subsection{Algorithm for the PW approach under preferred semantics}

Alg. 1 is an algorithm for the PW approach under preferred semantics. In this algorithm, up to $2^n$ subgraphs are blindly constructed where $n$ is the number of nodes of the PrAG. For each subgraph $G_{\downarrow A^\prime}$, if $E\subseteq A^\prime$, then whether $E$ is one of its preferred extensions is verified by the procedure $\mathit{verify\_preferred\_labelling}(\mathcal{L}, E)$. This procedure is based on the  MC algorithm introduced in Section 2.  

$\mathcal{L}$ is initialized as $(A^\prime, \emptyset, \emptyset)$, i.e. all arguments in $A^\prime$ are all labelled IN. Then, the procedure first checks whether there is an argument  $\alpha$ in $E$ such that $\alpha$ is super-illegally IN with respect to $\mathcal{L}$. If so, $E$ is not a preferred extension. Otherwise, there are two possible cases. First, no argument is illegally IN. It follows that $in(\mathcal{L})$ is admissible. In this case, if $E\subset in(\mathcal{L})$, then $E$ is not a  preferred extension. Second, there are some arguments that are illegally IN. In this case, the procedure iteratively  selects arguments that are illegally IN (or \textit{super-illegally IN}) and applies a \textit{transition step} to obtain a new labelling, until a labelling is reached in which no argument is illegally IN. %In this procedure, we use the notions of \textit{super-illegally IN} and \textit{transition step}, which are introduced as follows.

\begin{figure}[!htp]
\begin{center}
\setlength{\parindent}{0cm}
\begin{tabular}{l}
\hline
\textbf{Alg. 1}\hspace{0.05cm}  \textbf{Algorithm for the PW approach under preferred semantics}\\
%\hspace{2.8cm}\textbf{$(A, R)$} \\
\hline
\small \hspace{0.1cm}   $ \,\,\,$ \textbf{input    :}  $G^p = (A, R, p)$ and a conflict-free set $E\subseteq A$ \\
%
%$\mathcal{L} = (in(\mathcal{L}), out(\mathcal{L}), undec(\mathcal{L}))$, $E \subseteq A$\\
\small\hspace{0.1cm}  $ \,\,\,$ \textbf{output:}  $p(E^{pr})$\\
\small\hspace{0.1cm} 1:  $p(E^{pr}): =0;$ \\
\small\hspace{0.1cm} 2:  for each $A^\prime \in 2^A$ and $A^\prime \neq \emptyset$ do   \\
\small\hspace{0.1cm} 3: \hspace{0.5cm} if $E\subseteq A^\prime$  then \\
\small\hspace{0.1cm} 4: \hspace{0.9cm} $\mathcal{L} := (A^\prime, \emptyset, \emptyset) $; \\
\small\hspace{0.1cm} 5: \hspace{0.9cm} if $\mathit{verify\_preferred\_labelling}(\mathcal{L}, E) = true$  then \\
\small\hspace{0.1cm} 6: \hspace{1.3cm}  $p(E^{pr}) := p(E^{pr}) + p(G_{\downarrow A^\prime});$  \\
\small\hspace{0.1cm} 7: \hspace{0.9cm} end if  \\
\small\hspace{0.1cm} 8: \hspace{0.5cm} end if  \\
\small\hspace{0.1cm} 9:  end do\\
\\
\small\hspace{0.1cm} 10: \textbf{procedure} $\mathit{verify\_preferred\_labelling}(\mathcal{L}, E$) \\
\small\hspace{0.1cm} 11:  $vpr := true$;\\
\small\hspace{0.1cm} 12: \textbf{if} $\mathcal{L}$ has an argument $\alpha$ that is super-illegally $\mathrm{IN}$ and  $\alpha\in E$ \textbf{then} \\
\small\hspace{0.1cm} 13: \hspace{0.5cm} $vpr:=false$; \\
%\small\hspace{0.1cm} 14: \textbf{end if}\\ 
\small\hspace{0.1cm} 15: \textbf{else}\\ 
\small\hspace{0.1cm} 16: \hspace{0.5cm} \textbf{if} $\mathcal{L}$ does not have an argument that is illegally $\mathrm{IN}$ \textbf{then}\\ \small\hspace{0.1cm} 17: \hspace{0.5cm} \hspace{0.5cm}\textbf{if }$E\subset in(\mathcal{L})$ \textbf{then} $vpr:=false$; \textbf{end if}\\
\small \hspace{0.1cm} 18: \hspace{0.5cm} \textbf{else} \\
\small \hspace{0.1cm} 19: \hspace{0.9cm}  \textbf{if} $\mathcal{L}$ has an argument that is super-illegally $\mathrm{IN}$ \textbf{then} \\
\small \hspace{0.1cm} 20: \hspace{1.3cm} $\alpha := $ some argument that is super-illegally $\mathrm{IN}$ in $\mathcal{L}$;  \\
\small \hspace{0.1cm} 21: \hspace{1.3cm} $\mathit{verify\_preferred\_labelling}(\mathit{transition\_step}(\mathcal{L}, \alpha), E$); \\
\small \hspace{0.1cm} 22: \hspace{0.9cm}  \textbf{else}  \\
\small \hspace{0.1cm} 23:  \hspace{1.3cm}  \textbf{for each} $\alpha$ that is illegally $\mathrm{IN}$ in $\mathcal{L}$ \textbf{do} \\
\small \hspace{0.1cm} 24: \hspace{1.7cm} $\mathit{verify\_preferred\_labelling}(\mathit{transition\_step}(\mathcal{L}, \alpha), E$)\\
\small\hspace{0.1cm} 25:  \hspace{1.3cm}  \textbf{end for} \\
\small \hspace{0.1cm} 26: \hspace{0.9cm} \textbf{end if} \\
\small  \hspace{0.1cm} 27: \hspace{0.5cm} \textbf{end if} \\
\small  \hspace{0.1cm} 28: \textbf{end if} \\
\small  \hspace{0.1cm} 29: \textbf{return} $vpr$\\
\small  \hspace{0.1cm} 30: \textbf{end procedure}\\
\hline
\end{tabular}
\setlength{\parindent}{0.5cm}
\end{center}
\end{figure}

\subsection{Algorithm for the C-Sub approach under preferred semantics}
Alg. 2 is an algorithm for the C-Sub approach under preferred semantics. Unlike the PW approach, the algorithm first gets a set of remaining argument $I = A\setminus (E^-\cup E^+\cup E)$. Then, for each subset $B^\prime$ of $I$, verify whether the subgraph induced by $B^\prime$ has an nonempty admissible extension. The procedure $\mathit{verify\_nonempty\_adm}((\mathcal{L})$ recursively  selects arguments that are illegally IN (or \textit{super-illegally IN}) and applies a \textit{transition step} to obtain a new labelling, until a lablling is reached in which no argument is illegally IN. If there is a labeling $\mathcal{L}$ such that $\mathcal{L}$ has no argument that is  illegally $\mathrm{IN}$ and $in(\mathcal{L})\neq \emptyset$, then the procedure returns true. Otherwise, it returns false. 

Then, $p(E^{pr})$ is computed according to Proposition \ref{th-pr-sub}. More specifically, in Steps 11 and 12, $P_E$ is computed; from Step 13 to Step 17, $P_{I\_PR}$ and $p(E^{pr})$ are computed. 

%\subsection{Our algorithm}
\begin{figure}[]
\begin{center}
\setlength{\parindent}{0cm}
\begin{tabular}{l}
\hline
\textbf{Alg. 2}\hspace{0.05cm}  \textbf{Algorithm for C-Sub approach under preferred semantics}\\
%\hspace{2.8cm}\textbf{$(A, R)$} \\
\hline
\small$ $ \hspace{0.2cm}   $ \,\,\,$ \textbf{input    :}  $G^p = (A, R, p)$, a conflict-free set $E\subseteq A$\\
\small\hspace{0.3cm}  $ \,\,\,$ \textbf{output:}  $p(E^{pr})$\\
\small\hspace{0.1cm} 1:  $p(E^{pr}): =0;$ $sub : = \emptyset$; $I := A\setminus (E^-\cup E^+\cup E)$; $x_1 : = 1;$  $x_2 : = 1;$\\
\small\hspace{0.1cm} 2: \textbf{for} each $B^\prime\in 2^{I}$ \textbf{do}   \\
\small\hspace{0.1cm} 3: \hspace{0.4cm} \textbf{if} $B^\prime = \emptyset$ \textbf{then} $sub := sub \cup \{\emptyset\}$;\\
\small\hspace{0.1cm} 4: \hspace{0.4cm} \textbf{else} \\
\small\hspace{0.1cm} 5:  \hspace{0.8cm} \textbf{if} there exists no $\alpha\in B^\prime$ such that $\alpha_G^-\cap B^\prime = \emptyset$ \textbf{then} \\
\small\hspace{0.1cm} 6: \hspace{1.2cm} $\mathcal{L} := (B^\prime, \emptyset, \emptyset)$; \\
\small\hspace{0.1cm} 7: \hspace{1.2cm}  \textbf{if} $\mathit{verify\_nonempty\_adm}(\mathcal{L}) = false$ \textbf{then} $sub := sub \cup \{B^\prime\}$;  \textbf{end if}  \\
\small\hspace{0.1cm} 8: \hspace{0.8cm}   \textbf{end if} \\
\small\hspace{0.1cm}  9: \hspace{0.4cm}   \textbf{end if} \\
\small 10:  \textbf{end for} \\
\small 11: \textbf{for} each $\alpha\in E_G^-\setminus E_G^+$ \textbf{do}  $x_1 : = x_1 \times p(\bar{\alpha});$  \textbf{end for} \\
\small 12: \textbf{for} each $\alpha\in E$ \textbf{do}  $x_1 : = x_1 \times p({\alpha});$   \textbf{end for} \\

\small 13: \textbf{for} each $B^\prime\in sub$ \textbf{do} \\
\small 14: \hspace{0.4cm}  \textbf{for} each $\alpha\in I\setminus B^\prime$ \textbf{do} $x_2 : = x_2 \times p(\bar{\alpha});$   \textbf{end for} \\
\small 15: \hspace{0.4cm}  \textbf{for} each $\alpha\in B^\prime$ \textbf{do} $x_2 : = x_2 \times p({\alpha});$   \textbf{end for} \\
\small 16:  \hspace{0.4cm}  $p(E^{pr}): =p(E^{pr}) + x_1\times x_2;$  \\
\small 17: \textbf{end for}  \\

\\
\small 18: \textbf{procedure} $\mathit{verify\_nonempty\_adm}(\mathcal{L})$ \\
\small 19:  $vna := false$;\\
\small 20:  \textbf{if} $\mathcal{L}$ has no argument that is illegally $\mathrm{IN}$ \textbf{then} \\
\small 21:   \hspace{0.4cm} if $in(\mathcal{L}) \neq \emptyset$ then $vna := true$;\\
\small 22:  \textbf{else} \\
\small 23:  \hspace{0.4cm} \textbf{if} $\mathcal{L}$ has an argument that is super-illegally $\mathrm{IN}$ \textbf{then} \\
\small 24: \hspace{0.8cm} $\alpha := $ some argument that is super-illegally $\mathrm{IN}$ in $\mathcal{L}$;  \\
\small 25: \hspace{0.8cm} $\mathit{verify\_nonempty\_adm}(\mathit{transition\_step}(\mathcal{L}, \alpha)$); \\
\small 26: \hspace{0.4cm} \textbf{else}  \\
\small 27:  \hspace{0.8cm}  \textbf{for each} $\alpha$ that is illegally $\mathrm{IN}$ in $\mathcal{L}$ \textbf{do} \\
\small 28: \hspace{1.2cm} $\mathit{verify\_nonempty\_adm}(\mathit{transition\_step}(\mathcal{L}, \alpha)$);\\
\small 29:  \hspace{0.8cm}  \textbf{end for} \\
\small 30:  \hspace{0.4cm} \textbf{end if} \\
\small 31:  \textbf{end if} \\
\small 32: \textbf{return} $vna$;\\
\small 33: \textbf{end procedure}\\

\hline
\end{tabular}
\setlength{\parindent}{0.5cm}
\end{center}
\end{figure} 

\subsection{Empirical results}
%
%Under preferred semantics, given a PrAG and a set of conflict-free arguments $S$, we use the existing possible world based approach (or briefly PW approach) and the characterized subgraphs bases approach (or briefly C-Sub approach)  respectively to compute the probability of $S$ being a preferred extension of the PrAG. 
%
%For the PW approach, given a PrAG, $2^n-1$ subgraphs are constructed, which $n$ is the number of the nodes of the graph.   For each sub-graph, we verify whether $S$ is a preferred extension of this subgraph. The algorithm for verifying a set $S$ is a preferred extension is as follows. 
%
%
%On the other hand, in our approach, 

The algorithms were implemented in Java, and tested on a machine with an Intel CPU running at 2.26 GHz and 2.00 GB RAM. We conducted three experiments to test the performance of out C-Sub approach.

The first experiment is about the average computation time of the C-Sub approach and that of the PW approach, according to the following configuration of PrAGs:
\begin{itemize}
\item The number of nodes of PrAGs is from 10 to 25 (since when the number of nodes is smaller than 10, the computation time of the two approaches is close to 0 millisecond, while the number of nodes is bigger than 25, the computation time of the PW approach is almost always more than 3 minutes which we set as the point of timeout). 
\item The ratios of the number of edges to the number of nodes are 1:1, 2:1 and 3:1 respectively (since the density of PrAGs is an important factor affecting the average computation time of the two approaches). 
\item The size of the extension is 3. This number is selected somewhat arbitrarily. How the size of the extension affects the average computation time of the two approaches will be studied in another experiment. 
\end{itemize}

This configuration consists of $16\times 3\times 1 = 48$ assignments for the two approaches respectively. Each assignment is a tuple $(\mbox{\#nodes}, i:1, j)$, where ``\#nodes'' is  the number of nodes, $i:1$ is the ratio of the number of edges to the number of nodes, and $j$ is the size of the extension. For convenience, we use PW\_$j [i:1]$ (C-Sub\_$j [i:1]$)  to denote the (average) computation time of the PW approach (resp. the C-Sub approach) when the size of the extension is $j$ and the the ratio of the number of edges to the number of nodes is $i:1$,  and the number of nodes is given.  

For each assignment, the algorithms were executed 20 times respectively. In each time, a PrAG (including its notes, edges, and the probabilities of nodes) and a conflict-free set $E$ of arguments were generated at random. For simplicity, the probabilities assigned to nodes are nonzero. Then, the probability of $E$ being a preferred extension was computed by the PW approach and the C-Sub approach respectively. Table 3 shows the average execution time of the two approaches. 

\begin{table}[!htp]
\begin{center}
\begin{tabular}{|c|c|c|c||c|c|c|}
  \hline
  % after \\: \hline or \cline{col1-col2} \cline{col3-col4} ...
\scriptsize \#nodes &\scriptsize PW\_3 [1:1] & \scriptsize PW\_3 [2:1] &\scriptsize PW\_3 [3:1]&\scriptsize C-Sub\_3 [1:1] &\scriptsize C-Sub\_3 [2:1]  &\scriptsize C-Sub\_3 [3:1] \\
    & \footnotesize \parbox{1.35cm}{(secs/ timeout)} & \footnotesize \parbox{1.35cm}{(secs/ timeout)}  &\footnotesize \parbox{1.35cm}{(secs/ timeout)} &\footnotesize\parbox{1.35cm}{(secs/ timeout)} & \footnotesize\parbox{1.35cm}{(secs/ timeout)} &\footnotesize\parbox{1.35cm}{(secs/ timeout)}  \\
  \hline
  \hline
\footnotesize 10 & \footnotesize 0.015/0& \footnotesize  0.120/0& \footnotesize  0.585/0& \footnotesize 0.001/0& \footnotesize 0.000/0& \footnotesize  0.000/0 \\

  \footnotesize 11 & \footnotesize 0.039/0& \footnotesize 0.648/0& \footnotesize  4.346/0& \footnotesize 0.000/0& \footnotesize 0.000/0& \footnotesize  0.000/0\\

  \footnotesize 12& \footnotesize 0.070/0& \footnotesize 3.141/0& \footnotesize  34.662/0 & \footnotesize 0.002/0  & \footnotesize 0.001/0& \footnotesize  0.000/0\\
  \footnotesize 13& \footnotesize 0.160/0& \footnotesize 6.732/0& \footnotesize   \underline{101.667}/6 & \footnotesize 0.006/0& \footnotesize 0.005/0& \footnotesize  0.000/0\\

  \footnotesize 14 & \footnotesize 0.380/0 & \footnotesize \underline{23.879}/1& \footnotesize   \underline{154.271}/13& \footnotesize 0.000/0& \footnotesize 0.000/0& \footnotesize  0.001/0\\

  \footnotesize 15 & \footnotesize 4.772/0 & \footnotesize \underline{87.185}/8& \footnotesize   /20 & \footnotesize 0.003/0& \footnotesize 0.002/0& \footnotesize  0.002/0\\
  \footnotesize 16& \footnotesize 2.236/0 & \footnotesize \underline{112.569}/11& \footnotesize   /20& \footnotesize  0.001/0& \footnotesize 0.003/0 & \footnotesize  0.000/0\\
  \footnotesize 17& \footnotesize 11.674/0& \footnotesize \underline{107.201}/9& \footnotesize  /20& \footnotesize 0.003/0& \footnotesize 0.003/0& \footnotesize  0.001/0\\
  \footnotesize 18& \footnotesize \underline{18.445}/1 & \footnotesize \underline{149.583}/16& \footnotesize  /20& \footnotesize 0.012/0& \footnotesize 0.003/0& \footnotesize 0.019/0\\
  \footnotesize 19 & \footnotesize  \underline{31.282}/1 & \footnotesize \underline{159.580}/17& \footnotesize  /20& \footnotesize 0.015/0& \footnotesize 0.011/0& \footnotesize 0.023/0\\
  \footnotesize 20& \footnotesize  \underline{50.973}/2& \footnotesize  /20& \footnotesize  /20& \footnotesize 0.028/0& \footnotesize 0.477/0& \footnotesize  0.211/0 \\
 \footnotesize 21&\footnotesize  \underline{89.654}/5& \footnotesize/20& \footnotesize  /20& \footnotesize 0.088/0 & \footnotesize 0.067/0& \footnotesize  \underline{11.665}/1 \\
  \footnotesize 22&\footnotesize  \underline{143.039}/10&\footnotesize /20& \footnotesize  /20& \footnotesize 0.106/0 & \footnotesize  \underline{14.925}/1& \footnotesize  \underline{10.043}/1\\
  \footnotesize 23&\footnotesize /20& \footnotesize /20& \footnotesize  /20   & \footnotesize 0.627/0& \footnotesize  {6.901}/0& \footnotesize  \underline{13.825}/1\\
  \footnotesize 24&\footnotesize /20& \footnotesize /20  & \footnotesize  /20 & \footnotesize 3.067/0& \footnotesize \underline{12.434}/1& \footnotesize  \underline{0.741}/0\\

  \footnotesize 25&\footnotesize /20& \footnotesize /20 & \footnotesize  /20  & \footnotesize {1.406}/0 & \footnotesize \underline{30.222}/3& \footnotesize  \underline{15.429}/1\\

  \hline
\end{tabular}
\caption{ The average execution time of the PW approach and the C-Sub approach}
\end{center}
\end{table}

Since in many cases,  the execution time might last very long, to make the test possible, when the time for computing $p(E^{pr})$ is over 3 minutes (180 seconds), the execution was stopped by setting a break in the program.  When the number of timeout is less than 20, the average time was recorded, and for each timeout, the time used for calculation is 180 seconds. For instance, when \#nodes = 25,  C-Sub\_3 [3] = 15.828 seconds. The detailed records of 20 times of execution are shown in Table 4. For instance, C-Sub\_3 [3] = (0.016 + 3.760 + 0.000 + 0.015 + 22.074 + 0.016 + 24.039 + 0.078 + 180 + 4.524 + 43.275 + 0.000 + 0.000 + 0.016 + 0.000 + 0.015 + 0.000 + 30.732 + 0.016 + 0.000) $\div 20 =15.429$.

\begin{table}[!tp]
\begin{center}
\begin{tabular}{|r|c|c|c||c|c|c|}
  \hline
  % after \\: \hline or \cline{col1-col2} \cline{col3-col4} ...
 \footnotesize No. &\multicolumn{3}{c}{\footnotesize {C-Sub\_3 [3:1]} (\#nodes = 25)}\vline&\multicolumn{3}{c}{\footnotesize {C-Sub\_3 [2:1]} (\#nodes = 25)} \vline \\
 \cline{2-7}
    & \footnotesize time (secs)  & \footnotesize {max $|B^\prime|$} & \footnotesize {avg. $|B^\prime|$}&   \footnotesize time (secs)  & \footnotesize {max $|B^\prime|$} & \footnotesize {avg. $|B^\prime|$}  \\
  \hline
  \hline
  \footnotesize 1&   \footnotesize 0.016&   \footnotesize 8 &   \footnotesize 4 &   \footnotesize \textbf{timeout}&   \footnotesize \textbf{15} &   \footnotesize \textbf{1}    \\

  \footnotesize    \footnotesize 2&   \footnotesize {3.760}&   \footnotesize {12} &   \footnotesize {6 } &   \footnotesize 0.640&   \footnotesize 15 &  \footnotesize  7\\

   \footnotesize 3&   \footnotesize 0.000&   \footnotesize 10 &   \footnotesize 5 &   \footnotesize 0.000&   \footnotesize 10 &  \footnotesize  5  \\

   \footnotesize 4&   \footnotesize 0.015&  \footnotesize  7 &   \footnotesize 3 &   \footnotesize 0.281&   \footnotesize 16 &   \footnotesize 8   \\

   \footnotesize \textbf{5}&   \footnotesize \textbf{22.074}&  \footnotesize  \textbf{11} &  \footnotesize  \textbf{5} &  \footnotesize  0.000&  \footnotesize  11 &  \footnotesize  5   \\

   \footnotesize 6&   \footnotesize 0.016&  \footnotesize  10&  \footnotesize  5 &  \footnotesize  4.339&  \footnotesize  13 &  \footnotesize  6   \\

  \footnotesize \textbf{ 7}&  \footnotesize  \textbf{24.039}&  \footnotesize  \textbf{11 }&  \footnotesize \textbf{ 5  }  &  \footnotesize  \textbf{57.424}&  \footnotesize  \textbf{14} &  \footnotesize  \textbf{7}\\

  \footnotesize  8&  \footnotesize  0.078&  \footnotesize  11&  \footnotesize  5  &  \footnotesize  0.078&  \footnotesize  14&  \footnotesize  7  \\

  \footnotesize \textbf{ 9}&  \footnotesize  \textbf{timeout }&   \footnotesize \textbf{13} &  \footnotesize \textbf{5}  &  \footnotesize  \textbf{timeout} &  \footnotesize  \textbf{16 }&  \footnotesize  \textbf{1}  \\

   \footnotesize 10&  \footnotesize  4.524&  \footnotesize  11 &  \footnotesize  5 &  \footnotesize  0.000&  \footnotesize 10&  \footnotesize  5   \\

  \footnotesize 11&  \footnotesize \textbf{43.275}&  \footnotesize  \textbf{14}&  \footnotesize  \textbf{7}  &   \footnotesize 0.031&  \footnotesize  13 &  \footnotesize  6 \\

   \footnotesize 12&   \footnotesize 0.000 &  \footnotesize  9&   \footnotesize 4  &  \footnotesize  0.047&  \footnotesize  14 &   \footnotesize 7  \\

   \footnotesize 13&   \footnotesize 0.000&   \footnotesize 9 &  \footnotesize  4  &  \footnotesize  0.125&  \footnotesize  15 &   \footnotesize 7  \\

  \footnotesize 14&  \footnotesize  0.016&   \footnotesize 9 &   \footnotesize 4  &   \footnotesize 0.047&  \footnotesize  14 &   \footnotesize 7  \\

   \footnotesize 15&  \footnotesize 0.000&  \footnotesize  11 &  \footnotesize  5 &   \footnotesize 1.310&  \footnotesize  13 &  \footnotesize  6   \\

   \footnotesize 16&  \footnotesize  0.015&   \footnotesize 11 &   \footnotesize 5  &   \footnotesize\textbf{ timeout}&  \footnotesize  \textbf{16 }&  \footnotesize  \textbf{3} \\

   \footnotesize 17&  \footnotesize 0.000&   \footnotesize 9 &   \footnotesize 4  &   \footnotesize 0.047&  \footnotesize  13 &  \footnotesize  6  \\

   \footnotesize \textbf{18}&   \footnotesize \textbf{30.732}&   \footnotesize \textbf{12} &  \footnotesize  \textbf{6}  &  \footnotesize  0.031&  \footnotesize  12 &   \footnotesize 6 \\

  \footnotesize 19&   \footnotesize {0.016}&  \footnotesize  {9} &  \footnotesize  {4}  &  \footnotesize  0.016&  \footnotesize  12 &  \footnotesize  6 \\

   \footnotesize 20&  \footnotesize  0.000&  \footnotesize  10 &  \footnotesize  5&   \footnotesize 0.062&  \footnotesize  11 &   \footnotesize 5   \\
 \hline
avg. &\footnotesize 15.429&  \footnotesize  10.35 &   \footnotesize  &   \footnotesize 30.222 &   \footnotesize 13.35 & \\
 \hline
\end{tabular}
\end{center}
\caption{The detailed records of the execution time of the C-Sub approach. In this table, ``max $|B^\prime|$'' and ``avg. $|B^\prime|$'' denote respectiely the maximal and average size of $B^\prime$.}
\end{table}

From Table 3, we found that the C-Sub approach greatly outperforms the PW approach. The computation time of the PW approach increases dramatically with the increase of the number of nodes and the density of edges. More specifically, when the number of nodes is given, PW\_3 [$i:1$] increases sharply with the increase of $i$. For instance, when \#nodes = 15, PW\_3 [$1:1$] = 4.772, PW\_3 [$2:1$] = 87.185 (with 8 timeouts), and PW\_3 [$3:1$] has no record of time (with 20 timeouts). Meanwhile, when the density of edges is given, PW\_3 [$i:1$] ($i = 1, 2, 3$) increases exponentially with the increase of \#nodes. On the contrary, with the increase of density (i.e., $i:1$), C-Sub\_3 [$i:1$] might not increase. And, with the increase of the number of nodes, PW\_3 [$i:1$] ($i = 1, 2, 3$) does not increase exponentially. The basic reason behind these phenomena is that according to the theoretical results obtained in Section 4, compared to the PW approach, the complexity of the C-Sub approach decreases from $|2^A|$ to $|2^I|$. In other words, the complexity of the C-Sub approach is manly determined by the size of $I$ (i.e., the maximal size of $B^\prime$). This is evidenced by the data shown in Table 4, in which the average value of maximal sizes of $B^\prime$ in 20 tests is 10.35 for C-Sub\_3 [$3:1$] and 13.35 for C-Sub\_3 [$2:1$], which matches very well to the average computation time of C-Sub\_3 [$3:1$] (15.429 seconds) and C-Sub\_3 [$2:1$] (30.222 seconds).
%More specifically, the computation time of the PW approach increases exponentially with the increase of the number of nodes, while the computation time of the C-Sub approach is much smaller. 
\begin{figure}[!hp]
\begin{center}
\begin{tabular}{ll}
%\vspace{-1.5cm}(a)& \\
\includegraphics [width=0.515\textwidth]{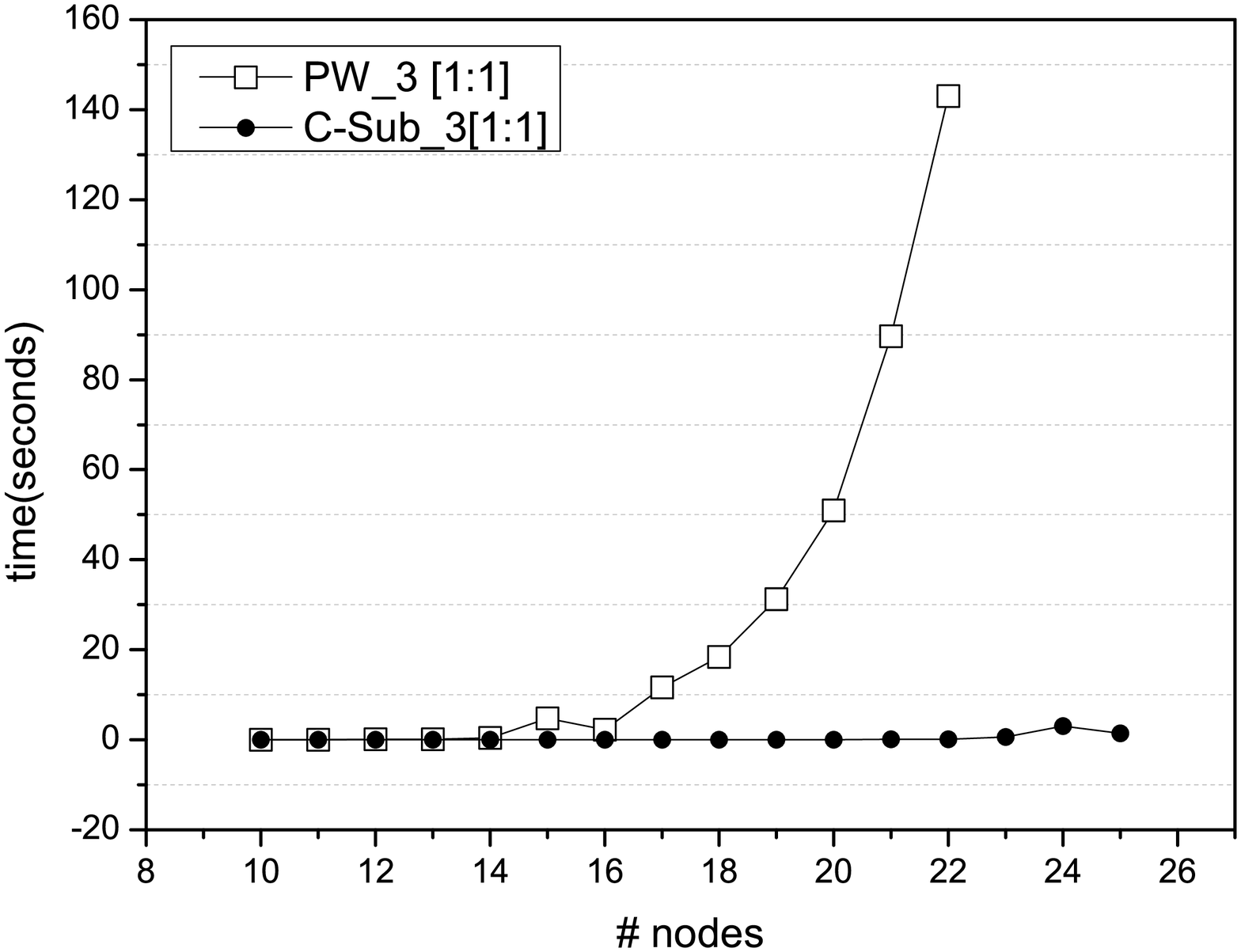}& \includegraphics [width=0.515\textwidth]{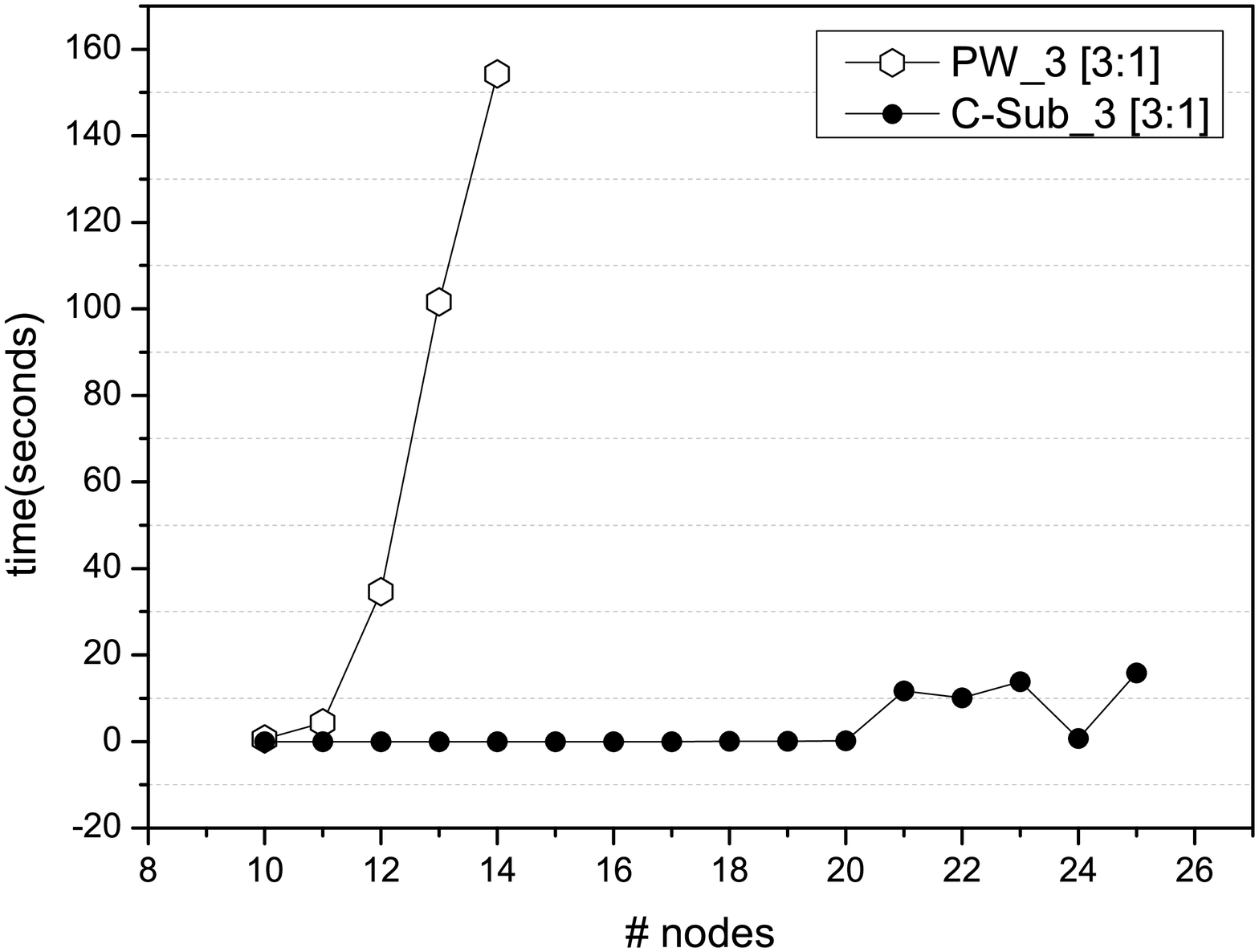} \\
\end{tabular}
\caption{Plots showing the execution time of the PW approach and the C-Sub approach}
\end{center}
\end{figure}

The second experiment is to further study how the increase of density of PrAGs affects the computation time of the two approaches. 

\begin{table}[!h]
\begin{center}
\begin{tabular}{|l|c|c|c|c|c|c|}
  \hline
  % after \\: \hline or \cline{col1-col2} \cline{col3-col4} ...
$i:1$ &  1:1  &2:1 & 3:1 & 4:1 & 5:1 & 6:1\\
 %   & \scriptsize (secs)  &\scriptsize & \scriptsize (secs) &\scriptsize  \\
  \hline
  \hline
\parbox{3.5cm}{PW\_3 [$i:1$]  (secs) \\ \#nodes = 10} & 0.026& 0.141 & 0.694 & 1.641 & 2.781 & 4.056   \\
\hline
\parbox{3.7cm}{C-Sub\_3[$i:1$]  (secs) \\   \#nodes = 20}& {0.255}& {0.367} & {0.090 } & 0.014 &0.003 & 0.001 \\
  \hline
\end{tabular}
\end{center}
\caption{Average execution time of the PW approach and the C-Sub approach w.r.t. the changing of density of edges}
\end{table}

\begin{figure}[!ht]
\begin{center}
\begin{tabular}{l}
\vspace{0cm}\includegraphics [width=0.65\textwidth]{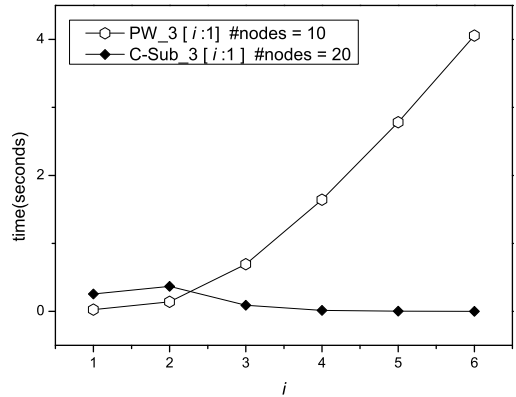}
\end{tabular}
\caption{Plot showing the average execution time of the PW approach and the C-Sub approach w.r.t. the changing of density of edges}
\end{center}
\end{figure}

As shown in Table 5 and Figure 7, the configuration for the PW approach is: the size of extension is 3, the number of nodes is 10, and the density of edges ranges from $1:1$ to $6:1$.  The configuration of the C-Sub approach is similar to that of the PW approach, the only difference is that the number of nodes is 20 for the C-Sub approach (in that when the number of nodes is less than 10, the average computation time of the PW approach is close to 0). The average execution time of the PW approach increases sharply with the increase of the density of edges, while the execution time of the C-Sub approach decreases with the increase of the density of edges.

The third experiment is study how the average execution time of the PW approach and the C-Sub approach changes with respect to the changing of the size of the extension.  In this experiment, the configuration for the two approaches is:  the size of extension is 3 and 5, the number of nodes  ranges from 10 to 25, and the density of edges is $2:1$.

\begin{table}[!hp]
\begin{center}
\begin{tabular}{|r|c|c||c|c|}
  \hline
  % after \\: \hline or \cline{col1-col2} \cline{col3-col4} ...
  \#nodes & \footnotesize PW\_3 [2:1] &\footnotesize PW\_5 [2:1]& \footnotesize C-Sub\_3 [2:1]  &\footnotesize C-Sub\_5 [2:1]  \\
    & \scriptsize (secs/timeout)  &\scriptsize (secs/timeout) & \scriptsize (secs/timeout) &\scriptsize(secs/timeout)  \\
  \hline
  \hline
\footnotesize 10 & \footnotesize  0.120/0& \footnotesize  0.018/0 & \footnotesize 0.000/0 & \footnotesize 0.000/0     \\

  \footnotesize 11& \footnotesize 0.648/0& \footnotesize  0.042/0 & \footnotesize 0.000/0   & \footnotesize 0.000/0  \\

  \footnotesize 12& \footnotesize 3.141/0& \footnotesize  0.125/0  & \footnotesize 0.001/0  & \footnotesize 0.000/0  \\
  \footnotesize 13& \footnotesize 6.732/0& \footnotesize  0.368/0  & \footnotesize 0.005/0   & \footnotesize 0.000/0  \\

  \footnotesize 14& \footnotesize \underline{23.879}/1& \footnotesize   4.178/0 & \footnotesize 0.000/0  & \footnotesize 0.000/0   \\

  \footnotesize 15  & \footnotesize \underline{87.185}/8& \footnotesize   3.723/0   & \footnotesize 0.002/0   & \footnotesize 0.000/0   \\
  \footnotesize 16& \footnotesize \underline{112.569}/11& \footnotesize   \underline{43.981}/2 &\footnotesize 0.003/0 & \footnotesize  0.002/0  \\
  \footnotesize 17& \footnotesize \underline{107.201}/9& \footnotesize     \underline{70.016}/5 & \footnotesize 0.003/0& \footnotesize  0.000/0 \\
  \footnotesize 18& \footnotesize \underline{149.583}/16& \footnotesize   \underline{93.756}/8 & \footnotesize 0.003/0& \footnotesize 0.000/0 \\
  \footnotesize 19& \footnotesize \underline{159.580}/17& \footnotesize   \underline{108.857} /10&  \footnotesize 0.011/0& \footnotesize 0.003/0 \\
  \footnotesize 20&/20 & \footnotesize  \underline{151.422}/15 & \footnotesize 0.477/0& \footnotesize  0.000/0  \\
 \footnotesize 21&/20 & \footnotesize  \underline{155.704}/16 & \footnotesize {0.067}/0& \footnotesize  0.000/0  \\
  \footnotesize 22&/20 & \footnotesize \underline{171.270}/16   & \footnotesize \underline{14.925}/1& \footnotesize  0.003/0 \\
  \footnotesize 23&/20 & /20  & \footnotesize  {6.091}/0& \footnotesize  0.001/0 \\
  \footnotesize 24&/20 & /20  & \footnotesize \underline{12.434}/1& \footnotesize  0.222/0 \\

  \footnotesize 25&/20 & /20  & \footnotesize \underline{30.222}/3& \footnotesize  0.008/0 \\
% \footnotesize 26&/20 & /20  & \footnotesize \underline{ }/3& \footnotesize  2.857/0 \\
% \footnotesize 27&/20 & /20  & \footnotesize \underline{ }/3& \footnotesize  1.852/0 \\
% \footnotesize 28&/20 & /20  & \footnotesize \underline{ }/3& \footnotesize  1.852/0 \\
% \footnotesize 29&/20 & /20  & \footnotesize \underline{ }/3& \footnotesize  1.852/0 \\
% \footnotesize 30&/20 & /20  & \footnotesize \underline{ }/3& \footnotesize  1.852/0 \\
 \hline
\end{tabular}
\end{center}
\caption{The average execution time of the PW approach and the C-Sub approach with respect to different sizes of the extension}
\end{table}

\begin{figure}[!hp]
\begin{center}
\begin{tabular}{l}
\vspace{0cm}\includegraphics [width=0.65\textwidth]{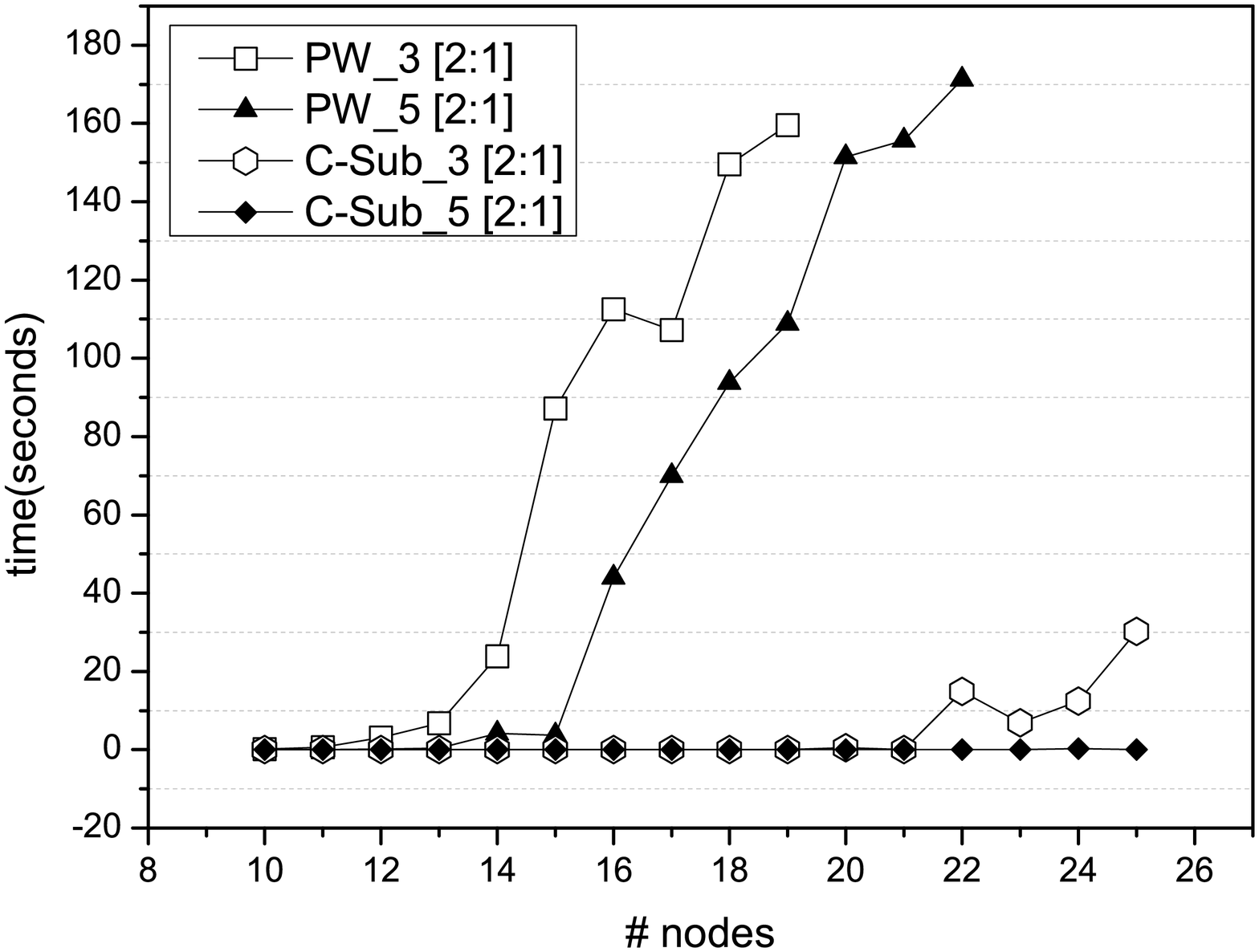}
\end{tabular}
\caption{Plots showing the average execution time of the PW approach and the C-Sub approach with respect to different sizes of the extension}
\end{center}
\end{figure}

According to the results shown Figure 8 (corresponding to the data in Table 6),  the shapes of the graphs PW\_3 [2:1] and PW\_5 [2:1] are almost the same, which means that the average execution time of the PW approach does not fundamentally decrease  with the changing of the size of the extension. On the contrary, the average execution time of the C-Sub approach decreases to a great extent. The basic reason behind this phenomenon is that: since the complexity of the C-Sub approach is manly determined by the size of $I = A\setminus (E\cup E_G^+\cup E_G^-)$, with the increase of the size the extension $E$, the size of $I$ become smaller.

\section{Computational properties}
Based on the theory and the experimental results introduced in Sections 4 and 5, in this section, we briefly analyze some computational properties of our C-Sub approach (or briefly ``our approach'').
 
On the one hand, according to \textit{classical complexity theory}, by using the C-Sub approach, it holds that computing $p(E^{ad})$ and $p(E^{st})$ is polynomial time tractable, while under complete, preferred and grounded semantics, problems of determining $p(E^{co})$, $p(E^{pr})$ and $p(E^{gr})$ are still intractable. This is because: under complete and preferred semantics, we need to consider $|2^I|$ cases, while under grounded semantics, $|2^I|+|2^{E_G^+\cap E_G^-}|$ cases. However, theoretically,  {the C-Sub approach is more efficient,} in that:

%\paragraph{Computational efficiency in general}
%According to Propositions  \ref{th-ad-sub} and \ref{th-st-sub}, it holds that computing $p(E^{ad})$ and $p(E^{st})$ is polynomial time tractable, similar to the results presented in 
 %\cite{Fazzinga:ijcai13}. However, according to Propositions 
% \ref{th-co-sub},
 %\ref{th-pr-sub} and
%\ref{th-gr-sub},
%computing $p(E^{co})$, $p(E^{pr})$ and $p(E^{gr})$ is still intractable, in that: under complete and preferred semantics, we need to consider $|2^I|$ cases, while under grounded semantics, $|2^I|+|2^{E_G^+\cap E_G^-}|$ cases. {So,  from the perspective of classical complexity theory, while it might be impossible to change the worst cases complexity of computing $p(E^{co})$, $p(E^{pr})$ and $p(E^{gr})$. However, our new approach is more efficient, in that }
\begin{enumerate} 
\item most subgraphs are not necessary to be constructed and computed, i.e., the maximal number of subgraphs decreases from $|2^A|$ to $|2^I|$ (or $|2^I|+|2^{E_G^+\cap E_G^-}|$), where $I = A\setminus (E\cup E_G^+\cup E_G^-)$ is the set of remaining arguments; and
\item the size of the maximal subgraph decreases from $|A|$ to $|I|$ (or $|E_G^+\cap E_G^-|$).
\end{enumerate}

The efficiency of the C-Sub approach is evidenced by the empirical results. This approach not only dramatically decreases the time for computing $p(E^\sigma)$, but also has an attractive property, which is contrary to that of existing approaches:  the denser the edges of a PrAG are or the bigger the size of a given extension $E$ is, the more efficient our approach computes $p(E^\sigma)$.

% First, with the increase of the number of nodes, the computation time of the PW approach increases dramatically, while that of the C-Sub approach might not increase (as illustrated in Figure 6). Second, with the increase of the density of edges, the average execution time of the PW approach increases sharply, while that of the C-Sub approach decreases (as illustrated in Figure 7). Third, when the size of the extension becomes bigger, the  average execution time of the PW approach does not fundamentally decrease, while that of the C-Sub approach decreases to a great extent. 

%\paragraph{Fixed parameter tractability of new algorithms}
On the other hand, under complete and preferred semantics, since the complexity of the C-Sub approach is {mainly determined} by the size of the remaining arguments, which is usually much smaller than that of the whole set of arguments in a PrAG,   according to \textit{parameterized complexity theory},  {the problems of determining $p(E^{co})$ and $p(E^{pr})$ in the C-Sub approach are fixed-parameter tractable with respect to the size of remaining arguments. }Details are as follows.

In terms of parameterized complexity theory, \textit{the complexity of a problem is not only measured in terms of the input size, but also in terms of a parameter. The theory's focus is on situations where the parameter can be assumed to be small} \cite{Flum}. Let $Q$ be a classical problem, and $k$ be a parameter of the problem. A parameterized problem is denoted as $(Q, k)$. When binding $k$ to a fixed constant, in many cases, an intractable problem $Q$ can be made tractable. This  property is called \textit{fixed-parameter tractability} (FPT). More specifically, let  $n$ be the input size of a problem, and $f$ be a computable function that depends on a parameter $k$ of the problem. The complexity class FTP consists of problems that can be computed in $f(k)\cdot n^{\mathcal{O}(1)}$. 

In the setting of this paper, let $k = |I|$. Typically, $k$ is much smaller than the size of the PrAG (i.e., $|A|$). Under complete and preferred semantics, the complexity of determining the probability  that a set of arguments is an extension is dominated by the size of the set of remaining arguments (i.e., $|I|$). Formally, we have the following proposition.

%The denser the edges of a PrAG are or the bigger the size of a given extension $E$ is, the more efficient our approach computes $p(E^\sigma)$. 

%The inputs of our algorithms consist of two parts, which typically have vastly different sizes. The size of all arguments in $A$ can be very large, whereas the remaining set of arguments $I$ tend to be fairly small. As a parameter, we choose the size of $I$. 

\begin{proposition} \label{pro-ftp}
Let $G^p = (A,R, p)$ be a PrAG, and $E\subseteq A$ be a conflict-free set of arguments. Let $k = |I|$ where $I$ is the set of remaining arguments of $G$ with respect to $E$. Let $Pr_G(E^{co})$ and $ Pr_G(E^{pr})$ be the problems of determining the probability  $p(E^{co})$ and $p(E^{pr})$ respectively. It holds that $(Pr_G(E^{co}), k)$ and $(Pr_G(E^{pr}), k)$  belong to FTP. 
\end{proposition}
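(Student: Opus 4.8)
The plan is to establish fixed-parameter tractability directly from the closed-form expressions obtained in Propositions \ref{th-co-sub} and \ref{th-pr-sub}, which factor the target probabilities as $p(E^{co}) = P_E \times P_{I\_CO}$ and $p(E^{pr}) = P_E \times P_{I\_PR}$. The key observation is that the factor $P_E$ depends only on $E$ and on $E_G^-\setminus E_G^+$, and hence is evaluable in time $n^{\mathcal{O}(1)}$, whereas the remaining factor $P_{I\_CO}$ (resp. $P_{I\_PR}$) is a sum ranging over the $2^{|I|}=2^k$ subsets $B^\prime$ of the remaining arguments $I$. I would show that the entire computation fits the bound $f(k)\cdot n^{\mathcal{O}(1)}$, which is exactly the defining condition for membership in FTP.

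First I would isolate the part that is polynomial in the input size. Computing the neighbourhoods $E_G^-$ and $E_G^+$ requires a single scan of the attack relation $R$, and from these one obtains $I = A\setminus(E\cup E_G^+\cup E_G^-)$; all of this is $n^{\mathcal{O}(1)}$. The factor $P_E = \Pi_{\alpha\in E}p(\alpha)\times\Pi_{\beta\in E_G^-\setminus E_G^+}p(\bar{\beta})$ is a product of at most $|A|$ rational factors and is therefore also computable in $n^{\mathcal{O}(1)}$. Finally I would extract the induced subgraph $G_{\downarrow I}$, which has exactly $k$ nodes; this precomputation (again $n^{\mathcal{O}(1)}$) ensures that all subsequent work on subsets of $I$ is carried out entirely within a graph whose size is bounded by $k$, with no further dependence on $n$.

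Next I would bound the parameter-dependent part. The sum defining $P_{I\_CO}$ enumerates the $2^k$ subsets $B^\prime\subseteq I$; for each one the condition $\forall\alpha\in B^\prime:\alpha_G^-\cap B^\prime\neq\emptyset$ is tested inside $G_{\downarrow I}$ in $k^{\mathcal{O}(1)}$ time, and the associated product $\Pi_{\gamma\in B^\prime}p(\gamma)\times\Pi_{\xi\in I\setminus B^\prime}p(\bar{\xi})$ involves at most $k$ factors. Hence $P_{I\_CO}$ is obtained in $2^k\cdot k^{\mathcal{O}(1)}$ arithmetic operations, each of cost $n^{\mathcal{O}(1)}$. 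For $P_{I\_PR}$ the same enumeration applies, but each qualifying $B^\prime$ must additionally satisfy $\mathcal{E}_{ad}(G_{\downarrow B^\prime}) = \{\emptyset\}$, i.e.\ $G_{\downarrow B^\prime}$ has no non-empty admissible set. Since $G_{\downarrow B^\prime}$ has at most $k$ nodes, this can be decided by brute force — enumerating the at most $2^k$ subsets of $B^\prime$ and testing each for admissibility — in $2^k\cdot k^{\mathcal{O}(1)}$ time, so $P_{I\_PR}$ is computed in $2^k\cdot 2^k\cdot k^{\mathcal{O}(1)} = 4^k\cdot k^{\mathcal{O}(1)}$ operations.

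The hard part is precisely this last verification in the preferred case: deciding whether $G_{\downarrow B^\prime}$ admits only the empty admissible extension is coNP-hard in the size of $B^\prime$ in general, so there is no hope of removing the exponential factor altogether. The essential point to make, however, is that this exponential cost is confined to the parameter $k = |I|$ and is decoupled from $n$; the $n$-dependence enters only through the polynomial-time preprocessing and the bit-cost of rational arithmetic. Combining the two parts yields total running times of the form $2^k\cdot n^{\mathcal{O}(1)}$ for $Pr_G(E^{co})$ and $4^k\cdot n^{\mathcal{O}(1)}$ for $Pr_G(E^{pr})$, both matching $f(k)\cdot n^{\mathcal{O}(1)}$. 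Therefore $(Pr_G(E^{co}), k)$ and $(Pr_G(E^{pr}), k)$ belong to FTP, as claimed.
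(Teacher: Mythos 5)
Your proposal is correct and follows essentially the same route as the paper: both rely on the factorizations of Propositions \ref{th-co-sub} and \ref{th-pr-sub} to split the computation into a part that is polynomial in $n$ (computing $E_G^-$, $E_G^+$, $I$ and $P_E$) and a sum over the $2^k$ subsets $B^\prime$ of $I$, where all per-subset verification is confined to subgraphs of size at most $k$, yielding a bound of the form $f(k)\cdot n^{\mathcal{O}(1)}$. The only difference is presentational: the paper obtains the parameter-dependent factor abstractly as $2^k\cdot g(k)$ by analyzing the labelling-based verification procedure of Alg.~2, whereas you instantiate the verification of $\mathcal{E}_{ad}(G_{\downarrow B^\prime})=\{\emptyset\}$ by explicit brute force, giving the concrete bounds $2^k\cdot k^{\mathcal{O}(1)}$ (complete) and $4^k\cdot k^{\mathcal{O}(1)}$ (preferred).
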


\begin{proof}
First, under preferred semantics, the algorithm (Alg. 2) consists of the following two parts. The first part (Lines 2 - 10; Lines 18 - 33) is the difficult core of the algorithm. In this part,  there are $2^k$ calls and in each call, the procedure $\mathit{verify\_nonempty\_adm}(\mathcal{L})$ may be intractable. However, since the size of the subgraphs induced by $B^\prime$ is less than $k$, the time for executing $\mathit{verify\_nonempty\_adm}(\mathcal{L})$ is dependent on $k$, denoted as $g(k)$. The second part (Line 1, Lines 11 - 17) is tractable. The execution time of this part can be bounded by $n^{\mathcal{O}(1)}$ where $n=|A|$. So, the overall execution time can be bounded by $2^k\cdot g(k) + n^{\mathcal{O}(1)}= f(k)+ n^{\mathcal{O}(1)}\leq f(k)\cdot n^{\mathcal{O}(1)}$ where $f(k) = 2^k\cdot g(k) $.  Hence, $(Pr_G(E^{pr}), k)$ belongs to FTP. 

Second, in terms of Propositions \ref{th-co-sub} and \ref{th-pr-sub}, the algorithm under complete semantic (not presented in the present paper) is similar to the one under preferred semantics. The difference is that under complete semantics, $\mathit{verify\_nonempty\_adm}(\mathcal{L})$ is not executed. So, it holds that $(Pr_G(E^{co}), k)$ belongs to FTP. 
\end{proof}

Note that under grounded semantics,  since usually it may be not the case that $\max\{|E^+_G\cap E^-_G|, |I|\}$ is much smaller than $|A|$,  Proposition \ref{pro-ftp} can not be applied to grounded semantics. 

\section{Related work}
%
%\subsection{Related work}
In this paper, we have proposed a new approach (the C-Sub approach) to formulate semantics of probabilistic argumentation, and analyzed its computational properties on the basis of an empirical study. To the best of our knowledge, our approach is the first attempt to systematically study how to compute the semantics of probabilistic argumentation without (or with less) construction and computation of subgraphs not only under admissible and stable semantics, but also under other semantics including complete, grounded and preferred. In this section, we give a discussion about some related work.

\paragraph{Independence assumption of arguments} 
In this paper, we assume the independence of arguments appearing in a graph. A theoretical foundation for this assumption is originally formulated by Anthony Hunter in a series of his work \cite{Hunter:comma, Hunter:IJAR, Hunter:IJAR2013} from the justification perspective on the probability of an argument:%For a probabilistic argument graph in which only arguments are assigned with probabilities, the basic idea of  is as follows.

\begin{quote}
\textit{For an argument $\alpha$ in a graph $G$, with a probability assignment $p$, $p(\alpha)$ is treated as the probability that $\alpha$ is a justified point (i.e. each is a self-contained, internally valid, contribution) and therefore should appear in the graph, and $1-p(\alpha)$ is the probability that $\alpha$ is not a justified point and so should not appear in the graph. This means the probabilities of the arguments being justified are independent (i.e., knowing that one argument is a justified point does not affect the probability that another is a justified point). }
\end{quote}

The justification perspective can be further illustrated by the following example that was originally presented in \cite{Hunter:comma}.  Given two arguments $\alpha_1 = (\{p\}, p)$ and $\alpha_2 = (\{\neg p\}, \neg p)$ constructing from a knowledge base containing just two formulae $\{p, \neg p\}$, $\alpha_1$ attacks $\alpha_2$ and vice versa. In terms of classical logic, it is not possible that both arguments are true, but each of them is a justified point. So even though logically $\alpha_1$ and $\alpha_2$ are not independent (in the sense that if one is known to be true, then the other is known to be false), they are independent as justified points.

In existing literature, some models and algorithms depend on an independence assumption of  arguments and/or attacks \cite{Li:TAFA, Fazzinga:ijcai13,  Hunter:comma, Hunter:IJAR, Fazzinga:sum13}, while others do not \cite{Thimm2012, Rienstra:AT, Hunter:IJAR2013, Davide2013}. For the former, probabilities are assigned to arguments and/or attacks, and the probability distribution over subgraphs can be generated based on the independence assumption. For the latter, users directly specify the unique probability distribution over the set of subgraphs.  %Since the probability is not assigned to individual arguments, the independence assumption is not necessary. 
{There are pros and cons about whether the independence assumption is used or not. 
On the one hand,} by using the independence assumption, it can be more efficient to use the probability assignment to arguments and/or attacks and then generate the probability distribution over subgraphs. But, as discussed in \cite{Hunter:IJAR2013}, whilst the independence assumption is useful in some situations, it is not always appropriate. {On the other hand, when the independence assumption is avoided, the dependence relation between arguments can be properly represented. However, in this way, }  users have to specify probability distribution over the set of subgraphs, whose number is exponential with respect to the arguments and/or attacks. And, in many cases users may not be aware of the probability value that should be assigned to a possible world (subgraph) which may represent a complex scenario  \cite{Fazzinga:TCL}.  In this sense, in many situations, the models without independence assumption might not be applicable.

So, with regard to whether an independence assumption is used or not, there are both advantages and disadvantages.  %How to balance the problem of efficiency and that of expressivity is sti. % there are different models for combining argumentation and probability theory. Among them, some assign probabilities to arguments \cite{Dung:comma,Rienstra:AT, Hunter:comma}, while others assign probabilities to attacks \cite{Hunter:IJAR2013} or both arguments and attacks \cite{Li:TAFA}. Meanwhile, some rely on an independence assumption of arguments \cite{Li:TAFA, Fazzinga:ijcai13,  Hunter:comma, Hunter:IJAR, Fazzinga:sum13}, while others do not \cite{Thimm2012, Rienstra:AT, Hunter:IJAR2013}. 

\paragraph{Complexity analysis and algorithms for probabilistics argumentation}
Computational issues of probabilistic argumentation have been deeply investigated in recent years. 

On the one hand, Fazzinga et al studied  the complexity problem of determining the probability that a set of arguments is an extension under a given semantics \cite{Fazzinga:ijcai13}. The results show that under admissible and stable semantics,  the problem belongs to $\mathit{PTIME}$, while under complete, grounded, preferred and ideal semantics, the problem is $\mathit{FP}^{\sharp P}$. However,  the existing work only studied the complexity problems from the perspective of classical complexity theory. {The corresponding problems from the perspective of parameterized complexity theory have not been explored. }  

On the other hand, since using a brute-force algorithm to evaluate the probability of a set of arguments being an extension is computationally prohibitive, in existing work, an approximate approach (called the Monte-Carlo simulation approach) has been proposed to cope with this problem \cite{Li:TAFA}, which was significantly improved in \cite{Fazzinga:sum13}  by reducing the sample space of computation. Corresponding to these approximate approaches, however, little attention has been paid to the development of exact approaches. Our approach presented in this paper is a first step in this direction.

\paragraph{Efficient algorithms based on the structural properties of graphs}
{Since an abstract argumentation framework (argument graph) can be viewed as a digraph, applying various properties of existing graph theory to argumentation is not new. For instance, when an argument graph satisfies some properties  (acyclic, symmetric, bipartite,  etc), there exist tractable algorithms to compute its semantics \cite{Dunne:AIJ2007,Sylvie:ECSQARU}; when an argument graph has bounded tree-width, there exist fixed-parameter algorithms \cite{Wolfgang:AIJ2012B}; when decomposing an argument graph based on the notion of strongly connected components, the efficiency of computation can be significantly improved \cite{Liao:amai}; {by mapping  the notion of a kernel, a semikernel and a maximal semikernel in a directed graph  \cite{Galeana} respectively to the notion of a stable set, an admissible set and a preferred extension in an argumentation framework  \cite{Coste-Marquis2005, Dyrkolbotn2014}, in terms of \cite{Dimopoulos1996}, we may infer that the complexity results and algorithms related to kernels and semikernels can be applied to formal argumentation}. Beside the structural properties that have been applied to argumentation, {notions of kernels and semikernels have also been connected to logic programs and default theories.} According to \cite{Dimopoulos1996}, every normal logic program can be transformed into a graph.  The stable, partial stable and well-founded semantics correspond to  kernels, semikernels and the initial acyclic part, respectively. Meanwhile, according to \cite{Walicki}, it is an equivalence relation between the problem of the existence of kernels in digraphs and satisfiability of propositional theories (SAT). Thanks to this relation, algorithms for computing kernels can be applied to computing the semantics of logic programs. 

It is worth to note that although structural properties of digraphs have been exploited in Dung's abstract argumentation \cite{Dung:AIJ}, we have not found solutions to apply these properties to the efficient computation of the semantics of probabilistic argumentation. {So, in this paper, based on some basic structural properties of digraphs and formal argumentation \cite{Galeana, Dung:AIJ}, we have defined properties that can be used to characterize subgraphs of a PrAG. These properties are established in the setting of probabilistic argumentation where the appearance of arguments is related to a given extension $E$, and on the basis of the original definition of extensions under different argumentation semantics \cite{Dung:AIJ}. Despite of their simplicity, these properties lay a concrete foundation to define a new methodology to formulate and compute the semantics of probabilistic argumentation.}}

%By comparison, this paper treats with probablistic argument graphs, and has little relation to the above-mentioned structural properties from a graph-theoretic perspective. Instead, the properties for characterizing subgraphs have a closer relation to the original definition of extensions under different argumentation semantics in \cite{Dung:AIJ}. For example, given a conflict-free set $E$ of arguments, $E$ is admissible if and only if ``each argument in $E$ is acceptable with respect to $E$''.  This condition corresponds to Prop2: ``all arguments in $E_G^-\setminus E_G^+$ do not appear in the subgraph''. Although these properties are rather obvious, to the best of our knowledge, they have not 

%how to exploit some specific properties of digraphs in formulating the semantics of probabilistic argument graphs is still a new problem.  In this paper, structural properties are used to characterize the subgraphs that have a given extension.  They are defined on the basis of the original definition of extensions under different argumentation semantics in \cite{Dung:AIJ}. For example, given a conflict-free set $E$ of arguments, $E$ is admissible if and only if ``each argument in $E$ is acceptable with respect to $E$''.  This condition corresponds to Prop2: ``all arguments in $E_G^-\setminus E_G^+$ do not appear in the subgraph''. So, from the perspective of graph theory, these properties
\paragraph{Kernelization and parameterized algorithms}
The approach and results presented in this paper have a close relation to some existing work on kernelization and parameterized algorithms, which have been extensively studied in the past two decades.  Kernelization is a systematic approach to study polynomial-time preprocessing algorithms, such that the ``easy parts'' of a problem instance  can be solved efficiently, and the problem instance is reduced to its computationally difficult ``core'' structure (the problem kernel of the instance) \cite{Lokshtanov}. If the size of the kernel can be effectively bounded in terms of a fixed-parameter alone, then the problem is \textit{fixed-parameter tractable}  (FPT) \cite{Flum}. In recent years, fixed-parameter algorithms were developed in the setting of Dung's abstract argumentation, by exploiting some important parameters for graph problems, such as the \textit{tree-width} \cite{Dunne:AIJ2007, Wolfgang:AIJ2012B} and the \textit{clique-width} \cite{Wolfgang:COMMA, Wolfgang:AIJ2012} of a graph. 

The C-Sub approach presented in this paper can be understood as a kind of kernelization. The novelty of this approach lies in the fact that new properties are defined to characterize the subgraphs of a PrAG with respect to a given extension.

%n \cite{Fazzinga:sum13,Fazzinga:ijcai13}, it has been made clear that for some (i.e., admissible and stable), the two problems are both tractable.  As far as the admissible and stable semantics are concerned, their results show that the exact value of $Pr_G^{sem}(S)$ can be determined in polynomial time, without enumerating the possible worlds. For the intractable cases, existing work relies solely on the approximate approach (Monte-Carlo simulation approach).  In Fazzinga et al. [2013], they devised an optimized Monte-Carlo simulation approach that is able to estimate $Pr_G^{sem}(S)$, with $sem\in \{co, gr, pr\}$, using much fewer samples than the original approach proposed in  Li et al. [2011]. Specifically, the proposed approach exploits the tractability results of \cite{Fazzinga:sum13}. 

\section{Conclusions and future work}
Probabilistic argumentation is an emerging direction in the area of formal argumentation.  In this paper, we have studied the formulation and computation of semantics of probabilistic argumentation. The main contributions of this paper are two-fold. 

On the one hand, conceptually, we define specific properties to characterize the subgraphs of a PrAG with respect to a given extension, such that the probability of a set of arguments $E$ being an extension can be defined in terms of these properties, without (or with less) construction of subgraphs. {The theoretical results in this paper show that under admissible and stable semantics,} computing a set of arguments being an extension of a PrAG is polynomial time tractable; {under complete and preferred semantics, the problems of determining $p(E^{co})$ and $p(E^{pr})$ in our C-Sub approach are fixed-parameter tractable with respect to the size of remaining arguments.}

%obtained without constructing subgraphs, while under some other semantics (complete, preferred and grounded), only much fewer subgraphs are constructed and computed. 
On the other hand, computationally, we take preferred semantics as an example, and develop algorithms to evaluate the efficiency of our approach. The empirical results show that  our approach not only dramatically decreases the time for computing the semantics of probabilistic argumentation, but also has an attractive property, which is contrary to that of existing approaches:  the denser the edges of a PrAG are or the bigger the size of a set of arguments $E$ is, the more efficient our approach computes the probability of $E$ being an extension of the PrAG.

Future work is as follows. First, in this paper we deal with the probabilistic argument graphs (PrAGs) in which probabilities are assigned to arguments. However, when probabilities are assigned to attacks or to both arguments and attacks, the formalisms and algorithms corresponding to the ones in this paper are expected to be different. In \cite{Hunter:IJAR2013}, only attacks are assigned with probabilities. So, it would be interesting to combine the theory presented in \cite{Hunter:IJAR2013} with the approach presented in this paper. Meanwhile, one may consider to extend our approach to the cases where both arguments and attacks are assigned with probabilities, similar to the work presented in \cite{Fazzinga:sum13}. %So, it is worth to investigate the corresponding research problems oriented to PrAGs in which probabilities are assigned to attacks, or both arguments and attacks.  
 Second, an independence assumption of arguments is used in the paper. Although it is useful in some situations, but not always appropriate. So,  a further step is to develop the corresponding approaches without this assumption. Third, as mentioned above, under grounded semantics, we have not obtained a conclusion that our approach is fixed-parameter tractable. Further analysis about this issue is needed. Fourth, in the empirical study,  we have only considered preferred semantics. The algorithms and experiments under other semantics (especially grounded semantics) are also important.

\section*{Acknowledgment}
We are grateful to the reviewers of this paper for their constructive and insightful comments. The research reported in this paper was partially supported by the National Research Fund Luxembourg (FNR) and Zhejiang Provincial Natural Science Foundation of China (No. LY14F030014). 

\bibliographystyle{splncs}
%\bibliography{liao}

\end{document}